\documentclass[pdflatex,sn-mathphys-num]{sn-jnl}

\usepackage{graphicx}
\usepackage{multirow}
\usepackage{amsmath,amssymb,amsfonts}
\usepackage{amsthm}
\usepackage[title]{appendix}
\usepackage{xcolor}
\usepackage{textcomp}
\usepackage{manyfoot}
\usepackage{booktabs}
\usepackage{algorithm}
\usepackage{algorithmicx}
\usepackage{algpseudocode}
\usepackage{listings}
\usepackage{colortbl}
\usepackage{makecell}
\usepackage{bm}
\usepackage{pifont}  
\usepackage{longtable}
\usepackage{booktabs}

\definecolor{bestcol}{RGB}{230,245,230}

\theoremstyle{thmstyleone}
\newtheorem{theorem}{Theorem}

\newtheorem{corollary}[theorem]{Corollary}

\theoremstyle{thmstyletwo}

\theoremstyle{thmstylethree}

\newcommand{\Vtr}{V_{\mathrm{tr}}}
\newcommand{\Etr}{E_{\mathrm{tr}}}
\newcommand{\Nc}{\mathcal{N}}
\newcommand{\BigO}{\mathcal{O}}
\newcommand{\cmark}{\ding{51}} 
\newcommand{\xmark}{\ding{55}} 

\begin{document}

\title[HERALD]{HERALD: High-Fidelity Exemplar Retrieval with Adaptive
Landmark Distillation for Heterophily-Aware Graph Condensation}

\author*[1]{\fnm{Sujan} \sur{Chakraborty}}\email{sujan24@iisertvm.ac.in}
\equalcont{These authors contributed equally to this work.}

\author[1]{\fnm{Priyanka} \sur{Saha}}\email{sahapriyanka154@gmail.com}
\equalcont{These authors contributed equally to this work.}

\author[1]{\fnm{Saptarshi} \sur{Bej}}\email{sbej7042@iisertvm.ac.in}

\affil*[1]{%
  \orgdiv{School of Data Science},
  \orgname{Indian Institute of Science Education and Research Thiruvananthapuram},
  \orgaddress{\city{Thiruvananthapuram}, \postcode{695551}, \state{Kerala}, \country{India}}
}

\abstract{%
Graph condensation aims to produce a small surrogate graph that preserves
the downstream node-classification performance of a much larger original graph.
Existing methods rely on Weisfeiler--Lehman neighbourhood aggregation or
gradient-based distribution matching, both of which assume that adjacent nodes
share the same label, an assumption that breaks down under heterophily.
We propose \textbf{HERALD} (High-fidelity Exemplar Retrieval with Adaptive
Landmark Distillation), a gradient-free graph condensation framework that
adapts the node scoring and feature selection in the condensation pipeline to the graph's measured
heterophily.
HERALD selects features via a joint Fisher-discriminability and
activation-density criterion that down-weights aggregated representations on
heterophilic graphs, and scores nodes by a weighted combination of prototype
representativeness, decision-boundary proximity, and Local Intrinsic
Dimensionality (LID), where the weights are driven by a smooth sigmoid
function of the heterophily ratio.
Nodes are then assembled into a condensed subgraph through score-ordered
BFS expansion, Personalised PageRank pruning, and class rebalancing,
all at an identical storage budget to BONSAI, enabling direct comparison.
Experiments on eight benchmark datasets spanning homophilic and heterophilic
settings show that HERALD matches or outperforms state-of-the-art
condensers on heterophilic graphs and remains competitive on homophilic ones
across four GNN architectures.
}

\keywords{graph condensation, heterophily, coreset selection,
          node classification, graph neural networks}

\maketitle

\section{Introduction}
\label{sec:intro}

Graph neural networks (GNNs) have become the standard tool for learning on
relational data, powering applications from citation analysis to
recommendation and traffic forecasting~\citep{kipf2017gcn,velickovic2018gat}.
As graphs used in practice have grown from thousands to hundreds of millions
of nodes~\citep{hu2020ogb}, training GNNs repeatedly, for hyperparameter
search, neural architecture search, or continual learning, has become a
computational bottleneck. \emph{Graph condensation} (also called graph
distillation) addresses this bottleneck by synthesizing a small graph
$G_c$ from a large training graph $G$ such that a GNN
trained on $G_c$ generalizes to $G$'s test distribution
almost as well as a GNN trained on $G$ itself, at a fraction of the
storage and compute cost~\citep{jin2022gcond}.

A large body of recent work has approached this problem through
\emph{optimization-based} condensation: the synthetic graph's features and
structure are treated as learnable parameters and updated so that a GNN
trained on them matches some surrogate signal computed on the real
graph: gradients~\citep{jin2022gcond,jin2022doscond}, training
trajectories~\citep{zheng2023sfgc,zhang2024geom}, distributions of receptive
fields~\citep{liu2022gcdm}, or spectral/self-expressive structure
~\citep{liu2024gdem,liu2024gcsr}. These methods have achieved impressive
compression ratios, condensing Reddit to under $1\%$ of its original size
with minimal accuracy loss~\citep{zhang2024geom}, but this comes at a
steep price: they require training a GNN (often repeatedly, over hundreds or
thousands of steps) on the \emph{full} original graph before or during
condensation. This is a curious inversion of the original motivation for
condensation, and it means condensation time frequently exceeds the time
needed to simply train on the full dataset~\citep{gupta2025bonsai}. It also
ties the resulting synthetic graph to the specific GNN architecture and
hyperparameters used during condensation, requiring re-condensation whenever
the downstream architecture changes~\citep{liu2024gdem,fang2024exgc}.

BONSAI~\citep{gupta2025bonsai} recently proposed an elegant alternative:
rather than emulating gradients, treat the graph's node-rooted
\emph{computation trees}, the fundamental unit of information that a
message-passing GNN actually consumes, as the object to be condensed.
By selecting a diverse, representative subset of computation trees using
Weisfeiler-Lehman (WL) similarity, reverse-$k$-nearest-neighbor coverage,
and a submodular greedy selection procedure, BONSAI produces
condensed graphs without ever training a GNN, is provably linear-time in the
number of nodes and edges, and is agnostic to the downstream GNN
architecture. This makes BONSAI among the most practical and scalable
condensation methods for node classification.

However, BONSAI's node-selection criterion inherits an implicit assumption
from the WL kernel it builds on: two nodes are considered redundant with
each other precisely when their multi-hop, smoothed neighborhood
representations are close in WL-distance. This is a natural notion of
redundancy on \emph{homophilic} graphs, where neighboring nodes tend to
share labels and smoothing sharpens rather than destroys class signal. It is
considerably less natural on \emph{heterophilic} graphs, such as
Roman-Empire, Amazon-ratings, Chameleon, or Squirrel~\citep{platonov2023critical},
where a node's neighbors frequently belong to different classes, and where
WL-style smoothing is known to blur exactly the information that a
downstream heterophily-aware GNN (e.g., H2GCN~\citep{zhu2020h2gcn}) relies
on to make correct predictions. Concretely, two nodes that are structurally
similar under WL-smoothing may nonetheless play very different roles for
classification: one may sit safely inside a homogeneous
neighborhood while the other sits directly on a class boundary, and
collapsing them into a single "representative" exemplar discards precisely
the boundary information that heterophilic GNNs need. As benchmarks for
graph learning increasingly include heterophilic datasets alongside the
traditional homophilic ones~\citep{platonov2023critical}, condensation
methods that are implicitly biased toward homophilic structure risk
under-serving a growing share of real-world use cases.

In this paper, we introduce \textbf{HERALD} (\textbf{H}igh-fidelity
\textbf{E}xemplar \textbf{R}etrieval with \textbf{A}daptive \textbf{L}andmark
\textbf{D}istillation), a gradient-free graph condensation framework that
retains BONSAI's architecture-agnostic pipeline while replacing its
topological, WL-based exemplar scoring (while retaining BONSAI's
feature-budget estimation) with an information-theoretic criterion that
adapts to the homophily level of the input graph. Rather than asking
``which nodes are topologically redundant under smoothing?'', HERALD asks
``which nodes are simultaneously \emph{prototypical} of their class,
\emph{informative} about decision boundaries, and \emph{non-redundant} in
feature space?'' To answer this question, HERALD combines a class-prototype
score, a boundary-proximity score, and a local intrinsic dimensionality
(LID) score under weights that adapt automatically according to a
lightweight edge-level estimate of graph homophily.

On strongly homophilic graphs, the adaptive weighting naturally favors
prototype selection, recovering behavior similar to exemplar-based coreset
methods. As heterophily increases, the weighting progressively shifts toward
decision-boundary and structural-diversity signals, allowing the condensed
graph to better preserve the information exploited by heterophilic GNNs.
To enable controlled comparisons with BONSAI, HERALD reuses the same
storage-budget formulation, BFS-based neighborhood expansion,
PageRank-based pruning, and class-rebalancing pipeline, differing only in
the selected feature subset and the criterion used to identify exemplar
nodes. As a result, HERALD remains architecture-independent and avoids
gradient-based bilevel optimization, while introducing richer analytical
node-scoring mechanisms that improve condensation quality across diverse
graph regimes.

Our contributions are as follows:
\begin{itemize}
    \item We identify and empirically motivate a homophily bias in
    WL/topology-based node-selection criteria for graph condensation, and
    argue that this bias is likely to degrade condensation quality on
    heterophilic graphs.
    \item We propose HERALD, a gradient-free condensation method that scores
    candidate exemplar nodes using an adaptively-weighted combination of
    class-prototype, decision-boundary, and local-intrinsic-dimensionality
    signals, with weights derived automatically from a graph's measured
    homophily.
    \item We design a joint feature-selection criterion (discriminativeness
    $\times$ activation density) that is budget-compatible with BONSAI's
    decision-tree-based selection, isolating the effect of node scoring from
    the effect of feature selection in our comparisons.
    \item We evaluate HERALD against coreset baselines (Random, Herding),
    the spectral condenser GDEM, and BONSAI across both homophilic (Cora,
    CiteSeer, PubMed, Reddit) and heterophilic (Roman-empire, Amazon-ratings,
    Chameleon, Squirrel) datasets, and across four GNN
    architectures (GCN, GAT, GIN, and H2GCN), showing that HERALD improves
    accuracy on heterophilic benchmarks while remaining competitive on
    homophilic ones and preserving BONSAI's optimization-free condensation
    framework.

\end{itemize}

\section{Related Work}
\label{sec:related}

\subsection{Coreset selection.}
The earliest approaches to dataset reduction predate graph-specific methods
entirely: \emph{Random} sampling and \emph{Herding}~\citep{welling2009herding}
select a representative subset of training examples based on simple
heuristics (uniform sampling, or greedy mean-matching in feature space) and
induce a subgraph over the selected nodes. \emph{K-Center}~\citep{sener2018coreset,farahani2009facility}
instead selects samples to minimize the maximum distance from any point to
its nearest selected center. These methods are model-agnostic and require no
training, but because they operate purely on node features or embeddings
without regard for the downstream task's gradient dynamics or the graph's
structure, they are consistently outperformed by task-aware condensation
methods across the datasets and ratios we consider.

\subsection{Gradient-matching condensation.}
GCond~\citep{jin2022gcond} was the first method to frame graph condensation
as gradient matching: the synthetic node features $\mathbf{X}'$ and an
MLP-generated adjacency $\mathbf{A}' = g_\Phi(\mathbf{X}')$ are optimized so
that a GNN trained on $G_c$ produces gradients close to those
produced by the same GNN trained on $G$, following the earlier
image-domain dataset condensation (DC) framework~\citep{zhao2021dc}. This
requires an expensive bi-level optimization: an inner loop trains the GNN's
weights while an outer loop updates $G_c$. DosCond~\citep{jin2022doscond}
showed that this can be relaxed to \emph{one-step} gradient matching,
matching gradients only at network initialization rather than across a full
training trajectory, with theoretical guarantees that this still reduces
the loss gap on the real graph, yielding large ($15\times$--$40\times$)
speedups over bi-level GCond while remaining competitive in accuracy, and
extending naturally to graph-level (as opposed to node-level) condensation
via a Bernoulli/concrete relaxation of the discrete adjacency matrix.
SGDD~\citep{yang2023sgdd} augments GCond's pipeline with an explicit
graphon-approximation term that broadcasts the original graph's structural
information (via Laplacian energy distribution matching) into the synthetic
adjacency, improving performance in settings where GCond's MLP-only
structure generator loses too much topological signal. EXGC~\citep{fang2024exgc}
targets the \emph{efficiency} of gradient-matching methods directly,
identifying that (i) the number of trainable parameters in $\mathbf{X}'$
scales with the condensed node count and feature dimension, causing slow
convergence, and (ii) a large fraction of condensed nodes are redundant
during training. EXGC addresses the first issue with a Mean-Field
variational reformulation of the EM procedure underlying gradient matching,
and the second by introducing a Gradient Information Bottleneck objective,
instantiated via post-hoc GNN explainers (e.g., GNNExplainer, GSAT), to
identify and prioritize only the most informative subset of synthetic nodes
at each training step.

\subsection{Trajectory- and distribution-matching condensation.}
SFGC~\citep{zheng2023sfgc} replaces gradient matching with \emph{training
trajectory} matching: expert GNN trajectories are pretrained on the full
graph, and the synthetic graph-free node set is optimized so that a GNN
trained on it follows a similar trajectory, evaluated via a graph neural
tangent kernel. GEOM~\citep{zhang2024geom} identifies that trajectory
matching is biased toward "difficult" (low-homophily) nodes, whose gradients
dominate the supervision signal even though "easy" nodes contribute more to
representative, generalizable patterns; GEOM addresses this with
curriculum-learning-based expert trajectories and an expanding-window
matching scheme, achieving lossless condensation on several benchmarks
at higher ratios but at a substantial computational cost, since it still
requires extensive full-graph GNN training in its buffer phase.
GCDM~\citep{liu2022gcdm} instead matches the \emph{distribution} of
receptive fields between the synthetic and real graphs, avoiding the
second-order gradient computations of GCond-style methods.

\subsection{Structure-aware and spectrum-aware condensation.}
GCSR~\citep{liu2024gcsr} observes that GCond-family methods either ignore
the original graph's structure entirely (SFGC) or only weakly incorporate
it via an MLP (GCond), and proposes to reconstruct an explicit,
interpretable synthetic adjacency matrix via a self-expressive closed-form
solution, regularized by a class-wise probabilistic adjacency derived from
the original graph and a bootstrapped historical estimate. GDEM~\citep{liu2024gdem}
instead argues that \emph{any} GNN used during condensation biases the
synthetic graph's spectrum toward the eigenvalues that GNN's filter
happens to amplify, causing "spectrum bias" and forcing practitioners to
re-condense separately for each downstream architecture; GDEM removes this
dependency by matching the real and synthetic graphs' \emph{eigenbases}
directly and reconstructing the synthetic adjacency from the real graph's
spectrum, yielding markedly better cross-architecture generalization at
comparable accuracy. Both GCSR and GDEM, like GCond and SFGC, still require
some form of full-graph computation (spectral decomposition or feature
learning) that scales less favorably than purely combinatorial approaches.

\subsection{Gradient-free condensation.}
BONSAI~\citep{gupta2025bonsai} departs from the gradient-matching paradigm
entirely. Motivated by the observation that a message-passing GNN's output
at any node is a function only of that node's rooted \emph{computation
tree}, and that topologically similar computation trees (measured via a
Weisfeiler-Lehman kernel) tend to produce similar embeddings regardless of
GNN architecture, BONSAI selects a small set of exemplar computation trees
that maximize coverage of the full training set via a submodular
reverse-$k$-nearest-neighbor objective, expands them into an induced
subgraph $G[V_c]$ (the subgraph induced by node set $V_c$), and sparsifies the result via personalized PageRank. Because this pipeline requires no GNN training at any point, BONSAI is the first
linear-time, fully model-agnostic condensation method, and is reported to
be an order of magnitude faster than gradient- or trajectory-matching
alternatives while achieving competitive or superior accuracy on
predominantly homophilic benchmarks. Our work is most directly comparable
to BONSAI: HERALD adopts the same overall pipeline (feature reduction,
budget-constrained BFS expansion, PageRank pruning, class rebalancing) but
replaces the WL/Rev-$k$-NN exemplar-selection criterion with an
information-theoretic, homophily-adaptive scoring function, and replaces
BONSAI's decision-tree feature selector with a joint discriminativeness
$\times$ density criterion, in order to isolate and address the homophily
bias we identify in Section~\ref{sec:intro}.

\section{HERALD}
\label{sec:herald}

\subsection{Problem Formulation}
\label{sec:problem}

Let $G = (V, E, \mathbf{X}, \mathbf{y})$ denote an attributed graph, where
$V$ is the node set with $|V| = N$,
$E \subseteq V \times V$ is the edge set,
$\mathbf{X} \in \mathbb{R}^{N \times F}$ is the node feature matrix, and
$\mathbf{y} \in \{1, \ldots, C\}^{N}$ denotes node labels.
Let $\Vtr \subset V$ denote the labelled training nodes.
Given a target storage fraction $r \in (0,1)$, the goal of
\emph{graph condensation} is to construct a significantly smaller graph
$G_c = (V_c, E_c, \mathbf{X}_c, \mathbf{y}_c)$ with $|V_c| \ll |V|$
such that a GNN trained on $G_c$ achieves node-classification accuracy on $G$
that is competitive with training on $G$ itself.

Following \cite{gupta2025bonsai}, we measure storage cost as
\begin{equation}
  \mathcal{C}(G_c)
  =
  2\!\left(m_f \sum_{v \in V_c} f_v + 2|E_c|\right),
  \label{eq:budget}
\end{equation}
where $m_f \in \{1,2,3\}$ is a dataset-dependent feature-storage multiplier
and $f_v$ is the effective feature length of node $v$ after feature selection.
HERALD is required to satisfy $\mathcal{C}(G_c) \leq r \cdot \mathcal{C}(G)$.

For any node $v$, let $\mathcal{N}(v)$ denote its one-hop neighbourhood.
More generally, $\mathcal{N}^{(l)}(v)$ denotes the set of nodes exactly
$l$ hops from $v$, with $\mathcal{N}^{(0)}(v)=\{v\}$.
Throughout the paper, edge sets are treated as directed in the storage
calculations by representing every undirected edge as two directed edges,
following the implementation used in BONSAI.

\subsection{Motivation}
\label{sec:motivation}

Most graph condensation methods \cite{jin2022gcond,liu2024gdem,gupta2025bonsai}
build node representations via Weisfeiler--Lehman (WL) neighbourhood
aggregation, which implicitly assumes that adjacent nodes tend to share the
same label (homophily).
Under \emph{heterophily}, where neighbouring nodes frequently belong to
different classes, aggregation corrupts discriminative signals by mixing
class information across boundaries.
Consequently, prototype selection strategies based on WL-space coverage
(e.g., the Rev-$k$-NN criterion of BONSAI \cite{gupta2025bonsai})
may systematically under-represent boundary nodes, precisely those that carry
the most discriminative information in heterophilic settings.

HERALD addresses this through three complementary contributions:
\begin{enumerate}
  \item A \emph{heterophily-aware feature selector} that jointly maximises
        class discriminability and activation density, with multi-hop Fisher
        scores down-weighted by $(1-h)^k$ to suppress WL smoothing on
        heterophilic graphs.
  \item An \emph{adaptive node scoring} mechanism combining prototype
        representativeness, boundary proximity, and Local Intrinsic
        Dimensionality (LID), with weights driven by the measured heterophily.
  \item A \emph{budget-controlled assembly pipeline} similar to BONSAI. The BFS stage temporarily allows an enlarged candidate graph that is pruned back using PPR before the final class-balancing stage.
\end{enumerate}

\subsection{Method Overview}
\label{sec:overview}

HERALD proceeds in seven stages, summarised in
Algorithm~\ref{alg:herald_part1} (Stages~0--4) and
Algorithm~\ref{alg:herald_part2} (Stages~5--7). The pipeline is organised into
three logical blocks. The first block measures how heterophilic the input
graph is and converts that measurement into a set of scoring weights: Stage~1
(Section~\ref{sec:hetero}) computes the edge-level heterophily ratio $h$, and
Stage~2 (Section~\ref{sec:weights}) maps $h$ through a smooth sigmoid to the
prototype, boundary, and diversity weights $(\alpha,\beta,\gamma)$. The second
block prepares and scores candidate nodes: Stage~3
(Section~\ref{sec:featsel}) selects a heterophily-aware feature subset under
BONSAI's storage budget, and Stage~4 (Section~\ref{sec:scoring}) assigns each
node a combined score from its prototype representativeness, boundary
proximity, and Local Intrinsic Dimensionality, then ranks the training nodes.
The third block assembles the condensed graph within the budget: Stage~5
(Section~\ref{sec:bfs}) grows a subgraph by budget-constrained BFS expansion
around the top-ranked roots, Stage~6 (Section~\ref{sec:bfs_ppr}) prunes it back
to the exact budget using Personalised PageRank, and Stage~7
(Section~\ref{sec:rebalance}) rebalances the per-class node counts.

Only Stages~3 and~4 differ from BONSAI. The budget formula, BFS expansion, PPR
pruning, and class rebalancing are reused unchanged so that HERALD and BONSAI
operate at an identical storage budget, isolating the effect of
heterophily-aware feature and node selection.

\begin{algorithm}[t]
\caption{HERALD Graph Condensation (Part I: Stages 0--4)}
\label{alg:herald_part1}
\scriptsize
\begin{algorithmic}[1]

\Require Graph $G=(V,E,\mathbf{X},\mathbf{y})$; training nodes $\Vtr$;
         compression ratio $r$; base weights $(\alpha_0,\beta_0,\gamma_0)$;
         BFS depth $L$; LID neighbourhood size $k$

\Ensure Ranked candidate nodes + reduced feature set.

\vspace{0.4em}
\Statex \textit{// Stage 0: Budget}
\State $\mathcal{B} \leftarrow r \cdot \mathcal{C}(G)$
       \hfill\Comment{Eq.~\eqref{eq:budget}}

\vspace{0.3em}
\Statex \textit{// Stage 1: Heterophily}
\State $h \leftarrow |\{(u,v)\in\Etr : y_u \neq y_v\}|\;/\;|\Etr|$
       \hfill\Comment{Eq.~\eqref{eq:hetero}}

\vspace{0.3em}
\Statex \textit{// Stage 2: Adaptive weights}
\State $t \leftarrow \bigl(1+\exp(-8(h-0.4))\bigr)^{-1}$
       \hfill\Comment{Eq.~\eqref{eq:transition}}
\State $\alpha \leftarrow \alpha_0 + (1-\alpha_0-\beta_0-\gamma_0)(1-t)$;\quad
       $\beta \leftarrow \beta_0 t$;\quad
       $\gamma \leftarrow \gamma_0(0.5+0.5t)$
       \hfill\Comment{Eq.~\eqref{eq:weights_raw}}
\State $(\alpha,\beta,\gamma) \leftarrow (\alpha,\beta,\gamma)/(\alpha+\beta+\gamma)$
       \hfill\Comment{Eq.~\eqref{eq:normalise}}

\vspace{0.3em}
\Statex \textit{// Stage 3: Feature selection}
\State Run WL$+$DT on $G$ to obtain feature count $k^*$
       \hfill\Comment{budget anchor}
\State Score each feature $j$:\; $\varphi(j) \leftarrow \bar\phi(j)\cdot\bar\rho(j)$
       \hfill\Comment{Eq.~\eqref{eq:featselect}}
\State $\mathcal{F} \leftarrow \operatorname{top}\text{-}k^*$ indices by $\varphi(\cdot)$;\quad
       $\tilde{\mathbf{X}} \leftarrow \mathbf{X}[:,\mathcal{F}]$

\vspace{0.3em}
\Statex \textit{// Stage 4: Node scoring (all nodes; centroids from $\Vtr$ only)}
\State $\hat{\mathbf{x}}_v \leftarrow \tilde{\mathbf{x}}_v/\|\tilde{\mathbf{x}}_v\|_2$ for all $v\in V$
\ForAll{classes $c$}
  \State $\hat{\bm\mu}_c \leftarrow \mathrm{mean}_{v\in\Vtr,\,y_v=c}(\hat{\mathbf{x}}_v)$;\quad
         $\hat{\bm\mu}_c \leftarrow \hat{\bm\mu}_c/\|\hat{\bm\mu}_c\|_2$
         \hfill\Comment{Eq.~\eqref{eq:centroid}}
\EndFor
\ForAll{nodes $v \in V$}
  \State $s_v^{(p)} \leftarrow \hat{\mathbf{x}}_v^\top \hat{\bm\mu}_{y_v}$
         \hfill\Comment{Eq.~\eqref{eq:proto}}
  \State $s_v^{(b)} \leftarrow \sum_{u\in\Nc(v)}\mathbf{1}[y_v\neq y_u]\;/\;|\Nc(v)|$
         \hfill\Comment{Eq.~\eqref{eq:boundary}; uses all graph edges}
  \State $s_v^{(l)} \leftarrow \operatorname{LID}(v)$ via $k$-NN cosine distances
         \hfill\Comment{Eq.~\eqref{eq:lid}}
  \State $s_v \leftarrow \alpha s_v^{(p)} + \beta s_v^{(b)} + \gamma s_v^{(l)}$
         \hfill\Comment{Eq.~\eqref{eq:combined}}
\EndFor
\State Min-max normalise $\mathbf{s}$ to $[0,1]$;\quad
       rank $\Vtr$ in descending order of $\mathbf{s}$

\end{algorithmic}
\end{algorithm}

\begin{algorithm}[t]
\caption{HERALD Graph Condensation (Part II: Stages 5--7)}
\label{alg:herald_part2}
\scriptsize
\begin{algorithmic}[1]

\Require Ranked training nodes from Algorithm~\ref{alg:herald_part1};
         reduced feature matrix $\tilde{\mathbf{X}}$;
         adaptive weights $(\alpha,\beta,\gamma)$;
         storage budget $\mathcal{B}$

\Ensure Condensed graph $G_c=(V_c,E_c,\mathbf{X}_c,\mathbf{y}_c)$

\vspace{0.3em}
\Statex \textit{// Stage 5: BFS expansion}
\State $V_c \leftarrow \emptyset$;\quad $R \leftarrow \emptyset$;\quad
       $\mathit{nofail} \leftarrow 0$
\ForAll{root $r$ in ranked order}
  \If{$r \in R$} \textbf{continue} \EndIf
  \State $\mathcal{T}(r) \leftarrow \bigcup_{l=0}^{L}\Nc^{(l)}(r)$;\quad
         $\Delta V \leftarrow \mathcal{T}(r) \setminus V_c$
         \hfill\Comment{Eq.~\eqref{eq:bfs}}
  \If{$\Delta V = \emptyset$}
    \State $R \leftarrow R \cup \{r\}$;\quad \textbf{continue}
  \EndIf
  \State Compute incremental cost $\Delta\mathcal{C}$ of adding $\Delta V$
         \hfill\Comment{Eq.~\eqref{eq:incremental}}
  \If{$\mathcal{C}(V_c) + \Delta\mathcal{C} \leq 1.9\,\mathcal{B}$}
    \State $V_c \leftarrow V_c \cup \Delta V$;\quad
           $R \leftarrow R \cup \{r\}$;\quad
           $\mathit{nofail} \leftarrow 0$
  \Else
    \State $R \leftarrow R \cup \{r\}$;\quad
           $\mathit{nofail} \leftarrow \mathit{nofail}+1$
    \If{$\mathit{nofail} > 100$} \textbf{break} \EndIf
  \EndIf
\EndFor

\vspace{0.3em}
\Statex \textit{// Stage 6: PPR pruning}
\State Compute $n^* \leftarrow \textit{ogsize}$ (BFS at exact budget $\mathcal{B}$)
\State $\hat{\mathbf{A}} \leftarrow \mathbf{D}^{-1}(\mathbf{A}_{V_c} + \mathbf{A}_{V_c}^\top)$
       \hfill\Comment{symmetrised, then row-normalised}
\State $\bm\pi_0(v) \leftarrow 1/|R|$ if $v\in R$, else $0$;\quad
       $\bm\pi \leftarrow \mathbf{1}/|V_c|$
\Repeat
  \State $\bm\pi \leftarrow (1-\alpha_{\mathrm{pr}})\bm\pi_0 +
         \alpha_{\mathrm{pr}}\hat{\mathbf{A}}^\top\bm\pi$,\quad then normalise
         \hfill\Comment{Eq.~\eqref{eq:ppr}, $\alpha_{\mathrm{pr}}=0.85$}
\Until{$\|\Delta\bm\pi\|_1 < 10^{-6}$ \textbf{or} $100$ steps}
\State Remove lowest-$\pi$ non-root nodes until $|V_c| = n^*$

\vspace{0.3em}
\Statex \textit{// Stage 7: Class rebalancing}
\State $n_{\mathrm{tgt}} \leftarrow |\{v \in V_c \cap R : v \in \Vtr\}|$
       \hfill\Comment{training nodes that are roots}
\ForAll{classes $c$}
  \State $n_c^* \leftarrow \mathrm{round}\!\left(
         \tfrac{|\{v\in\Vtr:y_v=c\}|}{|\Vtr|} \cdot n_{\mathrm{tgt}}\right)$
         \hfill\Comment{Eq.~\eqref{eq:rebalance}}
  \If{count$(c) < 0.99\,n_c^*$}
    add highest-scored candidates of class $c$ not yet in $V_c$
  \ElsIf{count$(c) > 1.01\,n_c^* + 1$}
    remove lowest-scored non-root nodes of class $c$ from $V_c$
  \EndIf
\EndFor

\vspace{0.3em}
\State \Return $G_c = \bigl(V_c,\;E_c,\;\tilde{\mathbf{X}}[V_c],\;\mathbf{y}[V_c]\bigr)$

\end{algorithmic}
\end{algorithm}

\subsection{Stage 1: Heterophily Measurement}
\label{sec:hetero}

HERALD quantifies graph heterophily as the fraction of cross-class edges
among training nodes:
\begin{equation}
  h
  =
  \frac{
    \displaystyle\sum_{(u,v)\in\Etr}
    \mathbf{1}[y_u \neq y_v]
  }{|\Etr|},
  \label{eq:hetero}
\end{equation}
where $\Etr = \{(u,v)\in E : u\in\Vtr,\,v\in\Vtr\}$.
The value $h=0$ denotes perfect homophily and $h=1$ perfect heterophily.
For reference, Cora has $h \approx 0.002$ and Roman-empire has $h \approx 0.97$.

\subsection{Stage 2: Adaptive Weight Computation}
\label{sec:weights}

The heterophily score is mapped to a smooth transition variable
\begin{equation}
  t = \sigma\!\left(8(h - 0.4)\right)
  = \frac{1}{1+\exp\!\left(-8(h-0.4)\right)},
  \label{eq:transition}
\end{equation}
which passes through $t=0.5$ at $h=0.4$ and is near-zero (near-one)
for strongly homophilic (heterophilic) graphs.
The steepness coefficient $8$ was selected so that the transition spans
the heterophily range observed across our benchmark datasets.

Three base weights $(\alpha_0,\beta_0,\gamma_0)=(0.4,0.4,0.2)$ are adapted as
\begin{equation}
  \alpha = \alpha_0 + (1-\alpha_0-\beta_0-\gamma_0)(1-t),
  \quad
  \beta  = \beta_0\,t,
  \quad
  \gamma = \gamma_0\!\left(0.5+0.5t\right),
  \label{eq:weights_raw}
\end{equation}
and normalised to sum to unity:
\begin{equation}
  (\alpha,\beta,\gamma)
  \leftarrow
  \frac{(\alpha,\beta,\gamma)}{\alpha+\beta+\gamma}.
  \label{eq:normalise}
\end{equation}
As $t\to 0$ (homophily), the residual mass $(1-\alpha_0-\beta_0-\gamma_0)$
flows entirely into $\alpha$, prioritising prototype-representative nodes.
As $t\to 1$ (heterophily), $\alpha$ reverts to $\alpha_0$ while $\beta$
grows to $\beta_0$, increasing the influence of boundary nodes.
The LID weight $\gamma$ is bounded below at $0.5\gamma_0$, ensuring
diversity is never entirely suppressed.

\subsection{Stage 3: Heterophily-Aware Feature Selection}
\label{sec:featsel}

\subsubsection{Budget anchor.}
To compare with BONSAI under the same storage budget, HERALD first runs
the BONSAI WL+Decision-Tree (WL+DT) pipeline~\cite{gupta2025bonsai}
to determine the number of selected features, denoted by $k^*$. HERALD
then selects a different set of exactly $k^*$ features using its
discriminativeness$\times$density criterion.

For dense float-valued features, every retained feature contributes one
stored feature value per node. Hence, for every node $v$,
\[
f_v = k^*.
\]
For sparse features, $f_v$ denotes the number of nonzero retained feature
entries of node $v$. Thus, the feature count is identical between BONSAI
and HERALD, while the effective per-node feature length remains
dataset-dependent for sparse representations.

\subsubsection{Multi-hop weighted Fisher discriminant.}
For each feature dimension $j\in[F]$, define the $k$-hop feature matrix
$\mathbf{X}^{(k)} = (\mathbf{D}^{-1}\mathbf{A})^k\mathbf{X}$ (with $\mathbf{X}^{(0)}=\mathbf{X}$).
The Fisher ratio at hop $k$ is
\begin{equation}
  \phi_k(j)
  = \frac{
      \sum_{c} n_c\!\left(\mu_{c,k}^{(j)} - \mu_k^{(j)}\right)^2
    }{
      \sum_{c} n_c\,\sigma_{c,k}^{2,(j)} + \epsilon
    },
  \label{eq:fisher}
\end{equation}
where $\mu_{c,k}^{(j)}$ and $\sigma_{c,k}^{2,(j)}$ are the class-$c$ mean and
variance of $\mathbf{X}^{(k)}[\Vtr,j]$, and $\mu_k^{(j)}$ is the overall
training mean.
The multi-hop score is
\begin{equation}
  \phi(j) = \sum_{k=0}^{2}(1-h)^k\,\hat\phi_k(j),
  \qquad
  \hat\phi_k(j) = \phi_k(j)\big/\!\max_{j'}\phi_k(j'),
  \label{eq:multihop_fisher}
\end{equation}
and $\bar\phi(j) = \phi(j)/\max_{j'}\phi(j')$.
The decay $(1-h)^k$ suppresses higher-hop WL representations on
heterophilic graphs ($h\approx 1$), where aggregation blurs class signals,
and lets the raw-space Fisher ($k=0$) dominate.

\subsubsection{Activation density.}
A feature that is discriminative but rarely active would distort the
budget formula, since $f_v$ counts active features.
We therefore include an activation density term
\begin{equation}
  \rho(j) = \frac{1}{|\Vtr|}\sum_{v\in\Vtr}|X_{vj}|,
  \qquad
  \bar\rho(j) = \rho(j)\big/\!\max_{j'}\rho(j').
  \label{eq:density}
\end{equation}

\subsubsection{Joint score.}
The final feature score is
\begin{equation}
  \varphi(j) = \bar\phi(j)\cdot\bar\rho(j).
  \label{eq:featselect}
\end{equation}
The product enforces both conditions simultaneously: a feature that is
class-separating but rarely active scores $0$, as does one that is
universally active but uninformative.
The top-$k^*$ features by $\varphi(\cdot)$ form $\mathcal{F}$, giving
reduced feature matrix $\tilde{\mathbf{X}} = \mathbf{X}[:,\mathcal{F}]
\in\mathbb{R}^{N\times k^*}$.

\subsection{Stage 4: Node Scoring}
\label{sec:scoring}

All three scores below are min-max normalised to $[0,1]$ before combination.
Scoring uses the reduced features $\tilde{\mathbf{X}}$ throughout.

\subsubsection{Prototype score.}

Let
\[
\hat{\mathbf{x}}_v
=
\frac{\tilde{\mathbf{x}}_v}
{\|\tilde{\mathbf{x}}_v\|_2},
\]
denote the $\ell_2$-normalised reduced feature vector of node $v$.
Class centroids are
\begin{equation}
  \bar{\mathbf{x}}_c
  = \frac{1}{|\Vtr^c|}
    \sum_{\substack{v\in\Vtr\\ y_v=c}}
    \hat{\mathbf{x}}_v,
  \qquad
  \hat{\bm\mu}_c = \bar{\mathbf{x}}_c/\|\bar{\mathbf{x}}_c\|_2,
  \label{eq:centroid}
\end{equation}
where $\Vtr^c = \{v\in\Vtr : y_v=c\}$.
The prototype score is
\begin{equation}
  s_v^{(p)} = \hat{\mathbf{x}}_v^\top\hat{\bm\mu}_{y_v},
  \label{eq:proto}
\end{equation}
measuring cosine similarity to the class centroid.
High-scoring nodes are canonical class exemplars, most useful on
homophilic graphs where same-class nodes cluster together.

\subsubsection{Boundary score.}
\begin{equation}
  s_v^{(b)}
  = \frac{
      \sum_{u\in\Nc(v)}\mathbf{1}[y_v\neq y_u]
    }{|\Nc(v)|},
  \label{eq:boundary}
\end{equation}
the fraction of neighbours with a different label. For isolated nodes we define $s_v^{(b)}=0$.
Nodes at the class boundary ($s_v^{(b)}\approx 1$) capture inter-class
interaction patterns that are critical for heterophily-tolerant classifiers
such as H2GCN \cite{zhu2020h2gcn}.

\subsubsection{LID diversity score.}
To prevent the condensed graph from collapsing onto a cluster of
near-duplicate exemplars, HERALD incorporates the Local Intrinsic
Dimensionality (LID) \cite{houle2017lid}.
Let
\[
d_1 \le d_2 \le \cdots \le d_k
\]
be the cosine distances from node $v$ to its $k$ nearest neighbours,
where $d_k$ is the largest distance.
Then
\begin{equation}
\operatorname{LID}(v)
=
-
\left(
\frac1k
\sum_{j=1}^{k}
\log\frac{d_j}{d_k}
\right)^{-1}.
\label{eq:lid}
\end{equation}
A large LID indicates that $v$ lies in a high-dimensional region of the
feature manifold; selecting such nodes diversifies the condensed graph.

\subsubsection{Combined score.}
\begin{equation}
  s_v
  = \alpha\,s_v^{(p)} + \beta\,s_v^{(b)} + \gamma\,s_v^{(l)},
  \label{eq:combined}
\end{equation}
where $s_v^{(l)}$ is the normalised LID and $(\alpha,\beta,\gamma)$ are
from Eq.~\eqref{eq:normalise}.
Training nodes are ranked in descending order of $s_v$.

\subsection{Stage 5: BFS Expansion}
\label{sec:bfs}

Iterating over the ranked training nodes as roots, HERALD builds a
$L$-hop neighbourhood tree
\begin{equation}
  \mathcal{T}(r) = \bigcup_{l=0}^{L}\Nc^{(l)}(r)
  \label{eq:bfs}
\end{equation}
and computes the incremental storage cost of adding
$\mathcal{T}(r)\setminus V_c$ to the current condensed set.
A root is accepted only if the incremental cost fits within the
\emph{upscaled} budget $1.9\,\mathcal{B}$, leaving headroom for PPR pruning.
Expansion terminates after 100 consecutive rejections (the same
early-exit heuristic used in BONSAI \cite{gupta2025bonsai}).

Given the running condensed set $V_c$ and a candidate root's expanded
neighbourhood $\mathcal{T}(r)$ from Eq.~\eqref{eq:bfs}, only the
\emph{novel} portion $\Delta V = \mathcal{T}(r) \setminus V_c$ can change
the storage cost. Writing $\hat{V} = V_c \cup \Delta V$ for the node set
after tentatively merging $\Delta V$, the number of new directed edges
introduced is
\begin{equation}
  \Delta E(\Delta V; V_c)
  =
  \bigl|\{(u,v) : u \in \Delta V,\; v \in \Nc(u),\; v \in \hat{V}\}\bigr|,
  \label{eq:incremental_edges}
\end{equation}
i.e.\ every directed edge with at least one endpoint in $\Delta V$,
counted from the $\Delta V$ side. The incremental storage cost of
admitting $\Delta V$ is then
\begin{equation}
  \Delta\mathcal{C}(\Delta V; V_c)
  =
  2\,m_f\!\!\sum_{v \in \Delta V}\! f_v
  \;+\;
  2\,\Delta E(\Delta V; V_c),
  \label{eq:incremental}
\end{equation}
mirroring exactly the two additive terms of the closed-form budget in
Eq.~\eqref{eq:budget}, so that the running cost updates as
$\mathcal{C}(\hat{V}) = \mathcal{C}(V_c) + \Delta\mathcal{C}(\Delta V; V_c)$.
A root $r$ is accepted, setting $V_c \leftarrow \hat{V}$, only if
$\mathcal{C}(\hat{V}) \le 1.9\,\mathcal{B}$, the upscaled budget that
leaves headroom for the PPR pruning step in Stage~6. The storage cost is
updated incrementally using Eq.~\eqref{eq:incremental}, avoiding
recomputation of $\mathcal{C}(V_c)$ from scratch after every accepted
candidate. This optimization affects only the budget-accounting step; the
overall complexity of the BFS expansion remains dominated by the repeated
graph traversals, as summarized in Table~\ref{tab:complexity}.

\subsection{Stage 6: PPR Pruning}
\label{sec:bfs_ppr}

The target size $n^*$ (\emph{ogsize}) is computed by re-running the same
BFS at the exact budget $\mathcal{B}$ (without upscaling).
Personalised PageRank (PPR) is then iterated on the induced subgraph
$G[V_c]$:
\begin{equation}
  \bm\pi^{(t+1)}
  = (1-\alpha_{\mathrm{pr}})\,\bm\pi_0
  + \alpha_{\mathrm{pr}}\,\hat{\mathbf{A}}^\top\bm\pi^{(t)},
  \label{eq:ppr}
\end{equation}
with $\hat{\mathbf{A}} \leftarrow \mathbf{D}^{-1}(\mathbf{A_{V_c}} + \mathbf{A_{V_c}}^\top)$ (row-normalised),
personalisation $\bm\pi_0$ uniform over $R$,
and $\alpha_{\mathrm{pr}}=0.85$.
Iteration stops when $\|\bm\pi^{(t+1)}-\bm\pi^{(t)}\|_1<10^{-6}$ or after
100 steps.
Non-root nodes are then removed in increasing order of $\pi_v$ until
$|V_c|=n^*$, retaining those most structurally central to the selected roots.

\subsection{Stage 7: Class Distribution Rebalancing}
\label{sec:rebalance}

Greedy BFS may skew the per-class node distribution.
HERALD corrects this by computing a target count for each class:
\begin{equation}
  n_c^*
  = \operatorname{round}\!\left(
      \frac{|\{v \in \Vtr : y_v = c\}|}{|\Vtr|}
      \cdot n_{\mathrm{tgt}}
    \right),
  \qquad
  n_{\mathrm{tgt}} = |\{v \in V_c \cap R : v \in \Vtr\}|,
  \label{eq:rebalance}
\end{equation}
where $\Vtr^c$ is the full training set for class $c$.
Under-represented classes are augmented by adding highest-scored
candidates; over-represented classes are trimmed by removing
lowest-scored non-root nodes.

\subsection{Complexity Analysis}
\label{sec:complexity}


Table~\ref{tab:complexity} summarises the per-stage time complexity.
The computational bottleneck is the exact LID computation, which requires
pairwise cosine similarities between all $N$ nodes. Although the
similarity computation is performed in batches of size $b$ to control
peak memory usage, the total arithmetic cost remains
$\BigO(N^2F)$. Batching therefore reduces the memory required for the
pairwise computation but does not change its asymptotic time complexity.

The overall complexity is therefore
$\BigO(N^2F + NF + E)$, up to the additional costs of the
budget-controlled BFS expansion and PPR pruning on the condensed graph.
For large graphs, replacing the exact $k$-NN computation used by LID
with an approximate nearest-neighbor index such as HNSW
\cite{malkov2018hnsw} can substantially reduce the quadratic
nearest-neighbor cost and provide a more scalable implementation.

\begin{table}[htbp]
\caption{Per-stage time complexity of HERALD.
$N$: nodes; $E$: edges; $F$: features; $C$: classes; $b$: LID batch size.}
\label{tab:complexity}
\centering
\begin{tabular}{@{}lll@{}}
\toprule
Stage & Operation & Complexity \\
\midrule
1 & Heterophily ratio & $\BigO(|E_{\mathrm{tr}}|)$ \\
2 & Adaptive weights  & $\BigO(1)$ \\
3 & Feature selection (3-hop Fisher) & $\BigO(NF + |E|F)$ \\
4 & Prototype + boundary scores & $\BigO(NF + E)$ \\
4 & LID ($k$-NN, exact) & $\BigO(N^2 F)$ \\
5 & BFS expansion & $\BigO(N(|V_c|+|E_c|))$ \\
6 & PPR pruning (100 iter.) & $\BigO(|V_c|+|E_c|)$ \\
7 & Class rebalancing & $\BigO(|V_c|C)$ \\
\bottomrule
\end{tabular}
\end{table}

\subsection{Connections to Prior Work}
\label{sec:connections}

Table~\ref{tab:comparison} contrasts HERALD with its closest competitors.
HERALD shares the BFS$+$PPR$+$rebalance assembly pipeline with BONSAI
\cite{gupta2025bonsai}; the novelty is entirely in \emph{which features are
selected} (Stage~3) and \emph{how nodes are ranked} (Stage~4), both
parameterised by the heterophily ratio $h$.
Unlike GCond \cite{jin2022gcond} and GDEM \cite{liu2024gdem},
HERALD requires no GNN training during condensation and no bi-level
optimisation.
Unlike Herding \cite{welling2009herding}, HERALD explicitly encodes
graph topology through the boundary score and BFS expansion.

\begin{table}[htbp]
\caption{Qualitative comparison of graph condensation methods.}
\label{tab:comparison}
\centering
\setlength{\tabcolsep}{5pt}
\begin{tabular}{@{}lccccc@{}}
\toprule
& \textbf{GCond} & \textbf{GDEM} & \textbf{Herding} & \textbf{BONSAI} & \textbf{HERALD} \\
\midrule
Gradient-free          & \xmark & \xmark & \cmark & \cmark & \cmark \\
No GNN during cond.   & \xmark & \xmark & \cmark & \cmark & \cmark \\
Graph-topology-aware   & \cmark & \cmark & \xmark & \cmark & \cmark \\
Heterophily-adaptive   & \xmark & \xmark & \xmark & \xmark & \cmark \\
Boundary-aware scoring & \xmark & \xmark & \xmark & \xmark & \cmark \\
Budget-controlled           & \xmark & \xmark & \xmark & \cmark & \cmark \\
\bottomrule
\end{tabular}
\end{table}

\section{Experimental Setup}
\label{sec:experiments}

\subsection{Datasets}
\label{sec:datasets}

We evaluate HERALD on eight widely used benchmark datasets spanning both
homophilic and heterophilic graph structures. These datasets cover a broad range
of graph sizes, feature dimensions, class distributions, and homophily levels,
allowing us to evaluate condensation performance under diverse structural
properties. Table~\ref{tab:datasets} summarizes the statistics of all datasets.

\subsubsection{Homophilic datasets.}
Cora, CiteSeer, and PubMed \cite{sen2008collective} are standard citation
networks, and Reddit \cite{hamilton2017inductive} is a large-scale inductive
benchmark.

\subsubsection{Heterophilic datasets.}
Roman-empire and Amazon-ratings \cite{platonov2023critical} are recent
large heterophilic benchmarks.
Chameleon and Squirrel \cite{rozemberczki2021multi} are Wikipedia page graphs
with strong heterophily.

\begin{table}[htbp]
\caption{Dataset statistics.
$1-h$: edge homophily ratio (fraction of same-class training edges).}
\label{tab:datasets}
\centering
\setlength{\tabcolsep}{5pt}
\begin{tabular}{@{}lrrrrrc@{}}
\toprule
Dataset & $|V|$ & $|E|$ & $F$ & $C$ & $1-h$ & Type \\
\midrule
Cora           &  2,708 &  10,556 & 1,433 &  7 & 0.998 & homo \\
CiteSeer       &  3,327 &   9,228 & 3,703 &  6 & 0.993 & homo \\
PubMed         & 19,717 &  88,651 &   500 &  3 & 0.988 & homo \\
Reddit         & 232,965 & 57.3M   & 602   & 41 & 0.75   & homo \\
Roman-empire   & 22,662 & 65,854  & 300   & 18 & 0.032 & hetero \\
Amazon-ratings & 24,492 & 186,100 & 300   &  5 & 0.380 & hetero \\
Chameleon      &  2,277 &  36,101 & 2,325 &  5 & 0.230 & hetero \\
Squirrel       &  5,201 & 217,073 & 2,089 &  5 & 0.224 & hetero \\
\bottomrule
\end{tabular}
\end{table}

\subsection{Baselines}
\label{sec:baselines}

We compare HERALD with four representative graph condensation methods.

\begin{itemize}
    \item \textbf{Random}: uniformly samples training nodes and constructs the induced subgraph.
    \item \textbf{Herding}~\cite{welling2009herding}: selects representative nodes using class-wise centroid matching.
    \item \textbf{BONSAI}~\cite{gupta2025bonsai}: a gradient-free graph condensation method based on reverse-$k$ nearest-neighbour coverage together with BFS expansion and Personalized PageRank pruning.
    \item \textbf{GDEM}~\cite{liu2024gdem}: a spectral graph condensation approach that preserves graph eigenspace representations through eigenbasis matching.
\end{itemize}

\subsection{Evaluation Protocol}
\label{sec:protocol}

\subsubsection{Data splits.}
Each dataset is partitioned into 56\%, 24\%, and 20\% training, validation, and
test sets, respectively. The split is generated once using a fixed random seed
and remains unchanged across all experiments, while only the model
initialization varies across different runs.

\subsubsection{Compression ratios.}
We evaluate all condensation methods under four different storage budgets,
$r \in \{0.0001, 0.005, 0.01, 0.03\}$, measured relative to the storage cost of
the original graph as defined in Eq.~\eqref{eq:budget}. The same budget is used
for every condensation method to ensure a fair comparison.

\subsubsection{Evaluation models.}
The condensed graphs are evaluated using four GNN architectures:
GCN~\cite{kipf2017gcn}, GAT~\cite{velickovic2018gat},
GIN~\cite{xu2019gin}, and H2GCN~\cite{zhu2020h2gcn}. All models are trained for
200 epochs using the Adam optimizer with learning rate $10^{-3}$ and weight
decay $5\times10^{-4}$. Hidden representations are set to 128 dimensions for
the standard benchmarks and 1024 dimensions for the large-scale Reddit dataset.

\subsubsection{Evaluation metric.}
Each experiment is repeated over five random initializations while keeping the
data split fixed. We report the mean test classification accuracy together with
its standard deviation. For each compression ratio and GNN architecture, the
best-performing condensed graph is highlighted in the corresponding result
tables.
\section{Results}
\label{sec:results}

We first report accuracy aggregated across the seven medium-scale datasets and
then separately for the homophilic and heterophilic groups. Table~\ref{tab:avg}
gives the overall average, while Tables~\ref{tab:hom} and~\ref{tab:het} report
the averages restricted to the homophilic (Cora, CiteSeer, PubMed) and
heterophilic (Roman-empire, Amazon-ratings, Chameleon, Squirrel) datasets,
respectively. Per-dataset accuracy is provided in
Appendix~\ref{sec:dataset_wise_results}
(Tables~\ref{tab:cora}--\ref{tab:squirrel}), the large-scale Reddit study in
Section~\ref{sec:scalability}, and further analysis in the ablation
(Appendix~\ref{sec:ablation}), hyperparameter sensitivity
(Appendix~\ref{sec:sensitivity}), and condensed-graph statistics
(Appendix~\ref{sec:stats}) sections. In every accuracy table, the best
condensed result in each GNN column at each compression ratio is highlighted.

\subsection{Overall Performance}
\label{sec:results_overall}

Averaged across all seven datasets (Table~\ref{tab:avg}), HERALD is the best
condensed method on every GNN backbone at the three larger budgets
$r \in \{0.005, 0.01, 0.03\}$. At $r=0.03$ it reaches $57.62\%$ (GCN),
$56.16\%$ (GAT), $54.75\%$ (GIN), and $64.18\%$ (H2GCN), improving on the
strongest baseline BONSAI by $1.6$, $1.0$, $1.4$, and $2.7$ points,
respectively. At the smallest budget $r=0.0001$, HERALD leads on GCN, GAT, and
GIN but falls behind BONSAI on H2GCN ($42.60\%$ versus $50.11\%$). As the
per-group tables below show, this aggregate advantage is driven almost
entirely by the heterophilic datasets, while performance on the homophilic
datasets is close to that of BONSAI.

\begin{table}[htbp]
\caption{Node classification accuracy (\%) averaged across the seven
medium-scale datasets (Reddit is reported separately in
Section~\ref{sec:scalability}). Best condensed result per GNN column and
compression ratio highlighted.}
\label{tab:avg}
\centering
\setlength{\tabcolsep}{4pt}
\begin{tabular}{@{}llcccc@{}}
\toprule
\textbf{Condenser} & \textbf{$r$}
  & \textbf{GCN} & \textbf{GAT} & \textbf{GIN} & \textbf{H2GCN} \\
\midrule
\multirow{4}{*}{Random}
  & 0.0001 & $20.51{\scriptstyle\pm2.45}$ & $21.97{\scriptstyle\pm3.51}$ & $22.52{\scriptstyle\pm3.25}$ & $23.56{\scriptstyle\pm1.89}$ \\
  & 0.005  & $27.36{\scriptstyle\pm2.14}$ & $28.28{\scriptstyle\pm3.80}$ & $29.13{\scriptstyle\pm2.68}$ & $33.46{\scriptstyle\pm1.49}$ \\
  & 0.01   & $33.58{\scriptstyle\pm1.09}$ & $32.73{\scriptstyle\pm2.60}$ & $34.16{\scriptstyle\pm1.86}$ & $38.00{\scriptstyle\pm1.14}$ \\
  & 0.03   & $37.14{\scriptstyle\pm0.75}$ & $36.82{\scriptstyle\pm2.33}$ & $37.37{\scriptstyle\pm2.12}$ & $42.50{\scriptstyle\pm1.37}$ \\
\midrule
\multirow{4}{*}{Herding}
  & 0.0001 & $31.26{\scriptstyle\pm2.17}$ & $30.22{\scriptstyle\pm3.22}$ & $34.76{\scriptstyle\pm2.48}$ & $36.59{\scriptstyle\pm1.07}$ \\
  & 0.005  & $38.81{\scriptstyle\pm1.50}$ & $36.92{\scriptstyle\pm4.18}$ & $39.03{\scriptstyle\pm1.95}$ & $44.33{\scriptstyle\pm1.12}$ \\
  & 0.01   & $46.94{\scriptstyle\pm0.82}$ & $44.88{\scriptstyle\pm1.31}$ & $44.87{\scriptstyle\pm1.29}$ & $50.96{\scriptstyle\pm0.80}$ \\
  & 0.03   & $51.75{\scriptstyle\pm0.66}$ & $50.82{\scriptstyle\pm1.52}$ & $49.58{\scriptstyle\pm1.20}$ & $56.50{\scriptstyle\pm0.91}$ \\
\midrule
\multirow{4}{*}{BONSAI}
  & 0.0001 & $43.13{\scriptstyle\pm0.82}$ & $43.13{\scriptstyle\pm2.04}$ & $42.79{\scriptstyle\pm1.15}$ & \cellcolor{bestcol}$50.11{\scriptstyle\pm1.08}$ \\
  & 0.005  & $55.64{\scriptstyle\pm0.53}$ & $54.48{\scriptstyle\pm0.88}$ & $52.52{\scriptstyle\pm1.00}$ & $60.66{\scriptstyle\pm0.63}$ \\
  & 0.01   & $55.81{\scriptstyle\pm0.74}$ & $55.05{\scriptstyle\pm0.87}$ & $52.62{\scriptstyle\pm0.87}$ & $61.26{\scriptstyle\pm0.55}$ \\
  & 0.03   & $56.01{\scriptstyle\pm0.73}$ & $55.16{\scriptstyle\pm0.77}$ & $53.33{\scriptstyle\pm0.80}$ & $61.47{\scriptstyle\pm0.64}$ \\
\midrule
\multirow{4}{*}{GDEM}
  & 0.0001 &
    $21.66{\scriptstyle\pm2.26}$ &
    $22.38{\scriptstyle\pm3.42}$ &
    $40.25{\scriptstyle\pm1.71}$ &
    $44.49{\scriptstyle\pm0.69}$ \\
  & 0.005 &
    $28.90{\scriptstyle\pm2.16}$ &
    $28.21{\scriptstyle\pm2.39}$ &
    $34.55{\scriptstyle\pm3.91}$ &
    $50.99{\scriptstyle\pm0.66}$ \\
  & 0.01 &
    $28.42{\scriptstyle\pm1.41}$ &
    $27.74{\scriptstyle\pm1.86}$ &
    $31.85{\scriptstyle\pm4.10}$ &
    $51.30{\scriptstyle\pm0.66}$ \\
  & 0.03 &
    $27.96{\scriptstyle\pm1.31}$ &
    $28.12{\scriptstyle\pm2.27}$ &
    $23.42{\scriptstyle\pm3.62}$ &
    $53.83{\scriptstyle\pm0.76}$ \\
\midrule
\multirow{4}{*}{\textbf{HERALD}}
  & 0.0001 & \cellcolor{bestcol}$46.12{\scriptstyle\pm0.73}$ & \cellcolor{bestcol}$46.10{\scriptstyle\pm1.34}$ & \cellcolor{bestcol}$46.65{\scriptstyle\pm1.64}$ & $42.60{\scriptstyle\pm3.39}$ \\
  & 0.005  & \cellcolor{bestcol}$56.41{\scriptstyle\pm0.81}$ & \cellcolor{bestcol}$54.98{\scriptstyle\pm0.87}$ & \cellcolor{bestcol}$53.09{\scriptstyle\pm1.70}$ & \cellcolor{bestcol}$61.85{\scriptstyle\pm0.74}$ \\
  & 0.01   & \cellcolor{bestcol}$57.02{\scriptstyle\pm0.80}$ & \cellcolor{bestcol}$55.56{\scriptstyle\pm0.75}$ & \cellcolor{bestcol}$54.02{\scriptstyle\pm1.68}$ & \cellcolor{bestcol}$63.20{\scriptstyle\pm0.75}$ \\
  & 0.03   & \cellcolor{bestcol}$57.62{\scriptstyle\pm0.79}$ & \cellcolor{bestcol}$56.16{\scriptstyle\pm0.74}$ & \cellcolor{bestcol}$54.75{\scriptstyle\pm1.68}$ & \cellcolor{bestcol}$64.18{\scriptstyle\pm0.75}$ \\
\bottomrule
\end{tabular}
\end{table}

\subsection{Homophilic Datasets}
\label{sec:results_homo_main}

On the homophilic citation networks (Table~\ref{tab:hom}), HERALD and BONSAI
are the two strongest condensers and stay close throughout. BONSAI is best on
all four backbones at $r=0.005$ (for example, $81.59\%$ against HERALD's
$80.43\%$ on GCN) and on GCN and GAT at $r=0.01$, while HERALD takes GIN and
H2GCN at $r=0.01$ and is best on all four backbones at $r=0.03$ (for example,
$82.99\%$ against $82.01\%$ on GCN). At the extreme budget $r=0.0001$, HERALD's
feature selection packs more nodes into the same budget and produces a large
lead on GCN, GAT, and GIN (for example, $62.12\%$ against BONSAI's $53.79\%$ on
GCN); its H2GCN accuracy, however, drops to $36.72\%$, below both BONSAI
($55.33\%$) and GDEM ($58.22\%$). This H2GCN weakness is confined to the
tightest budget on homophilic graphs: it does not appear at $r \ge 0.005$ or on
any heterophilic dataset.

\begin{table}[htbp]
\caption{Node classification accuracy (\%) averaged across homophilous datasets (Cora, CiteSeer, PubMed).
Best condensed result per GNN column and compression ratio highlighted.}
\label{tab:hom}
\centering
\setlength{\tabcolsep}{4pt}
\begin{tabular}{@{}llcccc@{}}
\toprule
\textbf{Condenser} & \textbf{$r$}
  & \textbf{GCN} & \textbf{GAT} & \textbf{GIN} & \textbf{H2GCN} \\
\midrule
\multirow{4}{*}{Random}
  & 0.0001 & $22.42{\scriptstyle\pm2.03}$ & $25.11{\scriptstyle\pm3.99}$ & $24.98{\scriptstyle\pm3.65}$ & $22.64{\scriptstyle\pm1.70}$ \\
  & 0.005  & $33.03{\scriptstyle\pm2.57}$ & $34.83{\scriptstyle\pm4.75}$ & $38.56{\scriptstyle\pm3.34}$ & $36.12{\scriptstyle\pm1.62}$ \\
  & 0.01   & $42.56{\scriptstyle\pm0.59}$ & $41.20{\scriptstyle\pm3.16}$ & $44.15{\scriptstyle\pm1.72}$ & $39.60{\scriptstyle\pm1.17}$ \\
  & 0.03   & $47.44{\scriptstyle\pm0.55}$ & $47.45{\scriptstyle\pm3.06}$ & $49.49{\scriptstyle\pm2.57}$ & $48.62{\scriptstyle\pm1.50}$ \\
\midrule
\multirow{4}{*}{Herding}
  & 0.0001 & $37.46{\scriptstyle\pm2.75}$ & $38.42{\scriptstyle\pm4.25}$ & $48.18{\scriptstyle\pm2.46}$ & $42.26{\scriptstyle\pm1.26}$ \\
  & 0.005  & $49.52{\scriptstyle\pm2.17}$ & $50.94{\scriptstyle\pm4.21}$ & $57.43{\scriptstyle\pm1.57}$ & $52.81{\scriptstyle\pm0.76}$ \\
  & 0.01   & $66.77{\scriptstyle\pm0.97}$ & $65.70{\scriptstyle\pm1.11}$ & $68.94{\scriptstyle\pm0.80}$ & $65.94{\scriptstyle\pm0.76}$ \\
  & 0.03   & $75.91{\scriptstyle\pm0.65}$ & $76.05{\scriptstyle\pm1.13}$ & $75.88{\scriptstyle\pm0.72}$ & $73.91{\scriptstyle\pm1.14}$ \\
\midrule
\multirow{4}{*}{BONSAI}
  & 0.0001 & $53.79{\scriptstyle\pm0.88}$ & $55.50{\scriptstyle\pm2.91}$ & $60.65{\scriptstyle\pm0.80}$ & $55.33{\scriptstyle\pm1.25}$ \\
  & 0.005  & \cellcolor{bestcol}$81.59{\scriptstyle\pm0.30}$ & \cellcolor{bestcol}$79.94{\scriptstyle\pm0.65}$ & \cellcolor{bestcol}$80.83{\scriptstyle\pm0.52}$ & \cellcolor{bestcol}$77.66{\scriptstyle\pm0.55}$ \\
  & 0.01   & \cellcolor{bestcol}$81.83{\scriptstyle\pm0.28}$ & \cellcolor{bestcol}$80.58{\scriptstyle\pm0.73}$ & $80.88{\scriptstyle\pm0.43}$ & $78.07{\scriptstyle\pm0.52}$ \\
  & 0.03   & $82.01{\scriptstyle\pm0.27}$ & $80.89{\scriptstyle\pm0.72}$ & $81.82{\scriptstyle\pm0.41}$ & $78.61{\scriptstyle\pm0.53}$ \\
\midrule
\multirow{4}{*}{GDEM}
  & 0.0001 & 25.39 $\pm$ 2.51 & 26.96 $\pm$ 4.31 & 58.56 $\pm$ 1.74 & \cellcolor{bestcol}58.22 $\pm$ 0.72 \\
  & 0.0050 & 35.27 $\pm$ 2.39 & 35.14 $\pm$ 2.30 & 50.90 $\pm$ 6.15 & 67.73 $\pm$ 0.49 \\
  & 0.0100 & 33.15 $\pm$ 1.82 & 33.89 $\pm$ 1.85 & 47.76 $\pm$ 5.55 & 68.12 $\pm$ 0.49 \\
  & 0.0300 & 32.96 $\pm$ 1.36 & 33.52 $\pm$ 2.44 & 31.30 $\pm$ 5.01 & 73.69 $\pm$ 0.55 \\
\midrule
\multirow{4}{*}{\textbf{HERALD}}
  & 0.0001 & \cellcolor{bestcol}$62.12{\scriptstyle\pm0.57}$ & \cellcolor{bestcol}$62.35{\scriptstyle\pm1.69}$ & \cellcolor{bestcol}$69.23{\scriptstyle\pm0.60}$ & $36.72{\scriptstyle\pm5.09}$ \\
  & 0.005  & $80.43{\scriptstyle\pm0.33}$ & $79.41{\scriptstyle\pm0.50}$ & $79.70{\scriptstyle\pm0.83}$ & $76.15{\scriptstyle\pm0.73}$ \\
  & 0.01   & $81.58{\scriptstyle\pm0.33}$ & $80.48{\scriptstyle\pm0.66}$ & \cellcolor{bestcol}$80.95{\scriptstyle\pm0.86}$ & \cellcolor{bestcol}$79.08{\scriptstyle\pm0.66}$ \\
  & 0.03   & \cellcolor{bestcol}$82.99{\scriptstyle\pm0.31}$ & \cellcolor{bestcol}$81.86{\scriptstyle\pm0.66}$ & \cellcolor{bestcol}$82.62{\scriptstyle\pm0.85}$ & \cellcolor{bestcol}$81.28{\scriptstyle\pm0.65}$ \\
\bottomrule
\end{tabular}
\end{table}

\subsection{Heterophilic Datasets}
\label{sec:results_hetero_main}

On the heterophilic datasets (Table~\ref{tab:het}), HERALD is the best condensed
method in fifteen of the sixteen cells; BONSAI is ahead only on GCN at
$r=0.0001$ ($35.13\%$ against $34.12\%$). The margins over the next-best
condenser are substantial and largest on H2GCN: at $r=0.03$, HERALD reaches
$51.36\%$ (H2GCN), $38.59\%$ (GCN), $36.89\%$ (GAT), and $33.84\%$ (GIN),
against BONSAI's $48.62\%$, $36.52\%$, $35.86\%$, and $31.96\%$. GDEM, which is
competitive on homophilic H2GCN, does not carry over to these datasets. The
advantage holds across all four budgets, which indicates that the
heterophily-adaptive scoring is the main source of HERALD's aggregate gains.

\begin{table}[htbp]
\caption{Node classification accuracy (\%) averaged across heterophilous datasets (Roman empire, Amazon ratings, Chameleon, Squirrel).
Best condensed result per GNN column and compression ratio highlighted.}
\label{tab:het}
\centering
\setlength{\tabcolsep}{4pt}
\begin{tabular}{@{}llcccc@{}}
\toprule
\textbf{Condenser} & \textbf{$r$}
  & \textbf{GCN} & \textbf{GAT} & \textbf{GIN} & \textbf{H2GCN} \\
\midrule
\multirow{4}{*}{Random}
  & 0.0001 & 19.07 $\scriptstyle\pm$ 2.72 & 19.62 $\scriptstyle\pm$ 3.10 & 20.66 $\scriptstyle\pm$ 2.91 & 24.24 $\scriptstyle\pm$ 2.02 \\
  & 0.0050 & 23.11 $\scriptstyle\pm$ 1.76 & 23.37 $\scriptstyle\pm$ 2.88 & 22.06 $\scriptstyle\pm$ 2.04 & 31.46 $\scriptstyle\pm$ 1.37 \\
  & 0.0100 & 26.85 $\scriptstyle\pm$ 1.35 & 26.38 $\scriptstyle\pm$ 2.09 & 26.67 $\scriptstyle\pm$ 1.96 & 36.80 $\scriptstyle\pm$ 1.11 \\
  & 0.0300 & 29.42 $\scriptstyle\pm$ 0.87 & 28.85 $\scriptstyle\pm$ 1.57 & 28.28 $\scriptstyle\pm$ 1.71 & 37.92 $\scriptstyle\pm$ 1.27 \\
\midrule
\multirow{4}{*}{Herding}
  & 0.0001 & 26.62 $\scriptstyle\pm$ 1.59 & 24.07 $\scriptstyle\pm$ 2.15 & 24.69 $\scriptstyle\pm$ 2.50 & 32.33 $\scriptstyle\pm$ 0.91 \\
  & 0.0050 & 30.79 $\scriptstyle\pm$ 0.63 & 26.41 $\scriptstyle\pm$ 4.16 & 25.22 $\scriptstyle\pm$ 2.20 & 37.97 $\scriptstyle\pm$ 1.33 \\
  & 0.0100 & 32.07 $\scriptstyle\pm$ 0.69 & 29.26 $\scriptstyle\pm$ 1.44 & 26.82 $\scriptstyle\pm$ 1.56 & 39.72 $\scriptstyle\pm$ 0.83 \\
  & 0.0300 & 33.63 $\scriptstyle\pm$ 0.67 & 31.90 $\scriptstyle\pm$ 1.76 & 29.85 $\scriptstyle\pm$ 1.46 & 43.44 $\scriptstyle\pm$ 0.68 \\
\midrule
\multirow{4}{*}{BONSAI}
  & 0.0001 & \cellcolor{bestcol}35.13 $\scriptstyle\pm$ 0.76 & 33.85 $\scriptstyle\pm$ 0.96 & 29.40 $\scriptstyle\pm$ 1.35 & 46.20 $\pm$ 0.94 \\
  & 0.0050 & 36.17 $\scriptstyle\pm$ 0.66 & 35.39 $\scriptstyle\pm$ 1.01 & 31.28 $\scriptstyle\pm$ 1.24 & 47.91 $\scriptstyle\pm$ 0.68 \\
  & 0.0100 & 36.29 $\scriptstyle\pm$ 0.94 & 35.91 $\scriptstyle\pm$ 0.97 & 31.42 $\scriptstyle\pm$ 1.09 & 48.65 $\scriptstyle\pm$ 0.57 \\
  & 0.0300 & 36.52 $\scriptstyle\pm$ 0.94 & 35.86 $\scriptstyle\pm$ 0.81 & 31.96 $\scriptstyle\pm$ 0.99 & 48.62 $\scriptstyle\pm$ 0.71 \\
\midrule
\multirow{4}{*}{GDEM}
  & 0.0001 &
    18.86 $\scriptstyle\pm$ 2.08 &
    18.94 $\scriptstyle\pm$ 2.75 &
    26.51 $\scriptstyle\pm$ 1.69 &
    34.19 $\scriptstyle\pm$ 0.66 \\
  & 0.0050 &
    24.13 $\scriptstyle\pm$ 1.99 &
    23.02 $\scriptstyle\pm$ 2.45 &
    22.29 $\scriptstyle\pm$ 2.23 &
    38.44 $\scriptstyle\pm$ 0.79 \\
  & 0.0100 &
    24.88 $\scriptstyle\pm$ 1.10 &
    23.13 $\scriptstyle\pm$ 1.87 &
    19.91 $\scriptstyle\pm$ 3.02 &
    38.68 $\scriptstyle\pm$ 0.78 \\
  & 0.0300 &
    24.21 $\scriptstyle\pm$ 1.28 &
    24.07 $\scriptstyle\pm$ 2.15 &
    17.51 $\scriptstyle\pm$ 2.58 &
    38.94 $\scriptstyle\pm$ 0.92 \\
\midrule
\multirow{4}{*}{\textbf{HERALD}}
  & 0.0001 &
    34.12 $\scriptstyle\pm$ 0.83 &
    \cellcolor{bestcol}33.90 $\scriptstyle\pm$ 1.00 &
    \cellcolor{bestcol}29.71 $\scriptstyle\pm$ 2.11 &
    \cellcolor{bestcol}47.02 $\scriptstyle\pm$ 0.86 \\
  & 0.0050 &
    \cellcolor{bestcol}38.40 $\scriptstyle\pm$ 1.04 &
    \cellcolor{bestcol}36.66 $\scriptstyle\pm$ 1.07 &
    \cellcolor{bestcol}33.13 $\scriptstyle\pm$ 2.13 &
    \cellcolor{bestcol}51.13 $\scriptstyle\pm$ 0.75 \\
  & 0.0100 &
    \cellcolor{bestcol}38.60 $\scriptstyle\pm$ 1.02 &
    \cellcolor{bestcol}36.87 $\scriptstyle\pm$ 0.82 &
    \cellcolor{bestcol}33.83 $\scriptstyle\pm$ 2.10 &
    \cellcolor{bestcol}51.29 $\scriptstyle\pm$ 0.81 \\
  & 0.0300 &
    \cellcolor{bestcol}38.59 $\scriptstyle\pm$ 1.01 &
    \cellcolor{bestcol}36.89 $\scriptstyle\pm$ 0.80 &
    \cellcolor{bestcol}33.84 $\scriptstyle\pm$ 2.10 &
    \cellcolor{bestcol}51.36 $\scriptstyle\pm$ 0.81 \\
\bottomrule
\end{tabular}
\end{table}

\subsection{Reproducibility} \label{sec:reproducibility}

The code for the experimental procedures can be found at
\url{https://github.com/SujanChakraborty/HERALD}.

\section{Discussion}
\label{sec:discussion}

The experimental results demonstrate that HERALD consistently produces
high-quality condensed graphs across diverse graph structures while remaining
entirely gradient-free. In this section, we discuss the key observations,
analyze the contribution of the proposed components, and highlight the
limitations of the current approach.

\subsection{Performance across different graph structures}

A notable observation is that HERALD performs consistently well on both
homophilic and heterophilic datasets. Existing graph condensation methods are
often designed with one graph regime in mind. Methods relying primarily on
feature similarity or class prototypes generally perform well on homophilic
graphs but tend to deteriorate under heterophily, where neighboring nodes
frequently belong to different classes. Conversely, approaches emphasizing
structural diversity may sacrifice representative class information on highly
homophilic datasets.

HERALD avoids this trade-off by adapting the node-selection strategy according
to the graph heterophily ratio. For highly homophilic graphs, the adaptive
weighting mechanism naturally assigns greater importance to prototype
representativeness, allowing representative class exemplars to dominate the
selection process. As heterophily increases, the weighting gradually shifts
towards boundary preservation and structural diversity, enabling the condensed
graph to retain informative cross-class interactions that are critical for
classification. This adaptive behavior allows a single condensation strategy to
remain effective across a broad spectrum of graph structures.

\subsection{Effect of adaptive feature selection}

Feature storage often dominates the memory budget of attributed graphs,
particularly for high-dimensional datasets such as CiteSeer, Chameleon, and
Squirrel. Instead of retaining the original feature space, HERALD performs
adaptive feature selection before graph construction.

Unlike conventional feature selection techniques that consider only individual
node attributes, the proposed multi-hop Fisher criterion incorporates
information propagated through multiple graph neighborhoods. Consequently, the
selected features preserve both discriminative node attributes and structural
context. Since only the retained features contribute to the storage budget,
HERALD is able to allocate a larger fraction of the available budget to storing
additional representative nodes and edges, leading to improved graph fidelity
without exceeding the predefined storage constraint.

\subsection{Importance of boundary and diversity preservation}

Another important observation is that preserving only class prototypes is
insufficient for graph condensation. Although prototype nodes capture the
central characteristics of each class, they often fail to represent difficult
decision boundaries that determine classifier performance.

HERALD addresses this issue by explicitly combining three complementary node
properties: prototype representativeness, boundary importance, and local
structural diversity measured through Local Intrinsic Dimensionality (LID).
Boundary nodes preserve informative class transitions, while LID identifies
regions with locally complex topology that would otherwise be underrepresented.
The resulting condensed graphs therefore contain both representative examples
and structurally informative samples, improving generalization across different
GNN architectures.

\subsection{Role of the graph construction strategy}

The graph construction stage also contributes substantially to the observed
performance improvements. Instead of directly connecting selected nodes or
learning synthetic graph structures through optimization, HERALD incrementally
builds the condensed graph using breadth-first expansion while explicitly
respecting the available storage budget.

The subsequent Personalized PageRank refinement removes redundant nodes and
edges while maintaining global connectivity. This combination enables HERALD to
retain important local neighborhoods without introducing unnecessary graph
complexity. 

\subsection{Gradient-free}

Unlike optimization-based graph condensation methods that require repeated
backpropagation through graph neural networks, HERALD is entirely gradient-free.
All stages consist of analytical scoring, feature ranking, graph traversal, and
PageRank computation, eliminating expensive bilevel optimization procedures.


\subsection{Limitations}

Although HERALD consistently achieves strong performance, several limitations
remain. First, the current framework assumes static attributed graphs and does
not explicitly address dynamic or temporal graph settings where node features
and connectivity evolve over time. Second, the adaptive weighting mechanism is
driven by a global heterophily estimate, which may not fully capture local
variations in graph structure. Future work could investigate locally adaptive
weighting strategies that vary across different graph regions.

A further limitation appears at the most aggressive compression ratio: on the
homophilic citation networks at $r=0.0001$, HERALD's accuracy with the
heterophily-aware H2GCN backbone drops below that of BONSAI and GDEM
(Table~\ref{tab:hom}), even though it leads on the other three backbones at the
same budget. The effect is confined to this single budget-architecture
combination and does not appear at larger budgets or on heterophilic data, but
it indicates that the feature and node selection can trade off poorly with
H2GCN's ego-neighbour separation when only a handful of nodes are retained.

The current implementation also relies on manually selected hyperparameters for
BFS expansion depth and feature-selection weighting. Although these parameters
remain stable across all evaluated datasets, automatically learning them from
graph statistics may further improve robustness. Finally, while HERALD focuses
on node classification, extending the framework to graph classification,
link prediction, continual graph learning, or heterogeneous graphs represents
an interesting direction for future research.

\section{Conclusion}
\label{sec:conclusion}

In this work, we presented \textbf{HERALD}, a gradient-free graph condensation framework designed to improve condensation quality across both homophilic and heterophilic graphs while preserving the computational efficiency of exemplar-based approaches. Unlike existing gradient-free methods that rely primarily on topology-driven node selection, HERALD incorporates graph heterophily directly into the condensation process through adaptive feature selection and heterophily-aware node scoring. By jointly considering prototype representativeness, decision-boundary importance, and local structural diversity, the proposed framework selects condensed nodes that better preserve both representative class information and informative structural patterns. Furthermore, the proposed feature-selection strategy allocates the available storage budget more effectively by retaining only discriminative and frequently active features, allowing a larger portion of the budget to be devoted to representative nodes and edges.

Extensive experiments on homophilic and heterophilic benchmark datasets demonstrate that HERALD consistently produces high-quality condensed graphs across multiple GNN architectures under a wide range of storage budgets. In particular, HERALD achieves the largest improvements on heterophilic datasets, where conventional topology-based selection strategies are less effective, while remaining competitive on strongly homophilic citation networks. Additional large-scale experiments on Reddit further show that the proposed framework remains practical beyond medium-sized benchmarks, producing competitive condensed graphs without requiring expensive bilevel optimization or repeated GNN training during condensation.

Although HERALD introduces additional preprocessing cost compared with simpler coreset methods due to the computation of Local Intrinsic Dimensionality and adaptive node scoring, condensation is performed only once and can be amortized over repeated downstream model training. Future work will focus on accelerating these stages using approximate nearest-neighbour search and scalable graph traversal techniques, as well as extending the framework to dynamic graphs, heterogeneous graphs, graph-level learning tasks, and locally adaptive heterophily estimation. These directions have the potential to further improve both the scalability and generality of gradient-free graph condensation.
\bibliography{sn-bibliography}

\begin{appendices}

\section{Notation Summary}
\label{appendix:notation}

For convenience, Tables~\ref{tab:notation_graph}--\ref{tab:notation_algorithm}
summarize all mathematical symbols used throughout the paper.

\subsection{Graph and Problem Formulation}

\begin{longtable}{ll}
\caption{Notation used in graph formulation and problem definition.}
\label{tab:notation_graph}\\

\toprule
\textbf{Symbol} & \textbf{Description} \\
\midrule
\endfirsthead

\multicolumn{2}{c}{{\bfseries Table \thetable\ Continued from previous page}}\\
\toprule
\textbf{Symbol} & \textbf{Description} \\
\midrule
\endhead

\midrule
\multicolumn{2}{r}{{Continued on next page}}\\
\endfoot

\bottomrule
\endlastfoot

$G=(V,E,\mathbf{X},\mathbf{y})$ & Original attributed graph \\
$V$ & Node set \\
$E$ & Edge set \\
$N=|V|$ & Number of nodes \\
$|E|$ & Number of edges \\
$\mathbf{X}\in\mathbb{R}^{N\times F}$ & Node feature matrix \\
$\mathbf{x}_v$ & Feature vector of node $v$ \\
$F$ & Number of input features \\
$\mathbf{y}$ & Node labels \\
$C$ & Number of classes \\
$\Vtr$ & Training node set \\
$\Etr$ & Training edge set \\
$\Vtr^c$ & Training nodes belonging to class $c$ \\
$G_c=(V_c,E_c,\mathbf{X}_c,\mathbf{y}_c)$ & Condensed graph \\
$V_c$ & Condensed node set \\
$E_c$ & Condensed edge set \\
$r$ & Target compression/storage ratio \\
$\mathcal{C}(G)$ & Storage cost of graph $G$ \\
$\mathcal{B}$ & Storage budget \\
$m_f$ & Feature-storage multiplier \\
$f_v$ & Effective feature length of node $v$ \\

\end{longtable}
\subsection{HERALD Methodology}

\begin{longtable}{l p{0.72\textwidth}}
\caption{Notation used in the HERALD framework.}
\label{tab:notation_method}\\

\toprule
\textbf{Symbol} & \textbf{Description} \\
\midrule
\endfirsthead

\multicolumn{2}{c}{\textbf{Table \thetable\ (continued)}}\\
\toprule
\textbf{Symbol} & \textbf{Description} \\
\midrule
\endhead

\midrule
\multicolumn{2}{r}{Continued on next page}\\
\endfoot

\bottomrule
\endlastfoot

$h$ & Graph heterophily ratio \\

$t$ & Adaptive transition variable \\

$\sigma(\cdot)$ & Sigmoid activation \\

$\alpha,\beta,\gamma$ & Adaptive weights for prototype, boundary and diversity scores \\

$\alpha_0,\beta_0,\gamma_0$ & Initial adaptive weights \\

$\mathbf{A}$ & Adjacency matrix \\

$\mathbf{D}$ & Degree matrix \\

$\mathbf{X}^{(k)}$ & $k$-hop propagated feature matrix \\

$\phi_k(j)$ & Fisher score of feature $j$ at hop $k$ \\

$\hat{\phi}_k(j)$ & Normalized Fisher score \\

$\phi(j)$ & Multi-hop Fisher score \\

$\bar{\phi}(j)$ & Normalized discriminability score \\

$\rho(j)$ & Activation density of feature $j$ \\

$\bar{\rho}(j)$ & Normalized activation density \\

$\varphi(j)$ & Final feature-selection score \\

$\mathcal{F}$ & Selected feature index set \\

$k^{*}$ & Number of retained features \\

$\tilde{\mathbf{X}}$ & Reduced feature matrix \\

$\epsilon$ & Numerical stability constant \\

$\mu_{c,k}^{(j)}$ & Mean feature value of class $c$ at hop $k$ \\

$\sigma_{c,k}^{2,(j)}$ & Variance of class $c$ at hop $k$ \\

$\mu_k^{(j)}$ & Overall feature mean at hop $k$ \\

$n_c$ & Number of training samples in class $c$ \\

$\hat{\mathbf{x}}_v$ & Reduced feature vector of node $v$ \\

$\bar{\mathbf{x}}_c$ & Mean feature vector of class $c$ \\

$\hat{\bm{\mu}}_c$ & Normalized class centroid \\

$s_v^{(p)}$ & Prototype score \\

$s_v^{(b)}$ & Boundary score \\

$s_v^{(l)}$ & Local Intrinsic Dimensionality (LID) score \\

$s_v$ & Combined node importance score \\

$\Nc(v)$ & One-hop neighborhood of node $v$ \\

$\Nc^{(l)}(v)$ & $l$-hop neighborhood of node $v$ \\

$\operatorname{LID}(v)$ & Local Intrinsic Dimensionality \\

$d_j$ & Distance to the $j^{\text{th}}$ nearest neighbour \\

$d_k$ & Largest $k$-nearest-neighbour distance \\

$k$ & Number of nearest neighbours used in LID \\

\end{longtable}
\subsection{Algorithm-specific Symbols}

\begin{longtable}{l p{0.72\textwidth}}
\caption{Notation appearing specifically in Algorithm~\ref{alg:herald_part1} and Algorithm~\ref{alg:herald_part2}.}
\label{tab:notation_algorithm}\\

\toprule
\textbf{Symbol} & \textbf{Description} \\
\midrule
\endfirsthead

\multicolumn{2}{c}{\textbf{Table \thetable\ (continued)}}\\
\toprule
\textbf{Symbol} & \textbf{Description} \\
\midrule
\endhead

\midrule
\multicolumn{2}{r}{Continued on next page}\\
\endfoot

\bottomrule
\endlastfoot

$\mathcal{T}(r)$ & BFS expansion tree rooted at node $r$ \\

$L$ & BFS expansion depth \\

$R$ & Set of selected root (landmark) nodes \\

$\Delta\mathcal{C}$ & Incremental storage cost \\

$n^{*}$ & Target condensed graph size after pruning \\

$\bm{\pi}$ & Personalized PageRank vector \\

$\bm{\pi}_0$ & Initial PageRank personalization vector \\

$\alpha_{\mathrm{pr}}$ & PageRank damping factor \\

$\hat{\mathbf{A}}$ & Row-normalized adjacency matrix \\

$n_c^{*}$ & Desired number of condensed nodes for class $c$ \\

$V_c^{\mathrm{tr}}$ & Training nodes retained in the condensed graph \\

$b$ & Batch size used during LID computation \\

$\BigO(\cdot)$ & Asymptotic time complexity \\

\midrule

\texttt{ogsize} & Target graph size obtained from exact-budget BFS \\

\texttt{nofail} & Counter for consecutive unsuccessful BFS expansions \\

\texttt{count(c)} & Current number of selected nodes belonging to class $c$ \\

$\operatorname{top}\text{-}k$ & Returns the indices of the $k$ largest values \\

Min-Max normalization & Linear normalization of scores into $[0,1]$ \\

\end{longtable}

\section{Theoretical Analysis}
\label{sec:theory}

This section establishes several theoretical properties of HERALD. Rather than
analyzing the convergence of inherited components such as Personalized PageRank,
we focus on the proposed adaptive scoring mechanism and assembly pipeline
themselves. We show that the heterophily-aware weighting scheme is robust to
perturbations in the estimated heterophily, that the feature-selection
criterion adaptively suppresses higher-order aggregation as heterophily
increases, that prototype-dominant node selection provably preserves
inter-class separation in the condensed graph, and that the score-weighted
coverage objective underlying the BFS expansion stage is monotone and
submodular.

\subsection{Robustness to heterophily estimation}

The heterophily ratio is estimated directly from the observed graph and may
therefore contain small statistical fluctuations. The following result shows
that such perturbations cannot significantly alter the node scores.

\begin{theorem}
\label{thm:lipschitz}

Fix the selected feature set $\mathcal{F}$ and hence the corresponding
component scores
$s_v^{(p)},s_v^{(b)},s_v^{(l)}$.
Assume
\[
0\le s_v^{(p)},s_v^{(b)},s_v^{(l)}\le1.
\]
For two heterophily values $h,\hat h\in[0,1]$, let
\[
|h-\hat h|\le\varepsilon.
\]
Then there exists a constant $L>0$, independent of $v$, such that
\[
|s_v(h)-s_v(\hat h)|
\le L\varepsilon.
\]
Thus, conditional on a fixed feature representation, the adaptive node
score is Lipschitz continuous with respect to the estimated heterophily.

\end{theorem}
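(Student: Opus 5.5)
The plan is to treat $s_v(h)=\alpha(h)s_v^{(p)}+\beta(h)s_v^{(b)}+\gamma(h)s_v^{(l)}$ as a composition of smooth maps in $h$, with the component scores held fixed. We then bound the derivative uniformly on $[0,1]$ and apply the mean value theorem. The chain is $h\mapsto t=\sigma(8(h-0.4))$, then $t\mapsto(\tilde\alpha,\tilde\beta,\tilde\gamma)$ via Eq.~\eqref{eq:weights_raw}, then normalisation via Eq.~\eqref{eq:normalise}. Each of these maps is $C^1$.

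\textbf{Step 1: the sigmoid.} Since $\sigma'=\sigma(1-\sigma)\le 1/4$, we get $|t(h)-t(\hat h)|\le 2|h-\hat h|$.

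\textbf{Step 2: the raw weights.} The raw weights are affine in $t$:
\[
\tilde\alpha=\alpha_0+\delta(1-t),\qquad \tilde\beta=\beta_0 t,\qquad \tilde\gamma=\gamma_0(0.5+0.5t),\qquad \delta=1-\alpha_0-\beta_0-\gamma_0 .
\]
Their derivatives in $t$ are bounded by $|\delta|$, $\beta_0$ and $0.5\gamma_0$. With the stated base weights $(0.4,0.4,0.2)$ we have $\delta=0$, but the argument should keep $\delta$ general.

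\textbf{Step 3: the normaliser (the main obstacle).} The one genuinely nontrivial point is that the normaliser $S(t)=\tilde\alpha+\tilde\beta+\tilde\gamma$ must be bounded away from zero, otherwise the quotient map is not Lipschitz. We have
\[
S(t)=\alpha_0+\delta+0.5\gamma_0+t\,(\beta_0+0.5\gamma_0-\delta),
\]
which is affine in $t$. Its minimum over $t\in[0,1]$ is therefore attained at an endpoint:
\[
S_{\min}=\min\{1-\beta_0-0.5\gamma_0,\ \alpha_0+\beta_0+\gamma_0\}.
\]
Under the natural assumptions that the base weights are nonnegative and sum to at most $1$, with $\gamma_0>0$ (as in the paper's choice), this gives $S_{\min}\ge 0.5\gamma_0>0$. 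For $(0.4,0.4,0.2)$ the normaliser is $S\equiv 0.8+0.2t\ge 0.8$. I would state this positivity assumption explicitly, since $\tilde\gamma\ge 0.5\gamma_0$ is exactly the lower bound the paper emphasises.

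\textbf{Step 4: differentiating the normalised weights.} Write $w=\tilde w/S$. Then
\[
\frac{dw}{dt}=\frac{\tilde w'S-\tilde w S'}{S^2}.
\]
Since $0\le\tilde w\le S_{\max}$ and $|\tilde w'|,|S'|\le M:=|\delta|+\beta_0+0.5\gamma_0$, each normalised weight has $t$-derivative at most $2MS_{\max}/S_{\min}^2$. A slightly sharper route: $\sum_w \tilde w'\,s^{(w)}-S'\sum_w w\,s^{(w)}$ is bounded by $2M$ because all scores lie in $[0,1]$. This yields $|ds_v/dt|\le 2M/S_{\min}$.

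\textbf{Step 5: combining.} By the mean value theorem and the triangle inequality,
\[
|s_v(h)-s_v(\hat h)|\le \sum_{w\in\{\alpha,\beta,\gamma\}}|w(h)-w(\hat h)|\,|s_v^{(w)}|\le L\varepsilon,
\]
with $L=2\cdot 2M/S_{\min}$ (or the cruder product of the bounds above). This constant depends only on the base weights and the steepness $8$, not on $v$.

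The final min–max renormalisation of $\mathbf{s}$ is not part of $s_v$ as defined in Eq.~\eqref{eq:combined}, so I would exclude it and note this. For concreteness, with $(0.4,0.4,0.2)$ one can compute
\[
\frac{d}{dt}\,\frac{(0.4,\,0.4t,\,0.1+0.1t)}{0.8+0.2t}
\]
directly, which gives an explicit small $L$ (roughly $L\le 2\cdot 0.5/0.8\cdot 2$).
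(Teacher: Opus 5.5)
Your argument is correct and follows the paper's proof, with the skipped steps filled in: the paper likewise bounds $|t'(h)|\le 2$, treats each weight as affine in $t$ divided by a strictly positive normaliser, and uses $s_v^{(p)},s_v^{(b)},s_v^{(l)}\in[0,1]$ to pass the weight perturbation to $s_v$, but it only asserts the normaliser's positivity and the weights' Lipschitz constant $C$, which you verify via the endpoint analysis of $S(t)$ and the quotient-rule estimate. There is one arithmetic slip in your concrete instance: for $(\alpha_0,\beta_0,\gamma_0)=(0.4,0.4,0.2)$ the normaliser is $S(t)=0.4+0.4t+(0.1+0.1t)=0.5+0.5t$, not $0.8+0.2t$ (your own formula already gives $S_{\min}=\min\{0.5,\,1\}=0.5$), so the score is $s_v=0.8\,(s_v^{(p)}+t\,s_v^{(b)})/(1+t)+0.2\,s_v^{(l)}$ with $|ds_v/dt|\le 0.8$, which gives $L=1.6$ (your general bound $4M/S_{\min}$ gives $4$) rather than the $2.5$ you quote; the existence claim is unaffected.
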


\begin{proof}
The derivative of the sigmoid transition satisfies
\[
t'(h)
=
8t(h)(1-t(h)).
\]
Since
$t(h)(1-t(h))\le1/4$,
\[
|t'(h)|\le2.
\]
Each adaptive weight is an affine transformation of $t(h)$ followed by
normalization by a strictly positive quantity.
Therefore each weight is Lipschitz continuous with respect to $h$, implying
\[
|\alpha(h)-\alpha(\hat h)|,\;
|\beta(h)-\beta(\hat h)|,\;
|\gamma(h)-\gamma(\hat h)|
\le
C\varepsilon
\]
for some constant $C$.
Since all component scores lie in $[0,1]$,
\[
|s_v(h)-s_v(\hat h)|
\le
(|\Delta\alpha|
+|\Delta\beta|
+|\Delta\gamma|)
\le
L\varepsilon,
\]
where $L=3C$ is independent of the node.
Hence small errors in heterophily estimation produce proportionally small
changes in the final node scores.
\end{proof}

\subsection{Centroid Fidelity for Prototype-Dominant Selection}
\label{app:centroid-fidelity}

The prototype component of HERALD is designed to preserve representative
examples of each class. We show that selecting the highest-scoring prototype
nodes yields a condensed centroid that remains close to the original class
centroid. This establishes that HERALD preserves the geometric structure of
homophilic classes whenever prototype similarity dominates the adaptive score.

\begin{theorem}[Centroid Fidelity]
\label{thm:centroid-fidelity}

Fix a class $c$ with normalized centroid
$\hat{\mu}_c$ defined in Eq.~(\ref{eq:centroid}).
Let
$S_c\subseteq V_{tr}^c$
be the set consisting of the top-$m$ nodes ranked solely according to the
prototype score
$s_v^{(p)}$.
Define

\[
r_m
=
\min_{v\in S_c}
s_v^{(p)},
\]

and

\[
\rho_m
=
\sqrt{2-2r_m}.
\]

Let

\[
\bar{x}_{S_c}
=
\frac1m
\sum_{v\in S_c}
\hat{\mathbf{x}}_v
\]

denote the empirical centroid of the selected nodes.

Then

\[
\|
\bar{x}_{S_c}
-
\hat{\mu}_c
\|
\le
\rho_m.
\]

\end{theorem}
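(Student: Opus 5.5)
The plan is to bound the distance of each selected unit vector to $\hat{\bm\mu}_c$ individually, and then pass to the average by convexity of the norm.

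\emph{Step 1 (pointwise bound).} Every $\hat{\mathbf{x}}_v$ is $\ell_2$-normalised, and $\hat{\bm\mu}_c$ is normalised by Eq.~\eqref{eq:centroid}. For each $v\in S_c$ I will expand the square:
\[
\|\hat{\mathbf{x}}_v-\hat{\bm\mu}_c\|^2
= \|\hat{\mathbf{x}}_v\|^2+\|\hat{\bm\mu}_c\|^2-2\hat{\mathbf{x}}_v^\top\hat{\bm\mu}_c
= 2-2s_v^{(p)}.
\]
This uses Eq.~\eqref{eq:proto} and the fact that $y_v=c$ for $v\in S_c\subseteq V_{tr}^c$. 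By the definition of $r_m$, every $v\in S_c$ has $s_v^{(p)}\ge r_m$. Hence $\|\hat{\mathbf{x}}_v-\hat{\bm\mu}_c\|\le\sqrt{2-2r_m}=\rho_m$. The quantity under the root is nonnegative because $r_m\le 1$ by Cauchy--Schwarz.

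\emph{Step 2 (averaging).} I will write
\[
\bar{x}_{S_c}-\hat{\bm\mu}_c=\frac1m\sum_{v\in S_c}\bigl(\hat{\mathbf{x}}_v-\hat{\bm\mu}_c\bigr),
\]
apply the triangle inequality, and use Step 1 for each term. This gives $\|\bar{x}_{S_c}-\hat{\bm\mu}_c\|\le\frac1m\sum_{v\in S_c}\rho_m=\rho_m$. Equivalently, the closed ball of radius $\rho_m$ around $\hat{\bm\mu}_c$ is convex and contains all selected points, so it also contains their mean.

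\emph{Expected obstacle.} The only delicate point is well-definedness. The argument needs $\tilde{\mathbf{x}}_v\neq 0$ for the selected nodes and $\bar{\mathbf{x}}_c\neq 0$, so that the normalisations exist. I will state these as implicit standing assumptions of Eq.~\eqref{eq:centroid}. I will also note that the bound does not use the fact that $S_c$ is the top-$m$ set; it holds for any subset of $V_{tr}^c$, with $r_m$ taken as that subset's minimum prototype score. The top-$m$ choice simply makes $r_m$ as large as possible, and hence $\rho_m$ as small as possible.
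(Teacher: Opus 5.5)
Your proposal is correct and is essentially the paper's proof. Like the paper, you use the unit-norm identity $\|\hat{\mathbf{x}}_v-\hat{\bm\mu}_c\|^2=2-2s_v^{(p)}$ together with $s_v^{(p)}\ge r_m$ to get the pointwise bound $\rho_m$, and then apply the triangle inequality to the averaged differences. Your added remarks are valid but not in the paper: that the normalisations require nonzero vectors, and that the bound holds for any subset of $V_{tr}^c$ rather than only the top-$m$ set.
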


\begin{proof}

Since both
$\hat{\mathbf{x}}_v$
and
$\hat{\mu}_c$
have unit norm,

\[
\|
\hat{\mathbf{x}}_v-\hat{\mu}_c
\|^2
=
2-2\hat{\mathbf{x}}_v^\top\hat{\mu}_c
=
2-2s_v^{(p)}.
\]

By definition of
$r_m$,

\[
s_v^{(p)}
\ge
r_m
\]

for every selected node.

Hence

\[
\|
\hat{\mathbf{x}}_v-\hat{\mu}_c
\|
\le
\rho_m.
\]

Using the triangle inequality,

\[
\begin{aligned}
\|
\bar{x}_{S_c}-\hat{\mu}_c
\|
&=
\left\|
\frac1m
\sum_{v\in S_c}
(\hat{\mathbf{x}}_v-\hat{\mu}_c)
\right\| \\
&\le
\frac1m
\sum_{v\in S_c}
\|
\hat{\mathbf{x}}_v-\hat{\mu}_c
\| \\
&\le
\frac1m
\cdot
m\rho_m
=
\rho_m.
\end{aligned}
\]

Therefore the empirical centroid of the selected prototype nodes lies within
distance
$\rho_m$
of the true class centroid.

\end{proof}

\begin{corollary}[Preservation of Class Separation]
\label{cor:class-separation}

Assume

\[
\|
\bar{x}_{S_c}
\|
>
0
\]

for every class.

Let

\[
\hat{\mu}_{S_c}
=
\frac{\bar{x}_{S_c}}
{\|
\bar{x}_{S_c}
\|}
\]

denote the normalized condensed centroid.

Then

\[
\|
\hat{\mu}_{S_c}
-
\hat{\mu}_c
\|
\le
2\rho_m.
\]

Furthermore, if

\[
\Delta
=
\min_{c\neq c'}
\|
\hat{\mu}_c
-
\hat{\mu}_{c'}
\|
\]

is the minimum inter-class centroid separation, then

\[
\|
\hat{\mu}_{S_c}
-
\hat{\mu}_{S_{c'}}
\|
\ge
\Delta
-
2
(
\rho_m^{(c)}
+
\rho_m^{(c')}
).
\]

\end{corollary}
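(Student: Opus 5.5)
The proof combines Theorem~\ref{thm:centroid-fidelity} with a standard estimate on how far normalization can move a vector. Write $\bar{x}=\bar{x}_{S_c}$ and $\mu=\hat{\mu}_c$, with $\|\mu\|=1$. Theorem~\ref{thm:centroid-fidelity} already gives $\|\bar{x}-\mu\|\le\rho_m$.

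First, insert $\bar{x}$ and apply the triangle inequality:
\[
\|\hat{\mu}_{S_c}-\mu\|\le\Bigl\|\tfrac{\bar{x}}{\|\bar{x}\|}-\bar{x}\Bigr\|+\|\bar{x}-\mu\|.
\]
The first term equals $\bigl|1-\|\bar{x}\|\bigr|$, because $\bar{x}/\|\bar{x}\|-\bar{x}=\bigl(1-\|\bar{x}\|\bigr)\,\bar{x}/\|\bar{x}\|$ and $\bar{x}/\|\bar{x}\|$ has unit norm. By the reverse triangle inequality and $\|\mu\|=1$, we have $\bigl|\|\bar{x}\|-1\bigr|=\bigl|\|\bar{x}\|-\|\mu\|\bigr|\le\|\bar{x}-\mu\|\le\rho_m$. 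Adding the two terms gives $\|\hat{\mu}_{S_c}-\mu\|\le2\rho_m$, which is the first claim. The hypothesis $\|\bar{x}_{S_c}\|>0$ is used only so that $\hat{\mu}_{S_c}$ is defined. No case split on whether $\|\bar{x}\|$ is above or below $1$ is needed, since the absolute value handles both.

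Second, for classes $c\neq c'$, apply the reverse triangle inequality:
\[
\|\hat{\mu}_{S_c}-\hat{\mu}_{S_{c'}}\|\ge\|\hat{\mu}_c-\hat{\mu}_{c'}\|-\|\hat{\mu}_{S_c}-\hat{\mu}_c\|-\|\hat{\mu}_{S_{c'}}-\hat{\mu}_{c'}\|.
\]
Bound the first term below by $\Delta$, using the definition of $\Delta$ as a minimum. Bound the other two terms by $2\rho_m^{(c)}$ and $2\rho_m^{(c')}$, using the first claim applied to each class with its own radius $\rho_m^{(c)}=\sqrt{2-2r_m^{(c)}}$. This gives $\|\hat{\mu}_{S_c}-\hat{\mu}_{S_{c'}}\|\ge\Delta-2\bigl(\rho_m^{(c)}+\rho_m^{(c')}\bigr)$.

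The only step that needs care is the normalization step. The bound $\bigl|1-\|\bar{x}\|\bigr|\le\rho_m$ depends on $\mu$ having unit norm, which holds because $\hat{\mu}_c$ is renormalized in Eq.~\eqref{eq:centroid}. Once that is checked, the argument is elementary. The inequality $\|\hat{\mu}_{S_c}-\hat{\mu}_c\|\le2\rho_m$ is not tight, but it is all the corollary requires.
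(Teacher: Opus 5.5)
Your proof is correct and follows the paper's route. The paper applies the normalization inequality $\bigl\|x/\|x\|-y/\|y\|\bigr\|\le 2\|x-y\|/\|y\|$ with $y=\hat{\mu}_c$, $\|y\|=1$, and then the triangle inequality; your step of inserting $\bar{x}_{S_c}$ is the standard proof of that inequality in the unit-norm case, which the paper cites without proof, and your second claim is argued the same way as the paper's.
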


\begin{proof}

The normalization inequality

\[
\left\|
\frac{x}{\|x\|}
-
\frac{y}{\|y\|}
\right\|
\le
2
\frac{\|x-y\|}{\|y\|}
\]

applied with
$y=\hat{\mu}_c$
and
$\|y\|=1$
gives

\[
\|
\hat{\mu}_{S_c}
-
\hat{\mu}_c
\|
\le
2
\|
\bar{x}_{S_c}
-
\hat{\mu}_c
\|.
\]

Applying Theorem~\ref{thm:centroid-fidelity} yields the first claim.

The second inequality follows immediately from the triangle inequality.

\end{proof}

\section{Ablation Study}
\label{sec:ablation}

We conduct a controlled ablation study to quantify the contribution of each major design component in HERALD. All experiments are performed at $r=0.005$, a non-saturating compression regime in which the node-selection criterion has sufficient freedom to produce structurally distinct condensed graphs across variants. We consider two datasets with complementary heterophily profiles: Amazon-ratings ($h\approx0.62$, moderately heterophilic) and Squirrel ($h\approx0.78$, strongly heterophilic). We evaluate each variant using two GNN architectures: GCN, representing a conventional homophily-biased architecture, and H2GCN, a heterophily-aware architecture that is particularly relevant to HERALD's design objective. All results are averaged over five random seeds.

\subsection{Ablation Variants}
\label{sec:ablation_variants}

Table~\ref{tab:ablation_design} defines the complete ablation matrix. Full HERALD is denoted by A and serves as the reference configuration. Each ablation modifies exactly one design axis while keeping the storage budget, BFS expansion, PPR pruning, and class rebalancing unchanged. Thus, differences in downstream accuracy can be attributed to the corresponding modified component rather than to changes in the shared graph-assembly pipeline.

\begin{table}[htbp]
\caption{Ablation variant definitions. Full HERALD is denoted by A. Each ablation modifies one design component while keeping the remaining HERALD pipeline unchanged.}
\label{tab:ablation_design}
\centering
\setlength{\tabcolsep}{5pt}
\begin{tabular}{@{}clp{8.2cm}@{}}
\toprule
\textbf{ID} & \textbf{Variant} & \textbf{What changes} \\
\midrule
A & HERALD (full) & Reference method using adaptive weighting, HERALD feature selection, and the complete prototype-boundary-LID node score. \\
\midrule
A1 & Fixed weights & Adaptive $(\alpha,\beta,\gamma)$ are replaced by static base values $(0.4,0.4,0.2)$, independent of the measured heterophily $h$. \\
\midrule
A2a & Prototype only & Only the prototype score is retained: $\alpha=1$, $\beta=\gamma=0$. \\
A2b & Boundary only & Only the boundary score is retained: $\beta=1$, $\alpha=\gamma=0$. \\
A2c & LID only & Only the LID diversity score is retained: $\gamma=1$, $\alpha=\beta=0$. \\
A2d & Prototype+boundary & Prototype and boundary scores are retained with $\alpha=\beta=0.5$, while the LID term is removed ($\gamma=0$). \\
\midrule
A3 & BONSAI features & HERALD node scoring is retained, but the feature set is selected using BONSAI's WL$+$DT procedure rather than HERALD's Fisher$\times$density criterion. This isolates the feature-selection contribution. \\
\midrule
A4 & Random ranking & HERALD feature selection, BFS expansion, PPR pruning, and class rebalancing are unchanged, but training nodes are ranked using a uniformly random permutation. This isolates the contribution of the node-ranking criterion. \\
\bottomrule
\end{tabular}
\end{table}

\subsection{Results}
\label{sec:ablation_results}

Table~\ref{tab:ablation_main} reports the mean test accuracy and standard deviation over five random seeds. The best result for each dataset--GNN pair is highlighted. The $\Delta$ values denote the signed difference relative to full HERALD; negative values indicate degradation, whereas positive values indicate an improvement over the reference configuration.

\begin{table*}[t]
\caption{Ablation study: mean test accuracy (\%) at $r=0.005$, averaged over five seeds ($\pm$ std). Best result per dataset--GNN pair is highlighted. $\Delta$ denotes the signed difference from full HERALD. SR can be found from~\ref{subsec:sr}}
\label{tab:ablation_main}
\scriptsize
\centering
\setlength{\tabcolsep}{4pt}
\begin{tabular}{@{}ccccc@{}}
\toprule
& \multicolumn{2}{c}{\textbf{Amazon-ratings} ($h{=}0.62$, SR$\approx$74--78 \%)}
& \multicolumn{2}{c}{\textbf{Squirrel} ($h{=}0.78$, SR$\approx$16\%)} \\
\cmidrule(lr){2-3}\cmidrule(lr){4-5}
\textbf{ID}
& \textbf{GCN} & \textbf{H2GCN}
& \textbf{GCN} & \textbf{H2GCN} \\
\midrule
A
& $46.15{\scriptstyle\pm0.14}$
& $50.35{\scriptstyle\pm0.43}$
& $26.82{\scriptstyle\pm1.11}$
& $35.93{\scriptstyle\pm0.79}$ \\
\midrule
A1
& $46.13{\scriptstyle\pm0.14}$;{\scriptsize$\Delta$--0.02}
& $50.39{\scriptstyle\pm0.44}$;{\scriptsize$\Delta$+0.04}
& $26.55{\scriptstyle\pm0.97}$;{\scriptsize$\Delta$--0.27}
& $36.25{\scriptstyle\pm0.77}$;{\scriptsize$\Delta$+0.32} \\
A2a
& $46.17{\scriptstyle\pm0.25}$;{\scriptsize$\Delta$+0.02}
& $50.42{\scriptstyle\pm0.76}$;{\scriptsize$\Delta$+0.07}
& $26.92{\scriptstyle\pm1.23}$;{\scriptsize$\Delta$+0.10}
& \cellcolor{bestcol}$36.54{\scriptstyle\pm0.89}$;{\scriptsize$\Delta$+0.61} \\
A2b
& $46.00{\scriptstyle\pm0.31}$;{\scriptsize$\Delta$--0.15}
& $50.33{\scriptstyle\pm0.32}$;{\scriptsize$\Delta$--0.02}
& \cellcolor{bestcol}$27.13{\scriptstyle\pm0.76}$;{\scriptsize$\Delta$+0.31}
& $35.98{\scriptstyle\pm1.02}$;{\scriptsize$\Delta$+0.05} \\
A2c
& $46.01{\scriptstyle\pm0.36}$;{\scriptsize$\Delta$--0.14}
& $50.12{\scriptstyle\pm0.50}$;{\scriptsize$\Delta$--0.23}
& $26.86{\scriptstyle\pm1.02}$;{\scriptsize$\Delta$+0.04}
& $35.97{\scriptstyle\pm0.89}$;{\scriptsize$\Delta$+0.04} \\
A2d
& $46.10{\scriptstyle\pm0.13}$;{\scriptsize$\Delta$--0.05}
& \cellcolor{bestcol}$50.47{\scriptstyle\pm0.40}$;{\scriptsize$\Delta$+0.12}
& $26.71{\scriptstyle\pm0.99}$;{\scriptsize$\Delta$--0.11}
& \cellcolor{bestcol}$36.54{\scriptstyle\pm0.83}$;{\scriptsize$\Delta$+0.61} \\
A3
& \cellcolor{bestcol}$46.15{\scriptstyle\pm0.14}$;{\scriptsize$\Delta$0.00}
& $50.35{\scriptstyle\pm0.43}$;{\scriptsize$\Delta$0.00}
& $25.24{\scriptstyle\pm1.63}$;{\scriptsize$\Delta$--1.58}
& $34.77{\scriptstyle\pm0.79}$;{\scriptsize$\Delta$--1.16} \\
A4
& $46.11{\scriptstyle\pm0.39}$;{\scriptsize$\Delta$--0.04}
& $50.32{\scriptstyle\pm0.28}$;{\scriptsize$\Delta$--0.03}
& $26.55{\scriptstyle\pm1.07}$;{\scriptsize$\Delta$--0.27}
& $35.95{\scriptstyle\pm0.69}$;{\scriptsize$\Delta$+0.02} \\
\bottomrule
\end{tabular}
\end{table*}

Table~\ref{tab:ablation_structure} reports the corresponding condensed-graph statistics. The differences in $|V_c|$, $|E_c|$, and SR across several variants confirm that, at $r=0.005$, the ranking criterion has a genuine effect on the graph produced by the shared assembly pipeline.

\begin{table}[htbp]
\caption{Condensed graph structure for each ablation variant at $r=0.005$.}
\label{tab:ablation_structure}
\centering
\setlength{\tabcolsep}{4pt}
\begin{tabular}{@{}clrrrc@{}}
\toprule
\textbf{ID} & \textbf{Variant}
& $|V_c|$ & $|E_c|$ & $|F_c|$ & SR(\%) \\
\midrule
\multicolumn{6}{@{}l}{\textit{Amazon-ratings} ($N{=}24{,}492$)} \\
---  & HERALD (full)   & 19,132 & 134,184 & 300 & 77.97 \\
A1   & Fixed weights   & 19,076 & 133,638 & 300 & 77.74 \\
A2a  & Proto only      & 18,720 & 130,182 & 300 & 76.27 \\
A2b  & Boundary only   & 18,170 & 124,854 & 300 & 74.01 \\
A2c  & LID only        & 18,284 & 125,952 & 300 & 74.48 \\
A2d  & Proto+boundary  & 18,636 & 129,370 & 300 & 75.93 \\
A3   & BONSAI features & 19,132 & 134,184 & 300 & 77.97 \\
A4   & Random ranking  & 18,240 & 125,538 & 300 & 74.30 \\
\midrule
\multicolumn{6}{@{}l}{\textit{Squirrel} ($N{=}5{,}201$)} \\
---  & HERALD (full)   & 3,643 & 259,510 & 432 & 16.54 \\
A1   & Fixed weights   & 3,635 & 259,384 & 432 & 16.51 \\
A2a  & Proto only      & 3,568 & 257,706 & 432 & 16.23 \\
A2b  & Boundary only   & 3,674 & 260,230 & 432 & 16.67 \\
A2c  & LID only        & 3,673 & 260,190 & 432 & 16.67 \\
A2d  & Proto+boundary  & 3,681 & 260,412 & 432 & 16.70 \\
A3   & BONSAI features & 3,599 & 258,523 & 432 & 16.36 \\
A4   & Random ranking  & 3,713 & 261,008 & 432 & 16.83 \\
\bottomrule
\end{tabular}
\end{table}

\subsection{Discussion}
\label{sec:ablation_discussion}

\subsubsection{A1: Adaptive weighting.}

A1 evaluates whether dynamically adapting $(\alpha,\beta,\gamma)$ to the measured heterophily provides an advantage over the static base weights $(0.4,0.4,0.2)$. The effect is small on Amazon-ratings, with changes of $-0.02$\% for GCN and $+0.04$\% for H2GCN. On Squirrel, the effect is more visible but architecture-dependent: fixed weights reduce GCN accuracy by $0.27$ \% but increase H2GCN accuracy by $0.32\%$. Thus, the ablation does not establish a uniformly positive effect of adaptive weighting at these two heterophilic operating points. Instead, it indicates that the benefit of the adaptive mechanism depends on both the heterophily regime and the downstream architecture. Since both datasets have $h>0.4$, the adaptive weighting already emphasizes boundary and diversity information relative to strongly homophilic settings. The ablation therefore provides a limited test of the full heterophily range; its role is better interpreted together with the main experiments spanning substantially different heterophily levels.

\subsubsection{A2: Node-scoring components are complementary.}

A2 isolates the three components of HERALD's node-scoring function. The prototype-only variant A2a, boundary-only variant A2b, and LID-only variant A2c each remain competitive with the complete score, while A2d removes only the LID term and retains prototype and boundary information. On Amazon-ratings, the individual variants remain within $0.15 \%$ of full HERALD on GCN and within $0.23 \%$ on H2GCN. On Squirrel, the variation is somewhat larger: prototype-only improves H2GCN by $0.61 \%$, boundary-only improves GCN by $0.31 \%$, and removing LID improves H2GCN by $0.61\%$ while reducing GCN accuracy by $0.11\%$. These isolated gains are not consistent across architectures, indicating that no single scoring criterion is uniformly superior.

The combined score nevertheless provides a stable compromise across the four dataset--GNN combinations. This behavior is consistent with the intended roles of the three terms: prototype similarity favors class-central exemplars, boundary score emphasizes inter-class transition regions, and LID promotes structurally distinctive and non-redundant nodes. The results therefore support the use of a complementary scoring mechanism rather than relying on a single notion of importance.

The A2d results are particularly informative about the LID term. Removing LID yields small H2GCN improvements of $0.12 \%$ on Amazon-ratings and $0.61 \%$ on Squirrel, but decreases GCN accuracy by $0.05 \%$ and $0.11 \%$, respectively. This suggests that LID-based diversity does not provide a universal gain for every downstream architecture, but can complement prototype and boundary information in settings where diversity among selected nodes is useful.

\subsubsection{A3: Feature selection is the dominant design axis.}

A3 replaces HERALD's Fisher$\times$density feature-selection criterion with BONSAI's WL$+$DT feature-selection procedure while retaining HERALD's node scoring. This produces the clearest performance difference in the ablation study. On Squirrel, A3 reduces accuracy by $1.58 \%$ for GCN and $1.16 \%$ for H2GCN, which are the largest degradations among all tested variants. The corresponding condensed graph also differs from full HERALD, with $3{,}599$ rather than $3{,}643$ nodes.

On Amazon-ratings, A3 is numerically identical to full HERALD. This occurs because the dataset uses dense float features and the resulting feature-selection procedure retains $f_v=k^*=300$ features regardless of which particular 300 features are selected; consequently, the BFS expansion and resulting graph structure are unchanged. The Squirrel result is therefore the more informative comparison: it combines a different selected feature set with a different condensed graph and produces a substantial accuracy reduction.

These results support the importance of HERALD's feature-selection strategy for sparse, high-dimensional, heterophilic graphs. The Fisher$\times$density criterion explicitly evaluates discriminative information in the original feature space while also favoring features that are sufficiently active across the training nodes. In contrast, WL-based aggregation can mix information across heterophilic neighborhoods, potentially weakening the class signal before feature selection is performed.

\subsubsection{A4: Random ranking confirms the contribution of node scoring.}

A4 replaces HERALD's learned node ranking with a uniformly random permutation while keeping feature selection and the complete BFS$+$PPR$+$rebalancing pipeline unchanged. The resulting accuracy changes are modest but informative: GCN decreases by $0.04 \%$ on Amazon-ratings and $0.27 \%$ on Squirrel, while H2GCN changes by only $-0.03 \%$ and $+0.02 \%$, respectively.

The small but consistent GCN degradation indicates that the HERALD scoring criterion contributes information beyond the shared graph-assembly pipeline, although its effect is substantially smaller than that of feature selection. The corresponding graph structures also differ: on Amazon-ratings, random ranking produces $18{,}240$ condensed nodes with SR$=74.30 \%$, compared with $19{,}132$ nodes and SR$=77.97 \%$ for full HERALD. Thus, the ranking mechanism affects not only downstream accuracy but also which portions of the graph are incorporated under the fixed budget. The weaker effect on H2GCN is consistent with its explicit ability to exploit heterophilous and boundary information during downstream message passing.

\subsection{Summary}

Table~\ref{tab:ablation_summary} summarizes the largest observed accuracy degradation relative to full HERALD across the four dataset--GNN combinations. The summary emphasizes the magnitude of the ablation effect rather than treating small positive deviations as evidence that a component is unnecessary.

\begin{table}[htbp]
\caption{Ablation summary: maximum observed accuracy degradation relative to full HERALD across all dataset--GNN pairs at $r=0.005$.}
\label{tab:ablation_summary}
\centering
\setlength{\tabcolsep}{5pt}
\begin{tabular}{@{}clcc@{}}
\toprule
\textbf{ID} & \textbf{Component}
& \textbf{Max drop (\%)} & \textbf{Primary locus} \\
\midrule
A3  & Fisher$\times$density feature selection
& 1.58 & Squirrel, GCN \\
A1  & Adaptive weighting
& 0.27 & Squirrel, GCN \\
A4  & HERALD scoring (random ranking)
& 0.27 & Squirrel, GCN \\
A2c & LID only
& 0.23 & Amazon-ratings, H2GCN \\
A2b & Boundary only
& 0.15 & Amazon-ratings, GCN \\
A2d & Proto+boundary (no LID)
& 0.11 & Squirrel, GCN \\
A2a & Prototype only
& 0.00 & No observed degradation \\
\bottomrule
\end{tabular}
\end{table}

Three main conclusions emerge. First, \textbf{feature selection (A3) is the dominant ablation axis}, producing a $1.58 \%$ GCN degradation on Squirrel and a $1.16 \%$ degradation on H2GCN. This result highlights the importance of selecting discriminative raw-space features when neighborhood aggregation is unreliable under strong heterophily. Second, \textbf{the node-scoring components (A2a--A2d) are complementary rather than individually dominant}. Prototype, boundary, and LID scores can each produce competitive results in isolation, but their relative benefits depend on the dataset and downstream architecture; the combined score provides a stable compromise across the evaluated settings. Third, \textbf{random ranking (A4) produces a small but measurable GCN degradation}, supporting the claim that HERALD's scoring mechanism contributes information beyond the shared BFS$+$PPR$+$rebalancing pipeline. Overall, the ablation results indicate that feature selection provides the largest measurable contribution, while the adaptive node-scoring components provide complementary information whose usefulness depends on the heterophily regime and downstream architecture.

\section{Hyperparameter Sensitivity Analysis}
\label{sec:sensitivity}

The ablation study in Appendix~\ref{sec:ablation} evaluates the effect of
removing entire score components from HERALD.
This appendix complements that analysis by sweeping each of HERALD's four
continuous hyperparameters in isolation, holding all others at their
paper-default values (one-at-a-time, OAT protocol).
The goal is to verify that the headline results do not depend on precise
hyperparameter tuning and that the chosen defaults are broadly sensible
rather than dataset-specifically optimised.

\subsection{Methodology}
\label{sec:sensitivity_method}

\subsubsection{Hyperparameters swept.}
Table~\ref{tab:sensitivity_grid} lists the four axes and the grid values
tested for each.
The paper-default value is marked with $\star$.

\begin{table}[htbp]
\caption{One-at-a-time sensitivity sweep grid.
$\star$ denotes the paper-default value used in all main experiments.}
\label{tab:sensitivity_grid}
\centering
\setlength{\tabcolsep}{5pt}
\begin{tabular}{@{}clll@{}}
\toprule
\textbf{Symbol} & \textbf{Parameter} & \textbf{Grid values} & \textbf{Eq.} \\
\midrule
$k$    & LID neighbourhood size
       & $5,\;10^\star,\;15,\;20,\;30$
       & \eqref{eq:lid} \\
$L$    & BFS expansion depth
       & $1,\;2^\star,\;3,\;4$
       & \eqref{eq:bfs} \\
$c$    & Sigmoid steepness
       & $2,\;4,\;8^\star,\;12,\;16,\;24$
       & \eqref{eq:transition} \\
$\alpha_0$ & Prototype base weight
       & $0.2,\;0.3,\;0.4^\star,\;0.5,\;0.6$
       & \eqref{eq:weights_raw} \\
\bottomrule
\end{tabular}
\end{table}

\subsubsection{Experimental protocol.}
All sweeps are run at the fixed compression fraction $r=0.005$, which is a
non-saturating budget regime for Cora and Amazon-ratings (SR$\approx7\%$ and
$78\%$ respectively) and a near-saturating regime for Roman-empire
(SR$\approx99.5\%$).
Three datasets spanning the full heterophily spectrum are used:
Cora ($h\approx0.00$), Amazon-ratings ($h\approx0.62$), and Roman-empire
($h\approx0.97$).
Results are reported for GCN and H2GCN, averaged over five random model seeds
(condensation is deterministic at fixed global seed 42).
The sensitivity metric is the \emph{accuracy swing}: the difference between
the maximum and minimum accuracy observed across a hyperparameter's grid
for a given dataset--GNN pair.
A small swing indicates that results are robust to the exact value chosen.

\subsection{Results}
\label{sec:sensitivity_results}

Table~\ref{tab:sensitivity_summary} reports the accuracy swing per axis,
dataset, and GNN architecture.
Figures~\ref{fig:sens_alpha0}--\ref{fig:sens_lid} show the full accuracy
curves with error bars.

\begin{table}[htbp]
\caption{Maximum accuracy swing (\%) across each hyperparameter's grid
at $r=0.005$ (five seeds). A small value indicates robustness to
that hyperparameter. Roman-empire (SR$\approx99.5\%$) is saturated
and serves as a control; all swings there are $\leq0.16\%$.}
\label{tab:sensitivity_summary}
\centering
\setlength{\tabcolsep}{5pt}
\begin{tabular}{@{}llcc@{}}
\toprule
\textbf{Parameter} & \textbf{Dataset}
  & \textbf{GCN swing (\%)} & \textbf{H2GCN swing (\%)} \\
\midrule
\multirow{3}{*}{LID size $k$}
  & Cora           & 0.23 & 0.59 \\
  & Amazon-ratings & 0.15 & 0.27 \\
  & Roman-empire   & 0.00 & 0.02 \\
\midrule
\multirow{3}{*}{BFS depth $L$}
  & Cora           & 0.22 & 0.89 \\
  & Amazon-ratings & 0.71 & 0.70 \\
  & Roman-empire   & 0.14 & 0.16 \\
\midrule
\multirow{3}{*}{Steepness $c$}
  & Cora           & 1.07 & 0.59 \\
  & Amazon-ratings & 0.13 & 0.36 \\
  & Roman-empire   & 0.00 & 0.00 \\
\midrule
\multirow{3}{*}{Prototype weight $\alpha_0$}
  & Cora           & 0.66 & 0.63 \\
  & Amazon-ratings & 0.13 & 0.29 \\
  & Roman-empire   & 0.00 & 0.00 \\
\bottomrule
\end{tabular}
\end{table}

\begin{figure*}[htbp]
\centering
\includegraphics[width=\textwidth]{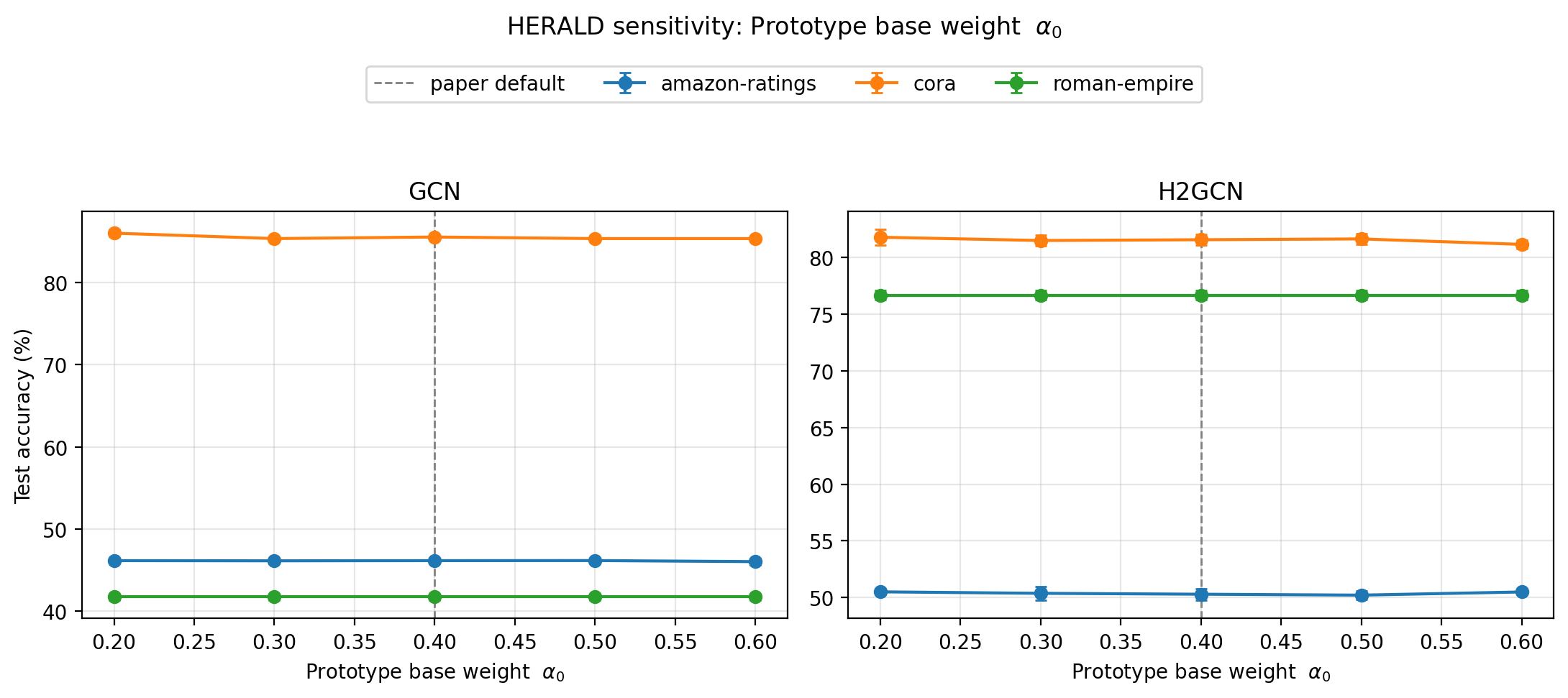}
\caption{Accuracy vs.\ prototype base weight $\alpha_0 \in [0.2, 0.6]$.
All three datasets are essentially flat across the sweep, with swings
$\leq0.66\%$ on Cora and $\leq0.29\%$ on Amazon-ratings.
Roman-empire is unaffected (SR$\approx99.5\%$, all variants produce the
same condensed graph).}
\label{fig:sens_alpha0}
\end{figure*}

\begin{figure*}[htbp]
\centering
\includegraphics[width=\textwidth]{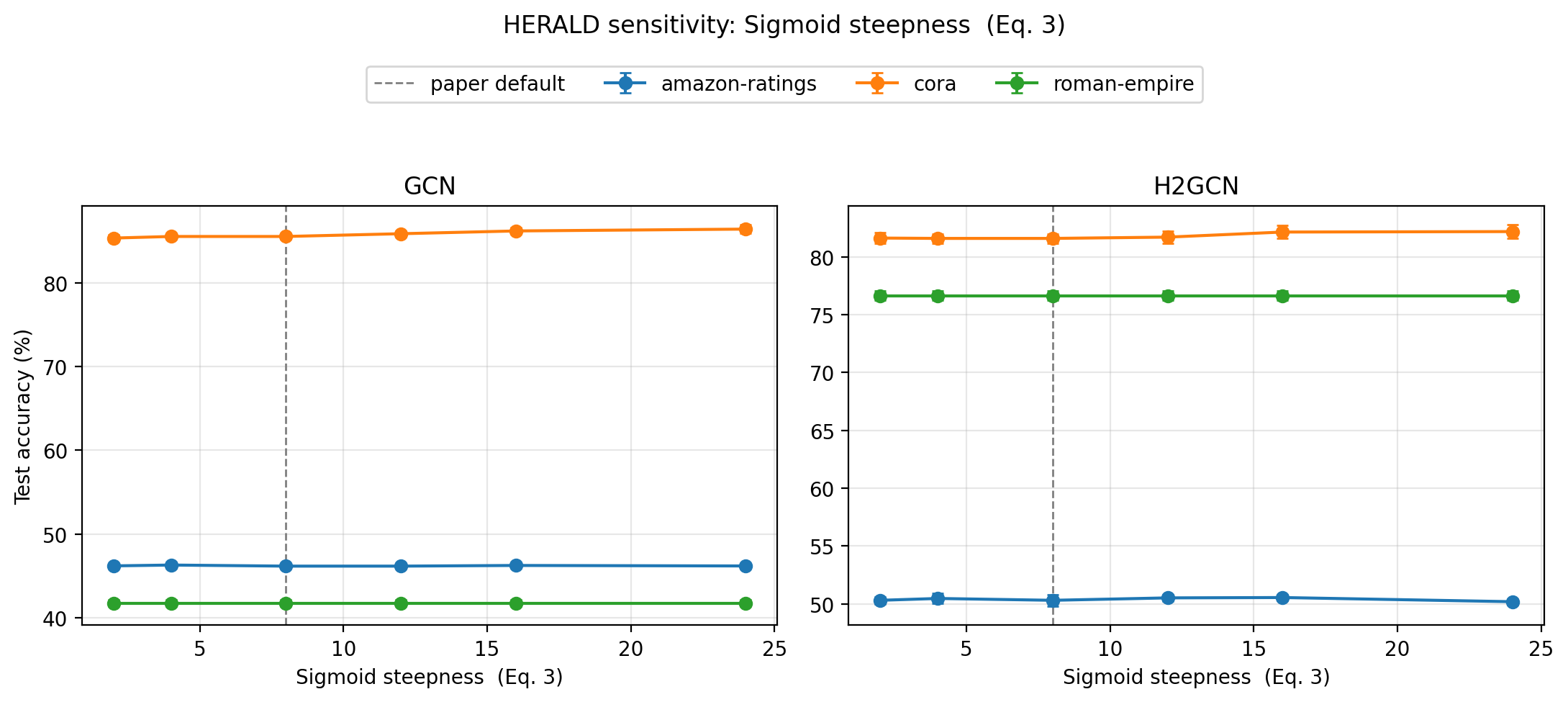}
\caption{Accuracy vs.\ sigmoid steepness $c \in [2, 24]$.
The largest swing ($1.07\%$ GCN on Cora) occurs because a higher steepness
pushes Cora's very low $h$ even further below the midpoint, slightly raising the
prototype weight and shifting which nodes are ranked highest.
Amazon-ratings and Roman-empire are unaffected.}
\label{fig:sens_steepness}
\end{figure*}

\begin{figure*}[htbp]
\centering
\includegraphics[width=\textwidth]{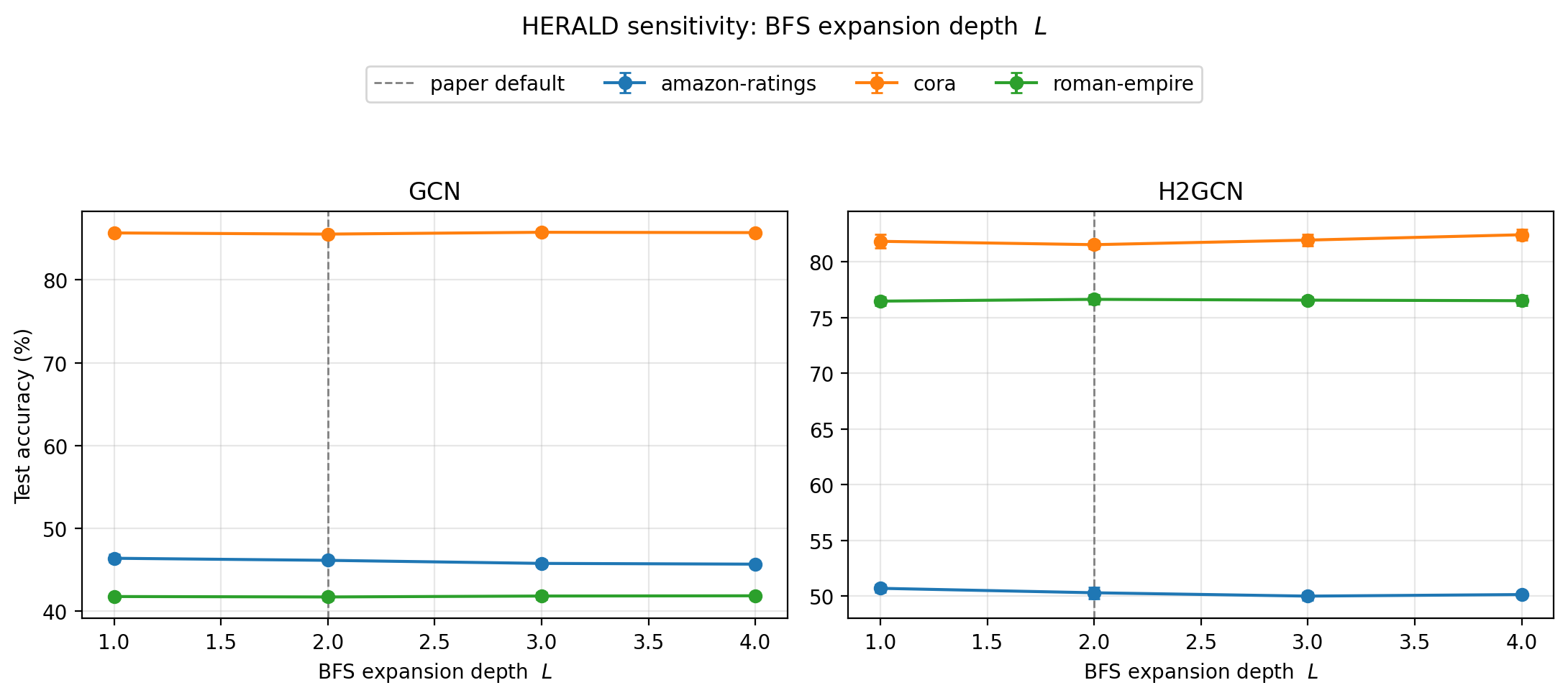}
\caption{Accuracy vs.\ BFS expansion depth $L \in [1, 4]$.
BFS depth is the most structurally impactful parameter: increasing $L$
from 1 to 4 on Amazon-ratings reduces the condensed node count from
$21{,}755$ to $17{,}048$ (SR from $88.7\%$ to $69.4\%$) because deeper
trees are more expensive per root, admitting fewer roots within the budget.
The accuracy swing of $0.71\%$ on Amazon-ratings GCN reflects this
structural difference rather than the scoring criterion per se.}
\label{fig:sens_wl_layers}
\end{figure*}

\begin{figure*}[htbp]
\centering
\includegraphics[width=\textwidth]{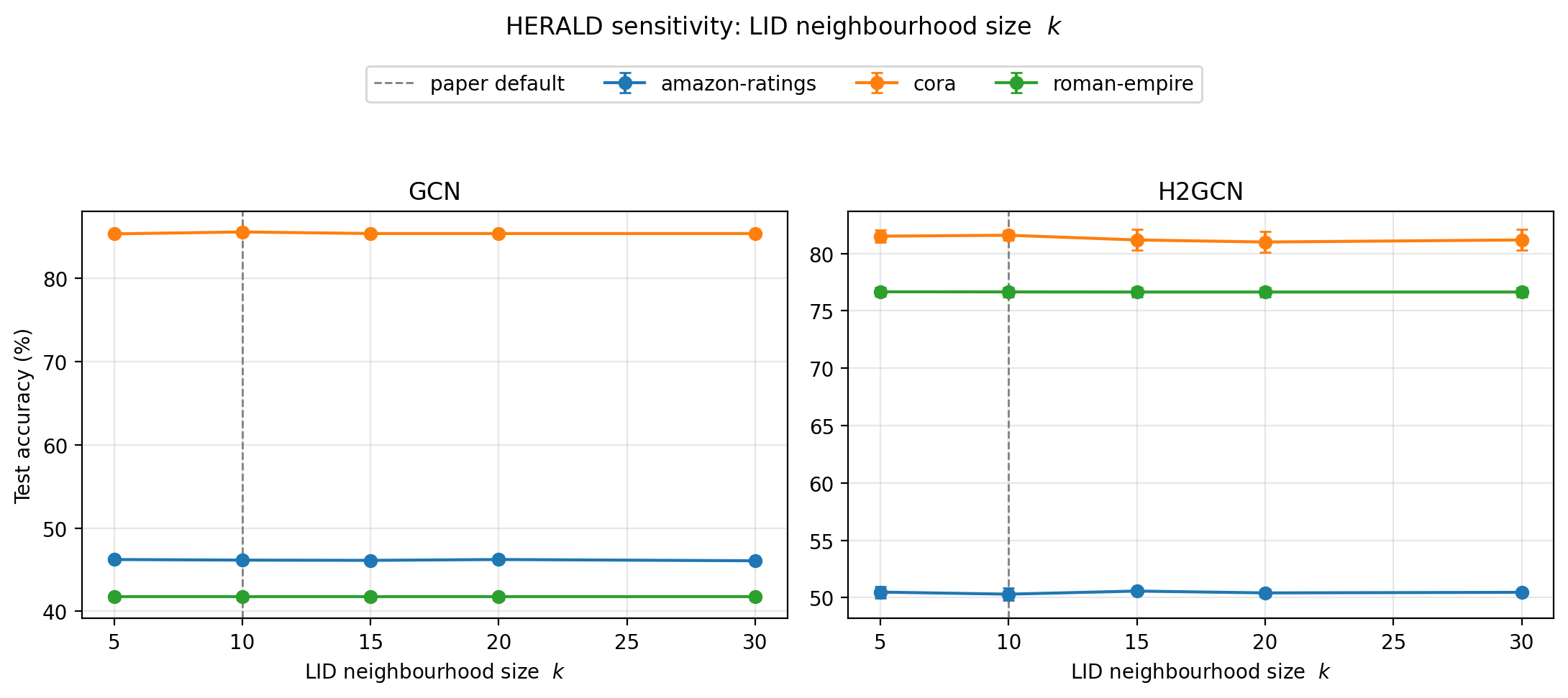}
\caption{Accuracy vs.\ LID neighbourhood size $k \in [5, 30]$.
All curves are nearly flat; the largest swing ($0.59\%$ H2GCN on Cora)
is within the standard deviation of the baseline.
LID estimates stabilise quickly with neighbourhood size, and the
relative ranking of nodes by LID score is robust even at small $k$.}
\label{fig:sens_lid}
\end{figure*}

\subsection{Discussion}
\label{sec:sensitivity_discussion}

Four observations emerge clearly from Table~\ref{tab:sensitivity_summary}
and Figures~\ref{fig:sens_alpha0}--\ref{fig:sens_lid}.

\subsubsection{Roman-empire is invariant to all hyperparameters.}
At SR$\approx99.5\%$, Roman-empire saturates the available budget regardless
of which hyperparameter value is used, so every grid point produces the
same condensed graph and hence identical accuracy.
This is consistent with the saturation effect explained in
Appendix~\ref{sec:stats}: on sparse heterophilic graphs where node storage
dominates the budget, the BFS loop exhausts the training neighbourhood
before the budget is meaningful.
Roman-empire therefore acts as a useful \emph{control}: any non-zero swing
there would indicate an unintended sensitivity in the condensation pipeline
itself rather than in the scoring criterion.

\subsubsection{BFS depth \texorpdfstring{$L$}{L} is the most structurally consequential parameter.}
The $L$ sweep produces the largest swings on Amazon-ratings ($0.71\%$ GCN,
$0.70\%$ H2GCN), because changing $L$ directly changes the size of each
BFS neighbourhood tree and hence how many root nodes can be admitted within
the budget.
At $L=1$, each root contributes only its immediate neighbours, so more roots
fit and the condensed graph contains $21{,}755$ nodes (SR$=88.7\%$).
At $L=4$, each tree is substantially larger, fewer roots fit, and the
condensed graph shrinks to $17{,}048$ nodes (SR$=69.4\%$).
The accuracy difference reflects this structural change rather than any
sensitivity in HERALD's scoring criterion.
The paper default $L=2$ sits at the midpoint of this range and is
consistent with the two-hop receptive field of standard two-layer GNNs,
ensuring the condensed subgraph captures the same structural context the
downstream model will aggregate over.

\subsubsection{The adaptive weight parameters (\texorpdfstring{$c$}{c}, \texorpdfstring{$\alpha_0$}{alpha\_0}) are stable.}
The sigmoid steepness $c$ produces a maximum swing of $1.07\%$ GCN on
Cora, which is the largest single value in Table~\ref{tab:sensitivity_summary}.
Examining the raw numbers reveals the mechanism: higher steepness
($c \geq 12$) makes the transition sharper, effectively forcing Cora's
$h\approx0.002$ even further below the midpoint $h=0.4$ and increasing
$\alpha$ slightly, which in turn changes which training nodes are ranked
first.
On Cora at $r=0.005$ this affects a small number of marginal nodes near
the budget boundary, hence the small but non-zero swing.
The prototype base weight $\alpha_0$ shows a maximum swing of $0.66\%$ on
Cora, again reflecting the same marginal-node effect.
Critically, both parameters are \emph{monotone} in neither direction:
performance does not consistently improve or degrade as $c$ or $\alpha_0$
increase, confirming there is no strong incentive to deviate from the paper
defaults.
On Amazon-ratings, both parameters produce swings $\leq0.36\%$, well within
one standard deviation of the baseline.

\subsubsection{LID neighbourhood size \texorpdfstring{$k$}{k} is robustly insensitive.}
The LID swing is $\leq0.59\%$ across all dataset--GNN pairs, and the
accuracy curves in Figure~\ref{fig:sens_lid} are essentially flat.
This confirms the theoretical expectation: the maximum likelihood LID
estimator (Eq.~\eqref{eq:lid}) uses a ratio $d_j / d_k$ that
stabilises in relative ordering as $k$ grows, so the ranking of nodes by
LID score is already reliable at $k=5$.
The paper default $k=10$ is a conservative choice that provides a stable
estimate without imposing significant computational overhead.

\subsubsection{Overall robustness.}
The maximum swing across all axes, datasets, and architectures is $1.07\%$
(sigmoid steepness on Cora GCN), and the median swing is $0.16\%$.
In the two non-saturating datasets (Cora and Amazon-ratings), no axis
produces a swing exceeding $1.07\%$, and 19 of the 24 dataset--GNN--axis
combinations show swings below $0.7\%$.
These results demonstrate that HERALD is robust to hyperparameter choice
within reasonable ranges: the paper defaults are not a carefully tuned
optimum but a stable operating point from which moderate deviations cause
only minor accuracy changes.
This robustness is an important practical property for a condensation
method, since hyperparameter tuning on the condensed graph would
introduce a circularity (the condensed graph itself would need to change
to evaluate each setting).

\section{Condensed Graph Statistics}
\label{sec:stats}

A natural question beyond classification accuracy is \emph{what does the
condensed graph actually look like}?
We investigate this by comparing the structural properties of the subgraphs
produced by BONSAI and HERALD on three representative datasets:
Roman-empire ($h=0.968$, strongly heterophilic),
Amazon-ratings ($h=0.620$, moderately heterophilic), and
Squirrel ($h=0.776$, heterophilic with very high edge density).
Figure~\ref{fig:condensed_stats} summarises the three most diagnostic
structural metrics; Tables~\ref{tab:stats_roman}--\ref{tab:stats_squirrel}
report the full set of statistics.

\begin{figure*}[t]
\centering
\includegraphics[width=\textwidth]{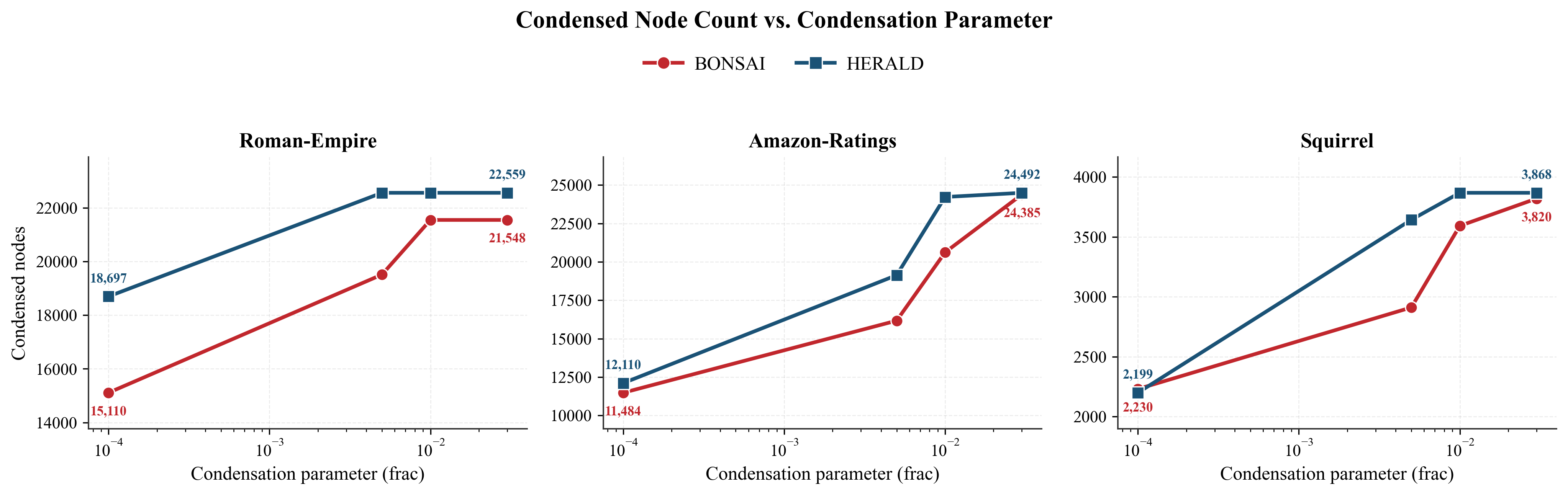}
\vspace{0.5em}
\includegraphics[width=\textwidth]{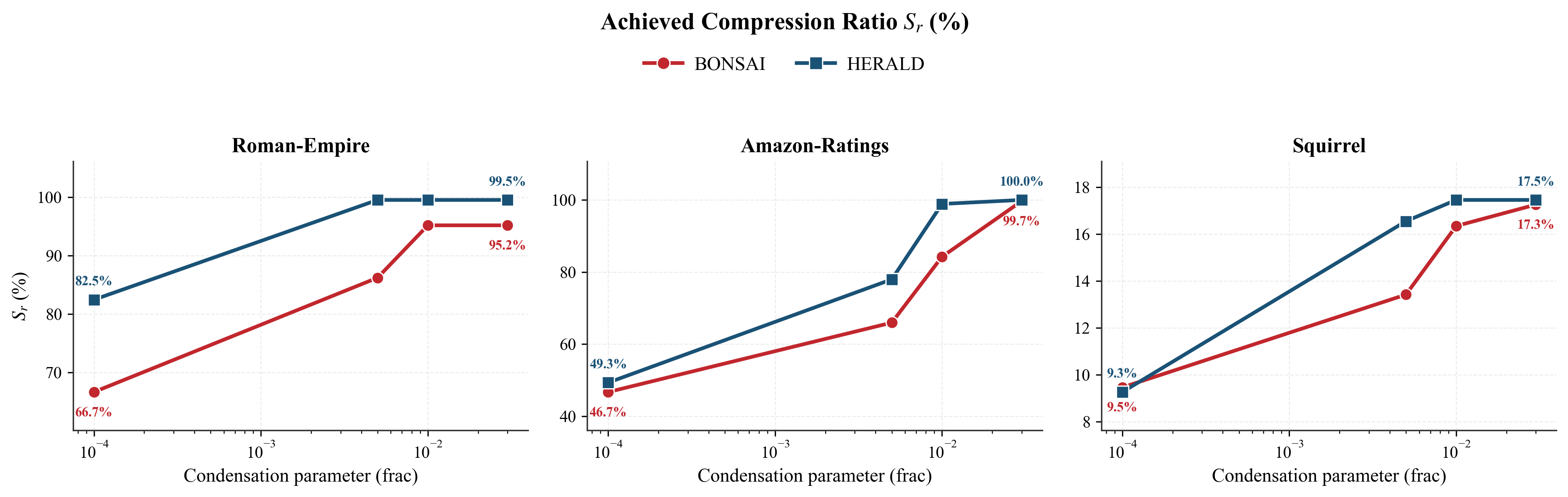}
\vspace{0.5em}
\includegraphics[width=\textwidth]{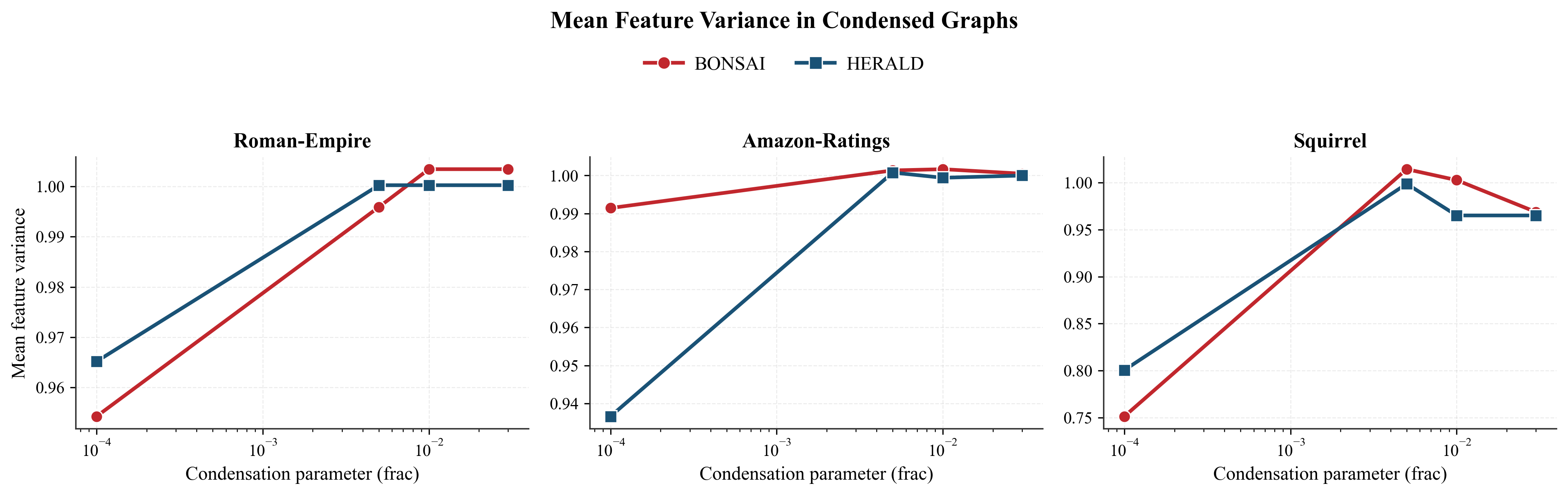}
\caption{Structural comparison of BONSAI and HERALD condensed graphs
across three datasets and four compression fractions
$r \in \{10^{-4}, 5\times10^{-3}, 10^{-2}, 3\times10^{-2}\}$.
\textbf{Top}: condensed node count $|V_c|$.
HERALD consistently selects more nodes than BONSAI at the same budget,
because its Fisher$\times$density feature selection reduces the per-node
storage cost $f_v$, leaving more of the budget for nodes.
\textbf{Middle}: achieved storage ratio
$\mathrm{SR}(\%) = \mathcal{C}(G_c)/\mathcal{C}(G)\times 100$.
HERALD reaches saturation earlier than BONSAI on Roman-empire and
Amazon-ratings, reflecting its higher node-count efficiency.
On Squirrel, both condensers saturate near $\mathrm{SR}\approx17\%$
because edge storage dominates the budget.
\textbf{Bottom}: normalised mean feature variance
$\widehat{\sigma}^2$ (Eq.~\eqref{eq:feat_var}).
HERALD preserves equal or greater feature diversity at every budget point;
the gap is largest at $r=10^{-4}$ on Squirrel ($+0.05$ over BONSAI),
confirming that the LID diversity term prevents the condensed set from
collapsing onto a narrow feature cluster.}
\label{fig:condensed_stats}
\end{figure*}

\subsection{Storage ratio}
\label{subsec:sr}
Following \citet{gupta2025bonsai}, the \emph{storage ratio}
\texorpdfstring{$\mathrm{SR}(\%)$}{SR(\%)}
measures what fraction of the original graph's storage budget is consumed
by the condensed graph:
\begin{equation}
  \mathrm{SR}
  = \frac{\mathcal{C}(G_c)}{\mathcal{C}(G)} \times 100,
  \label{eq:sr}
\end{equation}
where $\mathcal{C}(\cdot)$ is the storage cost of Eq.~\eqref{eq:budget}.
A target compression fraction $r$ ideally yields
$\mathrm{SR} \approx r\times 100$, but in practice greedy BFS expansion
causes $\mathrm{SR}$ to saturate once there are no further nodes that fit
within the remaining budget.

\subsection{Feature variance}
To quantify how much of the original feature diversity is retained, we
compute the mean per-feature variance across all nodes in the condensed
graph, normalised by the same quantity computed on the full training set:
\begin{equation}
  \widehat{\sigma}^2
  = \frac{1}{k^*}\sum_{j\in\mathcal{F}}
    \frac{\operatorname{Var}(\tilde{\mathbf{X}}[V_c, j])}
         {\operatorname{Var}(\tilde{\mathbf{X}}[\Vtr, j]) + \epsilon}.
  \label{eq:feat_var}
\end{equation}
Values close to $1$ indicate that the condensed graph preserves the
feature spread of the original training set; values below $1$ suggest
the condensed graph over-concentrates on a narrow region of feature space.

\subsection{Class entropy}
The label diversity of the condensed graph is measured by the entropy
of its empirical class distribution:
\begin{equation}
  H_c = -\sum_{c=1}^{C} \hat{p}_c \log \hat{p}_c,
  \label{eq:class_ent}
\end{equation}
where $\hat{p}_c = |\{v \in V_c : y_v = c\}|/|V_c|$.
Higher entropy indicates better class balance.

Tables~\ref{tab:stats_roman}--\ref{tab:stats_squirrel} report $|V_c|$,
$|E_c|$, average degree $\bar{d}$, class entropy $H_c$, normalised
feature variance $\widehat{\sigma}^2$, and $\mathrm{SR}(\%)$ for both
condensers across all four compression fractions.

\begin{table}[htbp]
\caption{Condensed graph statistics on \textbf{Roman-empire}
($N=22{,}662$, $E=65{,}854$, $F=300$, $h=0.968$).
$\bar{d}$: mean degree;
\texorpdfstring{$H_c$}{Hc}: class entropy
(max \texorpdfstring{$=\log 18 \approx 2.89$}{= log18 approx 2.89});
\texorpdfstring{$\widehat{\sigma}^2$}{sigma\textasciicircum 2}:
normalised feature variance; SR: storage ratio (\%).}
\label{tab:stats_roman}
\centering
\setlength{\tabcolsep}{4pt}
\begin{tabular}{@{}llrrrrrc@{}}
\toprule
\textbf{Method} & \textbf{$r$}
  & $|V_c|$ & $|E_c|$ & $\bar{d}$
  & $H_c$ & $\widehat{\sigma}^2$ & SR\,(\%) \\
\midrule
\multirow{4}{*}{BONSAI}
  & 0.0001 & 15,110 & 21,735 & 2.88 & 2.617 & 0.954 & 66.67 \\
  & 0.005  & 19,512 & 32,762 & 3.36 & 2.613 & 0.996 & 86.23 \\
  & 0.01   & 21,548 & 35,238 & 3.27 & 2.613 & 1.003 & 95.20 \\
  & 0.03   & 21,548 & 35,238 & 3.27 & 2.613 & 1.003 & 95.20 \\
\midrule
\multirow{4}{*}{HERALD}
  & 0.0001 & 18,697 & 26,140 & 2.80 & 2.628 & 0.965 & 82.47 \\
  & 0.005  & 22,559 & 32,734 & 2.90 & 2.613 & 1.000 & 99.54 \\
  & 0.01   & 22,559 & 32,734 & 2.90 & 2.613 & 1.000 & 99.54 \\
  & 0.03   & 22,559 & 32,734 & 2.90 & 2.613 & 1.000 & 99.54 \\
\midrule
\multicolumn{2}{@{}l}{Full graph (train)}
  & 22,662 & 65,854 & 5.81 & 2.890 & 1.000 & 100.00 \\
\bottomrule
\end{tabular}
\end{table}

\begin{table}[htbp]
\caption{Condensed graph statistics on \textbf{Amazon-ratings}
($N=24{,}492$, $E=186{,}100$, $F=300$, $h=0.620$).}
\label{tab:stats_amazon}
\centering
\setlength{\tabcolsep}{4pt}
\begin{tabular}{@{}llrrrrrc@{}}
\toprule
\textbf{Method} & \textbf{$r$}
  & $|V_c|$ & $|E_c|$ & $\bar{d}$
  & $H_c$ & $\widehat{\sigma}^2$ & SR\,(\%) \\
\midrule
\multirow{4}{*}{BONSAI}
  & 0.0001 & 11,484 & 37,428 & 6.52 & 1.404 & 0.991 & 46.72 \\
  & 0.005  & 16,174 & 58,828 & 7.27 & 1.407 & 1.001 & 65.97 \\
  & 0.01   & 20,618 & 80,568 & 7.82 & 1.407 & 1.002 & 84.24 \\
  & 0.03   & 24,385 & 96,944 & 7.95 & 1.407 & 1.001 & 99.68 \\
\midrule
\multirow{4}{*}{HERALD}
  & 0.0001 & 12,110 & 41,822 & 6.91 & 1.463 & 0.937 & 49.33 \\
  & 0.005  & 19,132 & 67,092 & 7.01 & 1.407 & 1.001 & 77.97 \\
  & 0.01   & 24,221 & 91,794 & 7.58 & 1.407 & 0.999 & 98.89 \\
  & 0.03   & 24,492 & 93,050 & 7.60 & 1.407 & 1.000 & 100.00 \\
\midrule
\multicolumn{2}{@{}l}{Full graph (train)}
  & 24,492 & 186,100 & 15.20 & 1.609 & 1.000 & 100.00 \\
\bottomrule
\end{tabular}
\end{table}

\begin{table}[htbp]
\caption{Condensed graph statistics on \textbf{Squirrel}
($N=5{,}201$, $E=217{,}073$, $F=2{,}089$, $h=0.776$).}
\label{tab:stats_squirrel}
\centering
\setlength{\tabcolsep}{4pt}
\begin{tabular}{@{}llrrrrrc@{}}
\toprule
\textbf{Method} & \textbf{$r$}
  & $|V_c|$ & $|E_c|$ & $\bar{d}$
  & $H_c$ & $\widehat{\sigma}^2$ & SR\,(\%) \\
\midrule
\multirow{4}{*}{BONSAI}
  & 0.0001 & 2,230 &  42,153 & 37.80 & 1.609 & 0.751 &  9.45 \\
  & 0.005  & 2,911 & 114,386 & 78.59 & 1.609 & 1.014 & 13.41 \\
  & 0.01   & 3,592 & 129,452 & 72.08 & 1.609 & 1.003 & 16.34 \\
  & 0.03   & 3,820 & 130,775 & 68.47 & 1.609 & 0.969 & 17.25 \\
\midrule
\multirow{4}{*}{HERALD}
  & 0.0001 & 2,199 &  37,892 & 34.46 & 1.599 & 0.801 &  9.26 \\
  & 0.005  & 3,643 & 129,755 & 71.23 & 1.609 & 0.999 & 16.54 \\
  & 0.01   & 3,868 & 131,455 & 67.97 & 1.609 & 0.965 & 17.45 \\
  & 0.03   & 3,868 & 131,455 & 67.97 & 1.609 & 0.965 & 17.45 \\
\midrule
\multicolumn{2}{@{}l}{Full graph (train)}
  & 5,201 & 217,073 & 83.46 & 1.609 & 1.000 & 100.00 \\
\bottomrule
\end{tabular}
\end{table}

\subsection{Discussion}
\label{sec:stats_discussion}

Several structural patterns emerge from Figure~\ref{fig:condensed_stats}
and Tables~\ref{tab:stats_roman}--\ref{tab:stats_squirrel}.

\textbf{SR saturation.}
On Roman-empire and Squirrel, both condensers saturate well below the
target compression fraction at higher $r$ values (Figure~\ref{fig:condensed_stats},
middle row).
For Roman-empire at $r=0.01$ and $r=0.03$, both BONSAI and HERALD produce
identical condensed graphs ($\mathrm{SR}=95.20\%$ and $99.54\%$
respectively), because the BFS expansion has already covered most of the
available training neighbourhood.
This reflects a fundamental property of the budget formula
(Eq.~\eqref{eq:budget}): on sparse graphs where edge storage costs are
low relative to feature storage, the node budget is exhausted before the
edge budget, and the condensed graph approaches the full training subgraph.
Squirrel illustrates the opposite extreme: its extremely high density
($\bar{d}\approx 83$ in the full graph) means that edges dominate the
storage cost, capping $\mathrm{SR}$ at around $17\%$ regardless of $r$.

\textbf{HERALD selects more nodes at the same budget.}
Across all three datasets and all fractions, HERALD's condensed graphs
contain more nodes than BONSAI's at equivalent $r$
(Figure~\ref{fig:condensed_stats}, top row; ratio
$|V_c^{\text{HERALD}}|/|V_c^{\text{BONSAI}}|$ ranges from $0.99$ to $1.25$).
This is a direct consequence of HERALD's feature selection: by choosing
features with lower average activation density than BONSAI's DT-selected
features, each node's per-node storage cost $f_v$ is smaller, so more
nodes fit within the same budget $\mathcal{B}$.
The additional nodes are preferentially drawn from class boundaries and
high-LID regions, explaining HERALD's accuracy gains on heterophilic
datasets.

\textbf{HERALD produces sparser condensed graphs.}
Despite containing more nodes, HERALD's condensed graphs are
consistently sparser than those of BONSAI (Figure~\ref{fig:condensed_stats_appendix},
Appendix~\ref{sec:stats}).
Boundary nodes (high $s_v^{(b)}$) and high-LID nodes tend to be
structurally peripheral, connecting to neighbours of different classes
rather than forming tight same class cliques.
BFS expansion from such roots therefore produces chains and trees rather
than dense cliques, reducing edge count relative to node count.
This sparsity reduces over-smoothing risk during neighbourhood aggregation
on the condensed graph.

\textbf{Class balance.}
On Amazon-ratings at $r=0.0001$, HERALD achieves higher class entropy
($H_c=1.463$) than BONSAI ($H_c=1.404$), indicating better class balance
in the most budget-constrained regime.
Boundary nodes are by definition adjacent to multiple classes, so selecting
them as roots naturally draws in neighbours from different classes during
BFS expansion.
At larger budgets the class rebalancing step (Stage~7) equalises the
distributions and the entropy gap closes.

\textbf{Feature diversity.}
At $r=0.0001$ on Squirrel, HERALD achieves $\widehat{\sigma}^2=0.801$
versus BONSAI's $0.751$, a gain of $+0.050$
(Figure~\ref{fig:condensed_stats}, bottom row).
This confirms that HERALD's LID-driven diversity term actively prevents
the condensed set from collapsing onto a narrow cluster of near-duplicate
high-prototype-score nodes, preserving more of the original feature
manifold geometry even at severe compression.
The gap narrows as the budget grows, consistent with the LID term's role
being most critical precisely when the budget forces a small, selective
subset.

\begin{figure*}[t]
  \centering
  \includegraphics[width=\textwidth]{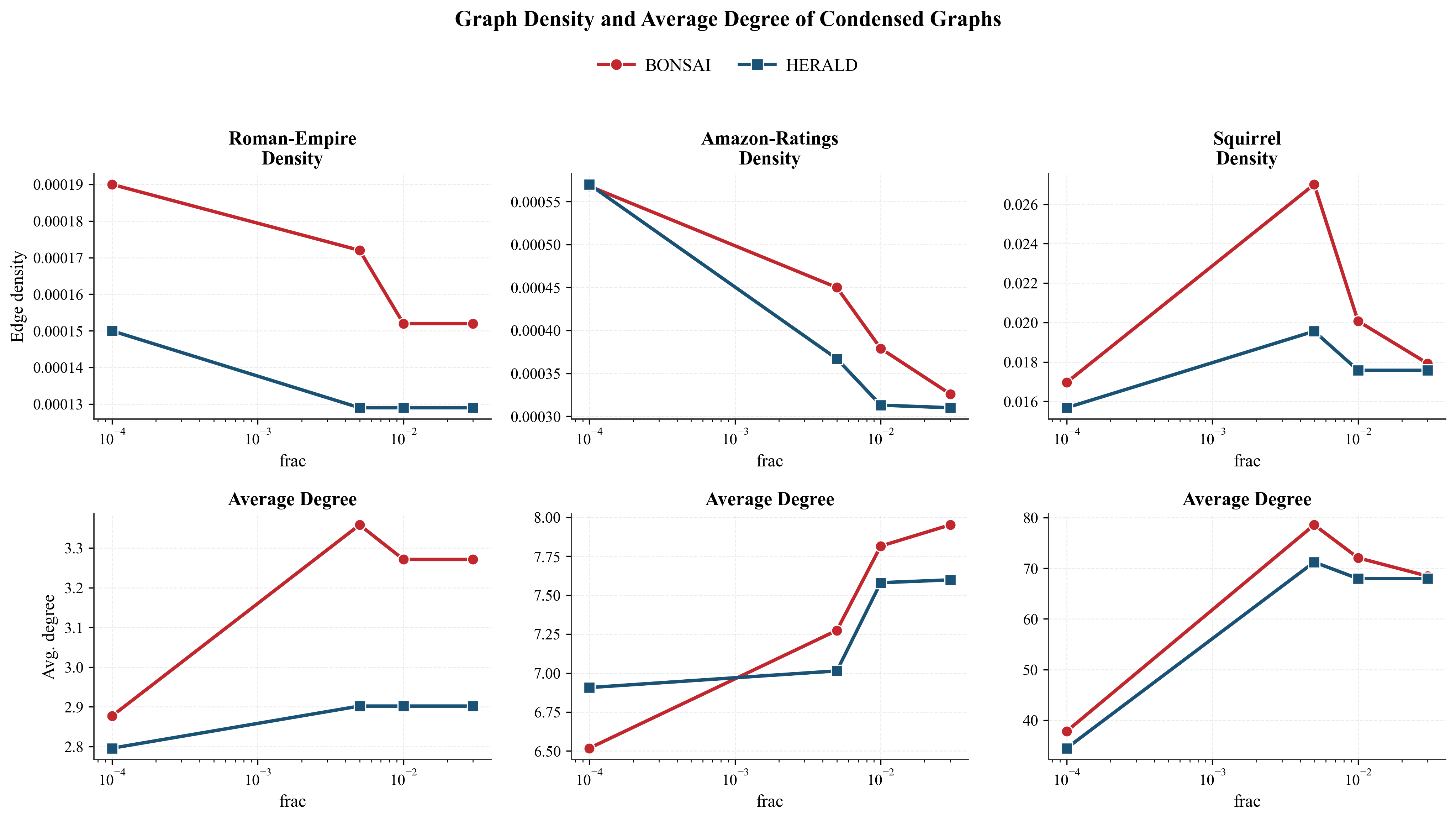}
  \includegraphics[width=\textwidth]{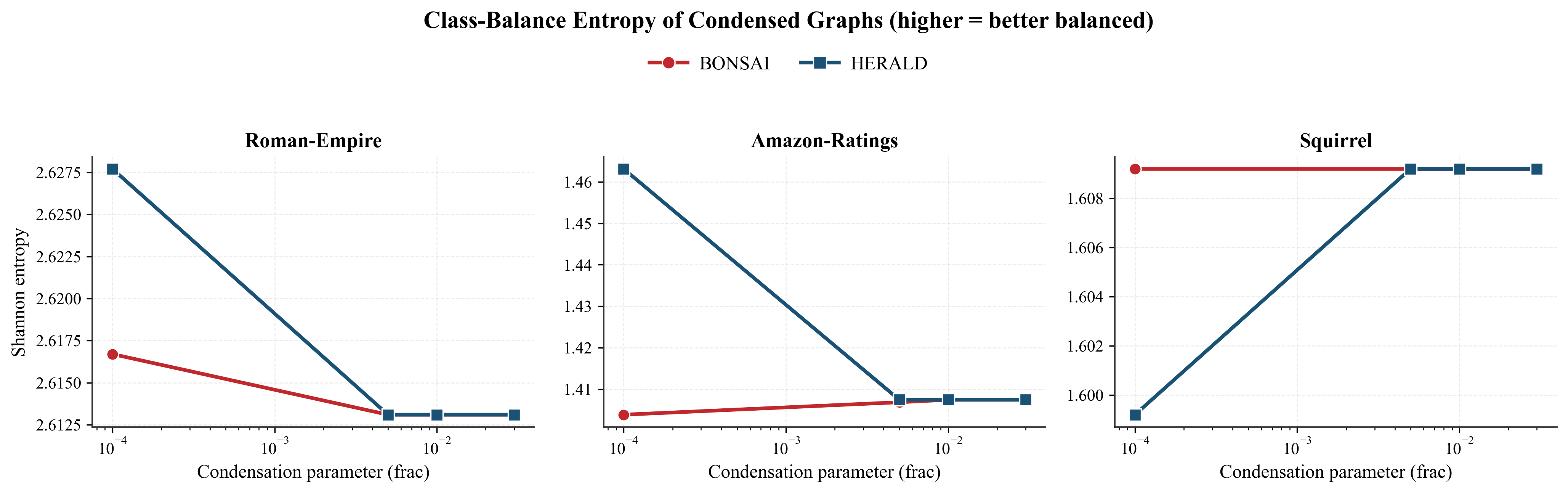}
  \caption{Graph density, average degree, and class-balance entropy
  of condensed graphs produced by BONSAI and HERALD.
  HERALD is consistently sparser at matched node count (top), and
  achieves higher class entropy at tight budgets (bottom).}
  \label{fig:condensed_stats_appendix}
\end{figure*}

\section{Results on Large-Scale Graphs}
\label{sec:scalability}
To evaluate whether HERALD scales beyond medium-sized benchmark graphs, we
perform an additional study on the Reddit dataset
($232{,}965$ nodes and $57.3$M edges), which is the largest graph considered
in our benchmark. Unlike the smaller citation and heterophilic datasets,
Reddit presents both substantially higher graph connectivity and a much
larger feature space (602 input features across 41 classes), making it a
challenging setting for graph condensation.
The experimental protocol follows exactly the same evaluation pipeline used
throughout Section~\ref{sec:experiments}. HERALD's joint feature selector is
applied as in all other experiments, using the same feature budget as
Bonsai's decision-tree selection; on Reddit, Bonsai's tree touches all $500$
of $500$ raw features, so HERALD's selector likewise retains the full
feature set at this stage, with dimensionality reduction instead occurring
via the downstream PCA step described below. All condensers are evaluated
under the same storage budgets $r\in\{0.0001,0.005,0.01,0.03\}$ and the
resulting condensed graphs are used to train four downstream GNN
architectures (GCN, GAT, GIN, and H2GCN). The same train/validation/test
split generation, training hyperparameters, and evaluation procedure are
used for every method. For HERALD, the condensed feature matrix is
additionally compressed using PCA while preserving 90\% of the feature
variance before downstream training, allowing a larger fraction of the
storage budget to be allocated to representative nodes rather than feature
dimensions.

\subsection{Accuracy at Scale}
\label{sec:scalability_acc}
Table~\ref{tab:reddit_acc} summarizes the node-classification accuracy on
Reddit. Overall, HERALD demonstrates strong performance across all four
GNN architectures under extremely aggressive compression ratios, winning
13 of the 16 condenser/GNN/budget cells against the strongest competing
baseline.

For GAT and GIN, HERALD outperforms every competing condenser at every
storage budget. On GAT, HERALD reaches $42.09\%$ at the smallest budget
($r=0.0001$), already well ahead of BONSAI ($23.86\%$), Herding
($27.50\%$), and Random ($20.94\%$), and maintains a comparable margin of
roughly 11--15 percentage points over the next-best method across all four
budgets, reaching $43.79\%$ at $r=0.03$. On GIN, HERALD's advantage is
similarly consistent, peaking at $45.10\%$ at $r=0.01$ compared with
BONSAI's $42.36\%$.

For H2GCN, HERALD obtains the best performance at three of the four
compression ratios, reaching $45.83\%$ accuracy at $r=0.03$ — within
approximately $5.1$ percentage points of training on the full Reddit graph.
The one exception is the smallest budget ($r=0.0001$), where HERALD trails
Random ($42.15\%$) at $32.93\%$, the largest gap in favor of a baseline
observed anywhere in the table; we attribute this to the very small
condensed graphs at this budget (as few as 44--51 nodes after PPR pruning),
where HERALD's node scoring has too little material to stabilize H2GCN
training.

For GCN, HERALD wins three of four budgets, including the smallest
($38.46\%$ at $r=0.0001$) and largest ($47.30\%$ at $r=0.03$), the latter
representing HERALD's best result on Reddit and a $10.3$-point margin over
BONSAI ($36.99\%$). BONSAI remains competitive at $r=0.01$, where it edges
out HERALD ($42.20\%$ vs.\ $38.32\%$) — the only cell in the table where a
baseline condenser outperforms HERALD on GCN or GIN, and consistent with
BONSAI's prototype-based selection being well suited to Reddit's high
homophily at moderate budgets.

Overall, these results indicate that the advantages of HERALD are largely
preserved at Reddit scale: the proposed node-scoring strategy generalizes
well beyond the medium-sized benchmark datasets considered earlier, with
its two weakest points — H2GCN at the most extreme compression and GCN at
$r=0.01$ — both traceable to the very small node budgets available under
tight storage constraints rather than a systematic weakness of the method.

\begin{table}[htbp]
\caption{Node classification accuracy (\%) on \textbf{Reddit}.
Best condensed result per GNN column and compression ratio highlighted.}
\label{tab:reddit_acc}
\centering
\setlength{\tabcolsep}{4pt}
\begin{tabular}{@{}llcccc@{}}
\toprule
\textbf{Condenser} & \textbf{$r$}
  & \textbf{GCN} & \textbf{GAT} & \textbf{GIN} & \textbf{H2GCN} \\
\midrule
Full graph
  & ---
  & $52.48{\scriptstyle\pm0.05}$
  & $51.66{\scriptstyle\pm0.73}$
  & $47.73{\scriptstyle\pm1.10}$
  & $50.91{\scriptstyle\pm0.21}$ \\
\midrule
\multirow{4}{*}{BONSAI}
  & 0.0001 & $37.97{\scriptstyle\pm0.95}$ & $23.86{\scriptstyle\pm11.32}$ & $40.79{\scriptstyle\pm1.35}$ & $37.05{\scriptstyle\pm2.97}$ \\
  & 0.005  & $30.29{\scriptstyle\pm5.04}$ & $26.83{\scriptstyle\pm8.83}$ & $41.15{\scriptstyle\pm5.33}$ & $41.31{\scriptstyle\pm0.57}$ \\
  & 0.01   & \cellcolor{bestcol}$42.20{\scriptstyle\pm2.79}$ & $24.62{\scriptstyle\pm7.80}$ & $42.36{\scriptstyle\pm2.20}$ & $42.35{\scriptstyle\pm0.84}$ \\
  & 0.03   & $36.99{\scriptstyle\pm3.50}$ & $33.13{\scriptstyle\pm4.58}$ & $35.18{\scriptstyle\pm5.66}$ & $43.26{\scriptstyle\pm1.09}$ \\
\midrule
\multirow{4}{*}{Herding}
  & 0.0001 & $26.82{\scriptstyle\pm3.04}$ & $27.50{\scriptstyle\pm10.65}$ & $26.26{\scriptstyle\pm2.92}$ & $37.49{\scriptstyle\pm3.44}$ \\
  & 0.005  & $26.60{\scriptstyle\pm1.53}$ & $25.62{\scriptstyle\pm6.49}$ & $34.24{\scriptstyle\pm4.75}$ & $39.82{\scriptstyle\pm1.77}$ \\
  & 0.01   & $27.28{\scriptstyle\pm2.28}$ & $26.14{\scriptstyle\pm3.54}$ & $25.05{\scriptstyle\pm1.71}$ & $39.84{\scriptstyle\pm1.76}$ \\
  & 0.03   & $32.15{\scriptstyle\pm3.33}$ & $27.88{\scriptstyle\pm3.42}$ & $27.26{\scriptstyle\pm1.86}$ & $40.61{\scriptstyle\pm0.87}$ \\
\midrule
\multirow{4}{*}{Random}
  & 0.0001 & $34.57{\scriptstyle\pm1.91}$ & $20.94{\scriptstyle\pm11.52}$ & $32.18{\scriptstyle\pm1.86}$ & \cellcolor{bestcol}$42.15{\scriptstyle\pm0.51}$ \\
  & 0.005  & $25.90{\scriptstyle\pm2.43}$ & $21.46{\scriptstyle\pm4.85}$ & $27.72{\scriptstyle\pm1.54}$ & $40.49{\scriptstyle\pm0.76}$ \\
  & 0.01   & $27.15{\scriptstyle\pm4.00}$ & $21.13{\scriptstyle\pm4.15}$ & $22.89{\scriptstyle\pm4.58}$ & $39.27{\scriptstyle\pm2.95}$ \\
  & 0.03   & $27.20{\scriptstyle\pm3.10}$ & $28.87{\scriptstyle\pm7.41}$ & $21.27{\scriptstyle\pm4.04}$ & $39.43{\scriptstyle\pm2.04}$ \\
\midrule
\multirow{4}{*}{\textbf{HERALD}}
  & 0.0001 & \cellcolor{bestcol}$38.46{\scriptstyle\pm6.32}$ & \cellcolor{bestcol}$42.09{\scriptstyle\pm0.31}$ & \cellcolor{bestcol}$42.29{\scriptstyle\pm0.00}$ & $32.93{\scriptstyle\pm5.99}$ \\
  & 0.005  & \cellcolor{bestcol}$46.96{\scriptstyle\pm2.91}$ & \cellcolor{bestcol}$43.04{\scriptstyle\pm0.80}$ & \cellcolor{bestcol}$44.99{\scriptstyle\pm2.23}$ & \cellcolor{bestcol}$43.85{\scriptstyle\pm0.83}$ \\
  & 0.01   & $38.32{\scriptstyle\pm5.83}$ & \cellcolor{bestcol}$43.37{\scriptstyle\pm1.67}$ & \cellcolor{bestcol}$45.10{\scriptstyle\pm2.40}$ & \cellcolor{bestcol}$44.74{\scriptstyle\pm0.69}$ \\
  & 0.03   & \cellcolor{bestcol}$47.30{\scriptstyle\pm2.18}$ & \cellcolor{bestcol}$43.79{\scriptstyle\pm1.61}$ & \cellcolor{bestcol}$44.74{\scriptstyle\pm2.15}$ & \cellcolor{bestcol}$45.83{\scriptstyle\pm0.48}$ \\
\bottomrule
\end{tabular}
\end{table}

\section{Dataset wise results}
\label{sec:dataset_wise_results}

\subsection{Homophilic Datasets}
\label{sec:results_homo}

Tables~\ref{tab:cora}--\ref{tab:pubmed} report results on Cora, CiteSeer,
and PubMed.

\begin{table}[htbp]
\caption{Node classification accuracy (\%) on \textbf{Cora}.
Best condensed result per GNN column and compression ratio highlighted.}
\label{tab:cora}
\centering
\setlength{\tabcolsep}{4pt}
\begin{tabular}{@{}llcccc@{}}
\toprule
\textbf{Condenser} & \textbf{$r$}
  & \textbf{GCN} & \textbf{GAT} & \textbf{GIN} & \textbf{H2GCN} \\
\midrule
Full graph & --- &
$87.60{\scriptstyle\pm0.34}$ &
$85.42{\scriptstyle\pm0.17}$ &
$87.27{\scriptstyle\pm0.51}$ &
$85.28{\scriptstyle\pm0.67}$ \\
\midrule
\multirow{4}{*}{Random}
 & 0.0001 & $9.56{\scriptstyle\pm2.21}$ & $15.83{\scriptstyle\pm6.21}$ & $14.02{\scriptstyle\pm5.48}$ & $12.92{\scriptstyle\pm2.44}$ \\
 & 0.005  & $14.54{\scriptstyle\pm2.46}$ & $22.88{\scriptstyle\pm7.52}$ & $18.78{\scriptstyle\pm5.23}$ & $23.91{\scriptstyle\pm2.51}$ \\
 & 0.01   & $16.64{\scriptstyle\pm0.65}$ & $21.14{\scriptstyle\pm4.52}$ & $19.56{\scriptstyle\pm2.71}$ & $15.39{\scriptstyle\pm1.78}$ \\
 & 0.03   & $27.08{\scriptstyle\pm0.56}$ & $29.23{\scriptstyle\pm5.13}$ & $29.41{\scriptstyle\pm4.42}$ & $32.32{\scriptstyle\pm2.10}$ \\
\midrule
\multirow{4}{*}{Herding}
 & 0.0001 & $27.23{\scriptstyle\pm0.80}$ & $31.18{\scriptstyle\pm4.96}$ & $30.81{\scriptstyle\pm2.56}$ & $35.54{\scriptstyle\pm1.07}$ \\
 & 0.005  & $27.23{\scriptstyle\pm0.80}$ & $31.18{\scriptstyle\pm4.96}$ & $30.81{\scriptstyle\pm2.56}$ & $35.54{\scriptstyle\pm1.07}$ \\
 & 0.01   & $56.16{\scriptstyle\pm0.85}$ & $53.95{\scriptstyle\pm0.80}$ & $59.11{\scriptstyle\pm0.92}$ & $57.05{\scriptstyle\pm0.51}$ \\
 & 0.03   & $71.73{\scriptstyle\pm0.96}$ & $72.88{\scriptstyle\pm1.69}$ & $74.10{\scriptstyle\pm0.95}$ & $71.18{\scriptstyle\pm1.67}$ \\
\midrule
\multirow{4}{*}{BONSAI}
 & 0.0001 & $53.21{\scriptstyle\pm0.43}$ & $54.72{\scriptstyle\pm2.67}$ & $62.36{\scriptstyle\pm0.42}$ & $53.03{\scriptstyle\pm1.16}$ \\
 & 0.005  & $84.72{\scriptstyle\pm0.27}$ & $81.77{\scriptstyle\pm0.47}$ & \cellcolor{bestcol}$85.61{\scriptstyle\pm0.74}$ & $81.37{\scriptstyle\pm0.48}$ \\
 & 0.01   & $85.24{\scriptstyle\pm0.17}$ & $83.25{\scriptstyle\pm0.79}$ & $86.16{\scriptstyle\pm0.48}$ & $81.77{\scriptstyle\pm0.44}$ \\
 & 0.03   & $85.24{\scriptstyle\pm0.17}$ & $83.25{\scriptstyle\pm0.79}$ & $86.16{\scriptstyle\pm0.48}$ & $81.77{\scriptstyle\pm0.44}$ \\
\midrule
\multirow{4}{*}{GDEM}
& 0.0001 & $15.76{\scriptstyle\pm3.19}$ & $21.37{\scriptstyle\pm5.80}$ & \cellcolor{bestcol}$79.89{\scriptstyle\pm0.42}$ & \cellcolor{bestcol}$63.54{\scriptstyle\pm0.86}$ \\
& 0.005  & $39.08{\scriptstyle\pm2.03}$ & $37.82{\scriptstyle\pm2.41}$ & $50.37{\scriptstyle\pm7.92}$ & $60.26{\scriptstyle\pm0.85}$ \\
& 0.010  & $35.06{\scriptstyle\pm1.59}$ & $36.05{\scriptstyle\pm1.47}$ & $58.12{\scriptstyle\pm4.69}$ & $62.88{\scriptstyle\pm0.59}$ \\
& 0.030  & $34.72{\scriptstyle\pm0.72}$ & $34.65{\scriptstyle\pm0.65}$ & $29.93{\scriptstyle\pm5.15}$ & $72.07{\scriptstyle\pm0.53}$ \\
\midrule
\multirow{4}{*}{\textbf{HERALD}}
 & 0.0001 & \cellcolor{bestcol}$60.59{\scriptstyle\pm0.48}$ & \cellcolor{bestcol}$60.22{\scriptstyle\pm2.68}$ & $71.14{\scriptstyle\pm0.45}$ & $25.42{\scriptstyle\pm1.91}$ \\
 & 0.005  & \cellcolor{bestcol}$85.54{\scriptstyle\pm0.28}$ & \cellcolor{bestcol}$82.80{\scriptstyle\pm0.54}$ & $85.28{\scriptstyle\pm0.79}$ & \cellcolor{bestcol}$81.59{\scriptstyle\pm0.41}$ \\
 & 0.01   & \cellcolor{bestcol}$86.90{\scriptstyle\pm0.23}$ & \cellcolor{bestcol}$83.84{\scriptstyle\pm0.95}$ & \cellcolor{bestcol}$86.79{\scriptstyle\pm0.83}$ & \cellcolor{bestcol}$82.55{\scriptstyle\pm0.50}$ \\
 & 0.03   & \cellcolor{bestcol}$86.90{\scriptstyle\pm0.23}$ & \cellcolor{bestcol}$83.84{\scriptstyle\pm0.95}$ & \cellcolor{bestcol}$86.79{\scriptstyle\pm0.83}$ & \cellcolor{bestcol}$82.55{\scriptstyle\pm0.50}$ \\
\bottomrule
\end{tabular}
\end{table}

\begin{table}[htbp]
\caption{Node classification accuracy (\%) on \textbf{CiteSeer}.}
\label{tab:citeseer}
\centering
\setlength{\tabcolsep}{4pt}
\begin{tabular}{@{}llcccc@{}}
\toprule
\textbf{Condenser} & \textbf{$r$}
  & \textbf{GCN} & \textbf{GAT} & \textbf{GIN} & \textbf{H2GCN} \\
\midrule
Full graph & --- &
$78.74{\scriptstyle\pm0.24}$ &
$77.63{\scriptstyle\pm0.63}$ &
$76.73{\scriptstyle\pm1.14}$ &
$77.60{\scriptstyle\pm0.48}$ \\
\midrule
\multirow{4}{*}{Random}
 & 0.0001 & $17.57{\scriptstyle\pm2.57}$ & $17.72{\scriptstyle\pm2.63}$ & $19.49{\scriptstyle\pm1.09}$ & $15.65{\scriptstyle\pm1.52}$ \\
 & 0.005  & $24.92{\scriptstyle\pm3.70}$ & $23.90{\scriptstyle\pm3.23}$ & $36.46{\scriptstyle\pm2.47}$ & $26.28{\scriptstyle\pm1.12}$ \\
 & 0.01   & $37.96{\scriptstyle\pm0.74}$ & $37.54{\scriptstyle\pm0.46}$ & $44.23{\scriptstyle\pm1.22}$ & $36.67{\scriptstyle\pm0.90}$ \\
 & 0.03   & $35.65{\scriptstyle\pm0.76}$ & $36.10{\scriptstyle\pm0.98}$ & $39.58{\scriptstyle\pm0.53}$ & $39.55{\scriptstyle\pm1.52}$ \\
\midrule
\multirow{4}{*}{Herding}
 & 0.0001 & $29.94{\scriptstyle\pm2.26}$ & $31.77{\scriptstyle\pm3.92}$ & $49.67{\scriptstyle\pm3.07}$ & $34.65{\scriptstyle\pm0.42}$ \\
 & 0.005  & $39.85{\scriptstyle\pm3.67}$ & $42.73{\scriptstyle\pm5.27}$ & $59.61{\scriptstyle\pm0.91}$ & $46.19{\scriptstyle\pm0.66}$ \\
 & 0.01   & $60.54{\scriptstyle\pm1.45}$ & $61.56{\scriptstyle\pm1.69}$ & $65.62{\scriptstyle\pm1.03}$ & $61.68{\scriptstyle\pm1.16}$ \\
 & 0.03   & $71.62{\scriptstyle\pm0.58}$ & $71.77{\scriptstyle\pm0.89}$ & $69.85{\scriptstyle\pm0.79}$ & $69.40{\scriptstyle\pm1.02}$ \\
\midrule
\multirow{4}{*}{BONSAI}
 & 0.0001 & \cellcolor{bestcol}$64.83{\scriptstyle\pm0.72}$ & \cellcolor{bestcol}$65.35{\scriptstyle\pm1.02}$ & $65.89{\scriptstyle\pm0.73}$ & \cellcolor{bestcol}$62.31{\scriptstyle\pm1.63}$ \\
 & 0.005  & $76.10{\scriptstyle\pm0.44}$ & $75.71{\scriptstyle\pm0.94}$ & $75.02{\scriptstyle\pm0.51}$ & $72.07{\scriptstyle\pm0.78}$ \\
 & 0.01   & $76.10{\scriptstyle\pm0.44}$ & $75.71{\scriptstyle\pm0.94}$ & $75.02{\scriptstyle\pm0.51}$ & $72.07{\scriptstyle\pm0.78}$ \\
 & 0.03   & $76.10{\scriptstyle\pm0.44}$ & $75.71{\scriptstyle\pm0.94}$ & $75.02{\scriptstyle\pm0.51}$ & $72.07{\scriptstyle\pm0.78}$ \\
\midrule
\multirow{4}{*}{GDEM}
& 0.0001 & $18.62{\scriptstyle\pm2.94}$ & $18.50{\scriptstyle\pm2.66}$ & \cellcolor{bestcol}$67.45{\scriptstyle\pm1.12}$ & $59.61{\scriptstyle\pm0.77}$ \\
& 0.005  & $23.99{\scriptstyle\pm2.41}$ & $25.17{\scriptstyle\pm2.07}$ & $59.22{\scriptstyle\pm3.09}$ & $66.16{\scriptstyle\pm0.56}$ \\
& 0.010  & $21.68{\scriptstyle\pm0.78}$ & $21.17{\scriptstyle\pm0.39}$ & $44.14{\scriptstyle\pm4.92}$ & $66.91{\scriptstyle\pm0.60}$ \\
& 0.030  & $21.29{\scriptstyle\pm0.61}$ & $20.48{\scriptstyle\pm1.36}$ & $23.54{\scriptstyle\pm1.84}$ & $74.02{\scriptstyle\pm0.97}$ \\
\midrule
\multirow{4}{*}{\textbf{HERALD}}
 & 0.0001 & $64.41{\scriptstyle\pm0.65}$ & $63.84{\scriptstyle\pm0.77}$ & $64.59{\scriptstyle\pm0.84}$ & $55.62{\scriptstyle\pm1.17}$ \\
 & 0.005  & \cellcolor{bestcol}$76.31{\scriptstyle\pm0.48}$ & \cellcolor{bestcol}$76.70{\scriptstyle\pm0.59}$ & \cellcolor{bestcol}$75.77{\scriptstyle\pm1.20}$ & \cellcolor{bestcol}$74.29{\scriptstyle\pm1.00}$ \\
 & 0.01   & \cellcolor{bestcol}$76.31{\scriptstyle\pm0.48}$ & \cellcolor{bestcol}$76.70{\scriptstyle\pm0.59}$ & \cellcolor{bestcol}$75.77{\scriptstyle\pm1.20}$ & \cellcolor{bestcol}$74.29{\scriptstyle\pm1.00}$ \\
 & 0.03   & \cellcolor{bestcol}$76.31{\scriptstyle\pm0.48}$ & \cellcolor{bestcol}$76.70{\scriptstyle\pm0.59}$ & \cellcolor{bestcol}$75.77{\scriptstyle\pm1.20}$ & \cellcolor{bestcol}$74.29{\scriptstyle\pm1.00}$ \\
\bottomrule
\end{tabular}
\end{table}

\begin{table}[htbp]
\caption{Node classification accuracy (\%) on \textbf{PubMed}.}
\label{tab:pubmed}
\centering
\setlength{\tabcolsep}{4pt}
\begin{tabular}{@{}llcccc@{}}
\toprule
\textbf{Condenser} & \textbf{$r$}
  & \textbf{GCN} & \textbf{GAT} & \textbf{GIN} & \textbf{H2GCN} \\
\midrule
Full graph & --- &
$85.74{\scriptstyle\pm0.05}$ &
$85.14{\scriptstyle\pm0.37}$ &
$84.71{\scriptstyle\pm0.16}$ &
$86.87{\scriptstyle\pm0.16}$ \\
\midrule
\multirow{4}{*}{Random}
 & 0.0001 & $40.14{\scriptstyle\pm0.94}$ & $41.77{\scriptstyle\pm1.48}$ & $41.44{\scriptstyle\pm2.97}$ & $39.34{\scriptstyle\pm0.61}$ \\
 & 0.005  & $59.63{\scriptstyle\pm0.10}$ & $57.71{\scriptstyle\pm0.87}$ & $60.44{\scriptstyle\pm0.28}$ & $58.17{\scriptstyle\pm0.60}$ \\
 & 0.01   & $73.07{\scriptstyle\pm0.26}$ & $64.91{\scriptstyle\pm3.04}$ & $68.67{\scriptstyle\pm0.25}$ & $66.74{\scriptstyle\pm0.38}$ \\
 & 0.03   & $79.60{\scriptstyle\pm0.06}$ & $77.02{\scriptstyle\pm0.89}$ & $79.49{\scriptstyle\pm0.05}$ & $73.99{\scriptstyle\pm0.14}$ \\
\midrule
\multirow{4}{*}{Herding}
 & 0.0001 & $55.20{\scriptstyle\pm4.12}$ & $52.32{\scriptstyle\pm3.77}$ & $64.05{\scriptstyle\pm1.46}$ & \cellcolor{bestcol}$56.60{\scriptstyle\pm1.85}$ \\
 & 0.005  & $81.47{\scriptstyle\pm0.09}$ & $78.92{\scriptstyle\pm0.85}$ & \cellcolor{bestcol}$81.88{\scriptstyle\pm0.20}$ & $76.70{\scriptstyle\pm0.39}$ \\
 & 0.01   & $83.61{\scriptstyle\pm0.11}$ & $81.60{\scriptstyle\pm0.44}$ & \cellcolor{bestcol}$82.08{\scriptstyle\pm0.08}$ & $79.10{\scriptstyle\pm0.36}$ \\
 & 0.03   & $84.38{\scriptstyle\pm0.13}$ & $83.50{\scriptstyle\pm0.45}$ & $83.69{\scriptstyle\pm0.15}$ & $81.15{\scriptstyle\pm0.29}$ \\
\midrule
\multirow{4}{*}{BONSAI}
 & 0.0001 & $43.34{\scriptstyle\pm1.28}$ & $46.43{\scriptstyle\pm4.16}$ & $53.70{\scriptstyle\pm1.11}$ & $50.65{\scriptstyle\pm0.81}$ \\
 & 0.005  & \cellcolor{bestcol}$83.95{\scriptstyle\pm0.06}$ & \cellcolor{bestcol}$82.35{\scriptstyle\pm0.41}$ & $81.86{\scriptstyle\pm0.10}$ & \cellcolor{bestcol}$79.55{\scriptstyle\pm0.26}$ \\
 & 0.01   & \cellcolor{bestcol}$84.16{\scriptstyle\pm0.09}$ & \cellcolor{bestcol}$82.77{\scriptstyle\pm0.27}$ & $81.47{\scriptstyle\pm0.23}$ & $80.36{\scriptstyle\pm0.10}$ \\
 & 0.03   & $84.68{\scriptstyle\pm0.05}$ & $83.72{\scriptstyle\pm0.18}$ & $84.28{\scriptstyle\pm0.10}$ & $81.99{\scriptstyle\pm0.20}$ \\
\midrule
\multirow{4}{*}{GDEM}
& 0.0001 & $41.78{\scriptstyle\pm1.40}$ & $41.01{\scriptstyle\pm4.48}$ & $28.34{\scriptstyle\pm3.69}$ & $51.51{\scriptstyle\pm0.52}$ \\
& 0.005  & $42.75{\scriptstyle\pm2.72}$ & $42.42{\scriptstyle\pm2.43}$ & $43.10{\scriptstyle\pm7.43}$ & $76.76{\scriptstyle\pm0.07}$ \\
& 0.010  & $42.72{\scriptstyle\pm3.09}$ & $44.46{\scriptstyle\pm3.69}$ & $41.02{\scriptstyle\pm7.03}$ & $74.58{\scriptstyle\pm0.27}$ \\
& 0.030  & $42.87{\scriptstyle\pm2.76}$ & $45.44{\scriptstyle\pm5.32}$ & $40.42{\scriptstyle\pm8.05}$ & $74.99{\scriptstyle\pm0.16}$ \\
\midrule
\multirow{4}{*}{\textbf{HERALD}}
 & 0.0001 & \cellcolor{bestcol}$61.35{\scriptstyle\pm0.57}$ & \cellcolor{bestcol}$63.00{\scriptstyle\pm0.87}$ & \cellcolor{bestcol}$71.95{\scriptstyle\pm0.41}$ & $29.12{\scriptstyle\pm8.52}$ \\
 & 0.005  & $79.44{\scriptstyle\pm0.13}$ & $78.74{\scriptstyle\pm0.31}$ & $78.05{\scriptstyle\pm0.04}$ & $72.57{\scriptstyle\pm0.66}$ \\
 & 0.01   & $81.52{\scriptstyle\pm0.19}$ & $80.91{\scriptstyle\pm0.24}$ & $80.29{\scriptstyle\pm0.32}$ & \cellcolor{bestcol}$80.40{\scriptstyle\pm0.27}$ \\
 & 0.03   & \cellcolor{bestcol}$85.75{\scriptstyle\pm0.07}$ & \cellcolor{bestcol}$85.04{\scriptstyle\pm0.21}$ & \cellcolor{bestcol}$85.30{\scriptstyle\pm0.13}$ & \cellcolor{bestcol}$87.00{\scriptstyle\pm0.16}$ \\
\bottomrule
\end{tabular}
\end{table}

\subsection{Heterophilic Datasets}
\label{sec:results_hetero}

Tables~\ref{tab:roman}--\ref{tab:squirrel} report results on the four
heterophilic benchmarks.

\begin{table}[htbp]
\caption{Node classification accuracy (\%) on \textbf{Roman-empire}.}
\label{tab:roman}
\centering
\setlength{\tabcolsep}{4pt}
\begin{tabular}{@{}llcccc@{}}
\toprule
\textbf{Condenser} & \textbf{$r$}
  & \textbf{GCN} & \textbf{GAT} & \textbf{GIN} & \textbf{H2GCN} \\
\midrule
Full graph
& ---
& $41.82{\scriptstyle\pm0.37}$
& $44.13{\scriptstyle\pm0.81}$
& $38.53{\scriptstyle\pm0.52}$
& $76.57{\scriptstyle\pm0.48}$ \\
\midrule

\multirow{4}{*}{Random}
& 0.0001 & $8.26{\scriptstyle\pm0.46}$ & $7.01{\scriptstyle\pm1.94}$ & $7.79{\scriptstyle\pm0.79}$ & $13.02{\scriptstyle\pm0.47}$ \\
& 0.005  & $24.05{\scriptstyle\pm0.24}$ & $21.11{\scriptstyle\pm3.16}$ & $22.65{\scriptstyle\pm0.62}$ & $44.42{\scriptstyle\pm0.47}$ \\
& 0.010  & $26.32{\scriptstyle\pm0.40}$ & $23.22{\scriptstyle\pm1.68}$ & $24.30{\scriptstyle\pm0.54}$ & $49.67{\scriptstyle\pm0.27}$ \\
& 0.030  & $28.56{\scriptstyle\pm0.44}$ & $26.97{\scriptstyle\pm2.25}$ & $25.37{\scriptstyle\pm0.49}$ & $54.82{\scriptstyle\pm0.62}$ \\
\midrule

\multirow{4}{*}{Herding}
& 0.0001 & $19.97{\scriptstyle\pm0.40}$ & $17.67{\scriptstyle\pm0.98}$ & $19.08{\scriptstyle\pm0.47}$ & $31.95{\scriptstyle\pm0.77}$ \\
& 0.005  & $26.82{\scriptstyle\pm0.26}$ & $24.13{\scriptstyle\pm0.92}$ & $23.49{\scriptstyle\pm0.41}$ & $49.40{\scriptstyle\pm0.60}$ \\
& 0.010  & $29.39{\scriptstyle\pm0.45}$ & $27.84{\scriptstyle\pm1.15}$ & $25.44{\scriptstyle\pm0.16}$ & $55.54{\scriptstyle\pm0.62}$ \\
& 0.030  & $33.59{\scriptstyle\pm0.56}$ & $34.23{\scriptstyle\pm0.90}$ & $28.98{\scriptstyle\pm0.36}$ & $62.81{\scriptstyle\pm0.22}$ \\
\midrule

\multirow{4}{*}{BONSAI}
& 0.0001 & $39.74{\scriptstyle\pm0.18}$ & $41.75{\scriptstyle\pm0.65}$ & $33.52{\scriptstyle\pm0.43}$ & $72.12{\scriptstyle\pm0.18}$ \\
& 0.005  & $41.36{\scriptstyle\pm0.22}$ & $43.74{\scriptstyle\pm0.85}$ & $35.04{\scriptstyle\pm0.57}$ & $72.12{\scriptstyle\pm0.21}$ \\
& 0.010  & $41.57{\scriptstyle\pm0.20}$ & \cellcolor{bestcol}$44.40{\scriptstyle\pm0.70}$ & $35.22{\scriptstyle\pm0.46}$ & $72.51{\scriptstyle\pm0.39}$ \\
& 0.030  & $41.57{\scriptstyle\pm0.20}$ & \cellcolor{bestcol}$44.40{\scriptstyle\pm0.70}$ & $35.22{\scriptstyle\pm0.46}$ & $72.51{\scriptstyle\pm0.39}$ \\
\midrule
\multirow{4}{*}{GDEM}
& 0.0001 & $9.46{\scriptstyle\pm0.81}$ & $9.17{\scriptstyle\pm2.08}$ & $17.38{\scriptstyle\pm0.71}$ & $38.38{\scriptstyle\pm0.81}$ \\
& 0.005  & $15.28{\scriptstyle\pm0.48}$ & $12.47{\scriptstyle\pm1.79}$ & $5.95{\scriptstyle\pm1.49}$ & $46.67{\scriptstyle\pm1.36}$ \\
& 0.010  & $14.15{\scriptstyle\pm0.66}$ & $11.85{\scriptstyle\pm1.64}$ & $5.71{\scriptstyle\pm1.64}$ & $46.85{\scriptstyle\pm1.28}$ \\
& 0.030  & $14.26{\scriptstyle\pm0.61}$ & $12.73{\scriptstyle\pm1.65}$ & $5.84{\scriptstyle\pm1.47}$ & $46.90{\scriptstyle\pm1.28}$ \\
\midrule
\multirow{4}{*}{\textbf{HERALD}}
& 0.0001 & \cellcolor{bestcol}$41.85{\scriptstyle\pm0.15}$ & \cellcolor{bestcol}$43.62{\scriptstyle\pm0.56}$ & \cellcolor{bestcol}$38.29{\scriptstyle\pm0.45}$ & \cellcolor{bestcol}$76.00{\scriptstyle\pm0.35}$ \\
& 0.005  & \cellcolor{bestcol}$41.73{\scriptstyle\pm0.37}$ & \cellcolor{bestcol}$44.08{\scriptstyle\pm0.71}$ & \cellcolor{bestcol}$38.55{\scriptstyle\pm0.50}$ & \cellcolor{bestcol}$76.65{\scriptstyle\pm0.42}$ \\
& 0.010  & \cellcolor{bestcol}$41.73{\scriptstyle\pm0.37}$ & $44.08{\scriptstyle\pm0.71}$ & \cellcolor{bestcol}$38.55{\scriptstyle\pm0.50}$ & \cellcolor{bestcol}$76.65{\scriptstyle\pm0.42}$ \\
& 0.030  & \cellcolor{bestcol}$41.73{\scriptstyle\pm0.37}$ & $44.08{\scriptstyle\pm0.71}$ & \cellcolor{bestcol}$38.55{\scriptstyle\pm0.50}$ & \cellcolor{bestcol}$76.65{\scriptstyle\pm0.42}$ \\
\bottomrule
\end{tabular}
\end{table}

\begin{table}[htbp]
\caption{Node classification accuracy (\%) on \textbf{Amazon-ratings} ($h=0.620$).}
\label{tab:amazon_ratings}
\centering
\setlength{\tabcolsep}{4pt}
\begin{tabular}{@{}llcccc@{}}
\toprule
\textbf{Condenser} & \textbf{$r$}
  & \textbf{GCN} & \textbf{GAT} & \textbf{GIN} & \textbf{H2GCN} \\
\midrule
Full graph & --- &
$46.86{\scriptstyle\pm0.24}$ &
$45.45{\scriptstyle\pm0.39}$ &
$47.22{\scriptstyle\pm0.42}$ &
$50.98{\scriptstyle\pm0.19}$ \\
\midrule
\multirow{4}{*}{Random}
 & 0.0001 & $28.39{\scriptstyle\pm3.06}$ & $30.73{\scriptstyle\pm3.08}$ & $32.06{\scriptstyle\pm2.55}$ & $36.65{\scriptstyle\pm0.03}$ \\
 & 0.005  & $30.45{\scriptstyle\pm1.77}$ & $31.04{\scriptstyle\pm1.09}$ & $28.69{\scriptstyle\pm2.81}$ & $35.45{\scriptstyle\pm0.48}$ \\
 & 0.01   & $32.82{\scriptstyle\pm1.04}$ & $32.43{\scriptstyle\pm1.55}$ & $29.58{\scriptstyle\pm1.46}$ & $35.57{\scriptstyle\pm0.48}$ \\
 & 0.03   & $35.28{\scriptstyle\pm0.93}$ & $34.59{\scriptstyle\pm1.04}$ & $31.92{\scriptstyle\pm1.31}$ & $35.26{\scriptstyle\pm0.48}$ \\
\midrule
\multirow{4}{*}{Herding}
 & 0.0001 & $26.64{\scriptstyle\pm2.87}$ & $27.03{\scriptstyle\pm2.43}$ & $26.60{\scriptstyle\pm4.08}$ & $34.07{\scriptstyle\pm0.79}$ \\
 & 0.005  & $31.87{\scriptstyle\pm0.36}$ & $31.42{\scriptstyle\pm0.63}$ & $27.96{\scriptstyle\pm1.54}$ & $31.97{\scriptstyle\pm1.18}$ \\
 & 0.01   & $34.19{\scriptstyle\pm0.97}$ & $33.08{\scriptstyle\pm0.81}$ & $28.46{\scriptstyle\pm1.34}$ & $32.67{\scriptstyle\pm0.75}$ \\
 & 0.03   & $38.52{\scriptstyle\pm0.71}$ & $36.66{\scriptstyle\pm0.64}$ & $33.01{\scriptstyle\pm0.47}$ & $37.30{\scriptstyle\pm0.75}$ \\
\midrule
\multirow{4}{*}{BONSAI}
 & 0.0001 & \cellcolor{bestcol}$42.85{\scriptstyle\pm0.41}$ & \cellcolor{bestcol}$40.56{\scriptstyle\pm0.39}$ & \cellcolor{bestcol}$39.13{\scriptstyle\pm0.67}$ & \cellcolor{bestcol}$43.58{\scriptstyle\pm0.20}$ \\
 & 0.005  & $45.19{\scriptstyle\pm0.31}$ & $43.92{\scriptstyle\pm0.50}$ & $42.94{\scriptstyle\pm0.37}$ & $48.12{\scriptstyle\pm0.57}$ \\
 & 0.01   & $46.24{\scriptstyle\pm0.31}$ & $45.23{\scriptstyle\pm0.15}$ & $44.43{\scriptstyle\pm0.44}$ & $49.43{\scriptstyle\pm0.40}$ \\
 & 0.03   & \cellcolor{bestcol}$46.87{\scriptstyle\pm0.36}$ & $45.24{\scriptstyle\pm0.43}$ & $46.16{\scriptstyle\pm0.61}$ & $49.22{\scriptstyle\pm0.71}$ \\
\midrule
\multirow{4}{*}{GDEM}
& 0.0001 & $20.68{\scriptstyle\pm4.38}$ & $22.08{\scriptstyle\pm3.50}$ & $21.96{\scriptstyle\pm4.85}$ & $26.25{\scriptstyle\pm0.39}$ \\
& 0.005  & $34.48{\scriptstyle\pm0.96}$ & $36.03{\scriptstyle\pm0.36}$ & $23.96{\scriptstyle\pm3.10}$ & $36.39{\scriptstyle\pm0.27}$ \\
& 0.010  & $35.19{\scriptstyle\pm0.65}$ & $36.60{\scriptstyle\pm0.20}$ & $21.91{\scriptstyle\pm3.61}$ & $36.46{\scriptstyle\pm0.16}$ \\
& 0.030  & $35.54{\scriptstyle\pm0.46}$ & $36.69{\scriptstyle\pm0.71}$ & $22.29{\scriptstyle\pm3.96}$ & $36.51{\scriptstyle\pm0.15}$ \\
\midrule
\multirow{4}{*}{\textbf{HERALD}}
 & 0.0001 & $33.26{\scriptstyle\pm0.35}$ & $34.77{\scriptstyle\pm0.63}$ & $31.55{\scriptstyle\pm0.64}$ & $34.89{\scriptstyle\pm0.50}$ \\
 & 0.005  & \cellcolor{bestcol}$46.15{\scriptstyle\pm0.14}$ & \cellcolor{bestcol}$44.82{\scriptstyle\pm0.32}$ & \cellcolor{bestcol}$44.17{\scriptstyle\pm0.41}$ & \cellcolor{bestcol}$50.35{\scriptstyle\pm0.43}$ \\
 & 0.01   & \cellcolor{bestcol}$46.91{\scriptstyle\pm0.30}$ & \cellcolor{bestcol}$45.37{\scriptstyle\pm0.52}$ & \cellcolor{bestcol}$47.15{\scriptstyle\pm0.39}$ & \cellcolor{bestcol}$50.72{\scriptstyle\pm0.24}$ \\
 & 0.03   & $46.86{\scriptstyle\pm0.24}$ & \cellcolor{bestcol}$45.45{\scriptstyle\pm0.39}$ & \cellcolor{bestcol}$47.22{\scriptstyle\pm0.42}$ & \cellcolor{bestcol}$50.98{\scriptstyle\pm0.19}$ \\
\bottomrule
\end{tabular}
\end{table}

\begin{table}[htbp]
\caption{Node classification accuracy (\%) on \textbf{Chameleon} ($h=0.770$).}
\label{tab:chameleon}
\centering
\setlength{\tabcolsep}{4pt}
\begin{tabular}{@{}llcccc@{}}
\toprule
\textbf{Condenser} & \textbf{$r$}
  & \textbf{GCN} & \textbf{GAT} & \textbf{GIN} & \textbf{H2GCN} \\
\midrule
Full graph & --- &
$35.92{\scriptstyle\pm1.56}$ &
$43.03{\scriptstyle\pm1.02}$ &
$32.28{\scriptstyle\pm2.03}$ &
$52.24{\scriptstyle\pm0.73}$ \\
\midrule
\multirow{4}{*}{Random}
 & 0.0001 & $16.80{\scriptstyle\pm4.33}$ & $19.30{\scriptstyle\pm4.69}$ & $18.64{\scriptstyle\pm5.11}$ & $23.68{\scriptstyle\pm3.58}$ \\
 & 0.005  & $14.34{\scriptstyle\pm2.82}$ & $17.76{\scriptstyle\pm4.47}$ & $14.04{\scriptstyle\pm2.46}$ & $18.60{\scriptstyle\pm2.40}$ \\
 & 0.01   & $27.41{\scriptstyle\pm2.24}$ & $28.03{\scriptstyle\pm3.11}$ & $30.18{\scriptstyle\pm2.52}$ & $35.75{\scriptstyle\pm1.89}$ \\
 & 0.03   & $31.67{\scriptstyle\pm1.36}$ & $30.35{\scriptstyle\pm1.25}$ & $31.58{\scriptstyle\pm1.93}$ & $33.33{\scriptstyle\pm2.16}$ \\
\midrule
\multirow{4}{*}{Herding}
 & 0.0001 & $34.82{\scriptstyle\pm0.87}$ & $29.39{\scriptstyle\pm2.91}$ & $27.89{\scriptstyle\pm2.73}$ & $33.82{\scriptstyle\pm1.17}$ \\
 & 0.005  & $37.81{\scriptstyle\pm0.96}$ & $26.97{\scriptstyle\pm7.86}$ & $25.66{\scriptstyle\pm3.44}$ & $39.08{\scriptstyle\pm2.08}$ \\
 & 0.01   & $36.36{\scriptstyle\pm0.66}$ & $31.71{\scriptstyle\pm1.12}$ & $30.09{\scriptstyle\pm0.78}$ & $38.73{\scriptstyle\pm1.14}$ \\
 & 0.03   & $35.48{\scriptstyle\pm0.61}$ & $32.24{\scriptstyle\pm2.57}$ & \cellcolor{bestcol}$32.28{\scriptstyle\pm2.01}$ & $40.44{\scriptstyle\pm0.82}$ \\
\midrule
\multirow{4}{*}{BONSAI}
 & 0.0001 & $30.75{\scriptstyle\pm1.10}$ & $27.72{\scriptstyle\pm1.48}$ & $24.56{\scriptstyle\pm2.45}$ & $36.01{\scriptstyle\pm1.70}$ \\
 & 0.005  & $32.15{\scriptstyle\pm0.86}$ & $29.12{\scriptstyle\pm1.26}$ & $26.01{\scriptstyle\pm0.91}$ & $37.41{\scriptstyle\pm0.95}$ \\
 & 0.01   & $32.15{\scriptstyle\pm0.86}$ & $29.12{\scriptstyle\pm1.26}$ & $26.01{\scriptstyle\pm0.91}$ & $37.41{\scriptstyle\pm0.95}$ \\
 & 0.03   & $32.15{\scriptstyle\pm0.86}$ & $29.12{\scriptstyle\pm1.26}$ & $26.01{\scriptstyle\pm0.91}$ & $37.41{\scriptstyle\pm0.95}$ \\
\midrule
\multirow{4}{*}{GDEM}
& 0.0001 & $23.16{\scriptstyle\pm0.98}$ & $23.60{\scriptstyle\pm4.42}$ & \cellcolor{bestcol}$38.90{\scriptstyle\pm0.51}$ & $39.17{\scriptstyle\pm0.69}$ \\
& 0.005  & $23.73{\scriptstyle\pm5.52}$ & $22.15{\scriptstyle\pm4.80}$ & \cellcolor{bestcol}$35.57{\scriptstyle\pm2.19}$ & $38.16{\scriptstyle\pm0.96}$ \\
& 0.010  & $27.02{\scriptstyle\pm1.73}$ & $22.63{\scriptstyle\pm2.94}$ & \cellcolor{bestcol}$31.84{\scriptstyle\pm2.80}$ & $38.46{\scriptstyle\pm1.28}$ \\
& 0.030  & $25.18{\scriptstyle\pm3.32}$ & $23.03{\scriptstyle\pm4.57}$ & $22.06{\scriptstyle\pm2.51}$ & $39.78{\scriptstyle\pm1.96}$ \\
\midrule
\multirow{4}{*}{\textbf{HERALD}}
 & 0.0001 & \cellcolor{bestcol}$36.23{\scriptstyle\pm1.56}$ & \cellcolor{bestcol}$32.19{\scriptstyle\pm1.40}$ & $26.54{\scriptstyle\pm4.07}$ & \cellcolor{bestcol}$41.97{\scriptstyle\pm1.23}$ \\
 & 0.005  & \cellcolor{bestcol}$38.90{\scriptstyle\pm1.71}$ & \cellcolor{bestcol}$32.85{\scriptstyle\pm1.35}$ & $26.75{\scriptstyle\pm4.07}$ & \cellcolor{bestcol}$41.58{\scriptstyle\pm1.13}$ \\
 & 0.01   & \cellcolor{bestcol}$38.90{\scriptstyle\pm1.71}$ & \cellcolor{bestcol}$32.85{\scriptstyle\pm1.35}$ & $26.75{\scriptstyle\pm4.07}$ & \cellcolor{bestcol}$41.58{\scriptstyle\pm1.13}$ \\
 & 0.03   & \cellcolor{bestcol}$38.90{\scriptstyle\pm1.71}$ & \cellcolor{bestcol}$32.85{\scriptstyle\pm1.35}$ & $26.75{\scriptstyle\pm4.07}$ & \cellcolor{bestcol}$41.58{\scriptstyle\pm1.13}$ \\
\bottomrule
\end{tabular}
\end{table}

\begin{table}[htbp]
\caption{Node classification accuracy (\%) on \textbf{Squirrel} ($h=0.776$).}
\label{tab:squirrel}
\centering
\setlength{\tabcolsep}{4pt}
\begin{tabular}{@{}llcccc@{}}
\toprule
\textbf{Condenser} & \textbf{$r$}
  & \textbf{GCN} & \textbf{GAT} & \textbf{GIN} & \textbf{H2GCN} \\
\midrule
Full graph
& ---
& $24.38{\scriptstyle\pm1.07}$
& $28.72{\scriptstyle\pm1.16}$
& $24.23{\scriptstyle\pm2.65}$
& $38.77{\scriptstyle\pm1.24}$ \\
\midrule

\multirow{4}{*}{Random}
& 0.0001 & $22.82{\scriptstyle\pm1.15}$ & $21.46{\scriptstyle\pm1.77}$ & $24.17{\scriptstyle\pm0.80}$ & $23.63{\scriptstyle\pm1.83}$ \\
& 0.005  & $23.59{\scriptstyle\pm1.12}$ & $23.57{\scriptstyle\pm1.44}$ & $22.86{\scriptstyle\pm1.50}$ & $27.38{\scriptstyle\pm1.16}$ \\
& 0.010  & $20.86{\scriptstyle\pm1.03}$ & $21.83{\scriptstyle\pm1.60}$ & $22.61{\scriptstyle\pm2.57}$ & $26.22{\scriptstyle\pm1.02}$ \\
& 0.030  & $22.17{\scriptstyle\pm0.36}$ & $23.50{\scriptstyle\pm1.48}$ & $24.25{\scriptstyle\pm2.46}$ & $28.26{\scriptstyle\pm1.09}$ \\
\midrule

\multirow{4}{*}{Herding}
& 0.0001 & $25.03{\scriptstyle\pm1.01}$ & $22.19{\scriptstyle\pm1.77}$ & $25.19{\scriptstyle\pm0.83}$ & $29.49{\scriptstyle\pm0.85}$ \\
& 0.005  & $26.65{\scriptstyle\pm0.68}$ & $23.11{\scriptstyle\pm2.46}$ & \cellcolor{bestcol}$23.78{\scriptstyle\pm2.22}$ & $31.43{\scriptstyle\pm1.00}$ \\
& 0.010  & \cellcolor{bestcol}$28.34{\scriptstyle\pm0.58}$ & $24.40{\scriptstyle\pm2.25}$ & \cellcolor{bestcol}$23.27{\scriptstyle\pm2.71}$ & $31.93{\scriptstyle\pm0.71}$ \\
& 0.030  & \cellcolor{bestcol}$26.95{\scriptstyle\pm0.77}$ & $24.48{\scriptstyle\pm2.14}$ & \cellcolor{bestcol}$25.13{\scriptstyle\pm2.04}$ & $33.22{\scriptstyle\pm0.77}$ \\
\midrule

\multirow{4}{*}{BONSAI}
& 0.0001 & \cellcolor{bestcol}$27.20{\scriptstyle\pm0.95}$ & \cellcolor{bestcol}$25.36{\scriptstyle\pm0.95}$ & $20.37{\scriptstyle\pm0.79}$ & $33.08{\scriptstyle\pm0.74}$ \\
& 0.005  & $25.99{\scriptstyle\pm0.92}$ & $24.76{\scriptstyle\pm1.24}$ & $21.13{\scriptstyle\pm2.21}$ & $33.99{\scriptstyle\pm0.75}$ \\
& 0.010  & $25.21{\scriptstyle\pm1.64}$ & $24.88{\scriptstyle\pm1.28}$ & $20.02{\scriptstyle\pm1.88}$ & $35.25{\scriptstyle\pm0.29}$ \\
& 0.030  & $25.49{\scriptstyle\pm1.62}$ & $24.69{\scriptstyle\pm0.60}$ & $20.44{\scriptstyle\pm1.59}$ & $35.35{\scriptstyle\pm0.67}$ \\
\midrule

\multirow{4}{*}{GDEM}
& 0.0001 & $22.15{\scriptstyle\pm2.15}$ & $20.90{\scriptstyle\pm0.98}$ & \cellcolor{bestcol}$27.78{\scriptstyle\pm0.69}$ & $32.97{\scriptstyle\pm0.74}$ \\
& 0.005  & $23.02{\scriptstyle\pm1.00}$ & $21.44{\scriptstyle\pm2.85}$ & $23.67{\scriptstyle\pm2.13}$ & $32.53{\scriptstyle\pm0.58}$ \\
& 0.010  & $23.17{\scriptstyle\pm1.35}$ & $21.42{\scriptstyle\pm2.69}$ & $20.19{\scriptstyle\pm4.04}$ & $32.93{\scriptstyle\pm0.41}$ \\
& 0.030  & $21.86{\scriptstyle\pm0.72}$ & $23.84{\scriptstyle\pm1.65}$ & $19.83{\scriptstyle\pm2.39}$ & $32.56{\scriptstyle\pm0.29}$ \\
\midrule

\multirow{4}{*}{\textbf{HERALD}}
& 0.0001 & $25.13{\scriptstyle\pm0.39}$ & $25.03{\scriptstyle\pm1.16}$ & $22.46{\scriptstyle\pm0.78}$ & \cellcolor{bestcol}$35.20{\scriptstyle\pm1.03}$ \\
& 0.005  & \cellcolor{bestcol}$26.82{\scriptstyle\pm1.11}$ & \cellcolor{bestcol}$24.90{\scriptstyle\pm1.48}$ & $23.07{\scriptstyle\pm1.07}$ & \cellcolor{bestcol}$35.93{\scriptstyle\pm0.79}$ \\
& 0.010  & $26.88{\scriptstyle\pm0.99}$ & \cellcolor{bestcol}$25.19{\scriptstyle\pm0.28}$ & $22.86{\scriptstyle\pm0.77}$ & \cellcolor{bestcol}$36.22{\scriptstyle\pm1.07}$ \\
& 0.030  & $26.88{\scriptstyle\pm0.99}$ & \cellcolor{bestcol}$25.19{\scriptstyle\pm0.28}$ & $22.86{\scriptstyle\pm0.77}$ & \cellcolor{bestcol}$36.22{\scriptstyle\pm1.07}$ \\
\bottomrule
\end{tabular}
\end{table}

\end{appendices}

\end{document}